\documentclass{article}

\usepackage{microtype}
\usepackage{graphicx}
\usepackage{subcaption}
\usepackage{booktabs} 

\usepackage{hyperref}

\usepackage[accepted]{icml2026}

\usepackage{amsmath,amssymb,amsfonts}
\usepackage{mathtools}
\usepackage{amsthm}
\usepackage{pifont} 
\usepackage{booktabs}      
\usepackage{multirow}      
\usepackage{makecell}      
\usepackage{arydshln}      
\usepackage{colortbl}      
\usepackage{xcolor}        
\usepackage{graphicx}      
\usepackage{algorithm}
\usepackage{algorithmic}

\usepackage[capitalize,noabbrev]{cleveref}

\theoremstyle{plain}
\newtheorem{theorem}{Theorem}[section]
\newtheorem{proposition}[theorem]{Proposition}
\newtheorem{lemma}[theorem]{Lemma}

\theoremstyle{definition}

\theoremstyle{remark}

\usepackage[textsize=tiny]{todonotes}

\newcommand{\tablestyle}[2]{\setlength{\tabcolsep}{#1}\renewcommand{\arraystretch}{#2}\centering\footnotesize}

\newcommand{\AlgNote}[1]{%
  \STATE \textcolor{gray}{\footnotesize #1}%
}

\usepackage{amsmath,amsfonts,bm}

\def\eqref#1{equation~\ref{#1}}

\def\1{\bm{1}}

\DeclareMathAlphabet{\mathsfit}{\encodingdefault}{\sfdefault}{m}{sl}
\SetMathAlphabet{\mathsfit}{bold}{\encodingdefault}{\sfdefault}{bx}{n}

\icmltitlerunning{TWLA: Achieving \underline{T}ernary \underline{W}eights and \underline{L}ow-Bit \underline{A}ctivations for LLMs via Post-Training Quantization}

\begin{document}

\twocolumn[
  \icmltitle{TWLA: Achieving \underline{T}ernary \underline{W}eights and \underline{L}ow-Bit \underline{A}ctivations for LLMs via Post-Training Quantization}

  \icmlsetsymbol{equal}{*}
  \icmlsetsymbol{intern}{\ensuremath{\dagger}}
  \icmlsetsymbol{corr}{\ensuremath{\ddagger}}

  \begin{icmlauthorlist}
    \icmlauthor{Zhixiong Zhao}{houmo,equal,intern}
    \icmlauthor{Zukang Xu}{houmo,equal}
    \icmlauthor{Zhixuan Chen}{houmo}
    \icmlauthor{Xing Hu}{houmo}
    \icmlauthor{Zhe Jiang}{seu}
    \icmlauthor{Dawei Yang}{houmo,corr}
  \end{icmlauthorlist}

  \icmlaffiliation{houmo}{Houmo AI, China}
  \icmlaffiliation{seu}{Southeast University, Nanjing, China}

  \icmlcorrespondingauthor{Dawei Yang}{dawei.yang@houmo.ai}

  \icmlkeywords{Machine Learning, Large Language Models, Post-Training Quantization}

  \vskip 0.3in
]

\printAffiliationsAndNotice{
  \icmlEqualContribution
  \textsuperscript{\ensuremath{\dagger}}This work was conducted during his internship at Houmo AI.
  \textsuperscript{\ensuremath{\ddagger}}Corresponding author.
}

\begin{abstract}
Large language models (LLMs) exhibit exceptional general language processing capabilities, but their memory and compute costs hinder deployment. Ternarization has emerged as a promising compression technique, offering significant reductions in model size and inference complexity. However, existing methods struggle with heavy-tailed activation distributions and therefore keep activations in high precision, fundamentally limiting end-to-end inference acceleration. 
To overcome this limitation, we propose \textbf{TWLA}, a post-training quantization (PTQ) framework that achieves 1.58-bit weight compression and 4-bit activation quantization while maintaining high accuracy. TWLA comprises three components: (1) Euclidean-to-Manifold Asymmetric Ternary Quantizer (E2M-ATQ) minimizes layer-output error under weight ternarization via a two-stage optimization from Euclidean initialization to manifold relocation; (2) Kronecker Orthogonal Tri-Modal Shaping (KOTMS) applies a Kronecker-structured orthogonal rotation to reshape weights into ternary-friendly tri-modal distributions, while the shared rotation statistically suppresses activation outliers; and (3) Inter-Layer Aware Activation Mixed Precision (ILA-AMP) explicitly introduces adjacent-layer second-order interaction costs in bit allocation and jointly optimizes for the layer-wise disparity of activation quantization gains induced by the shared orthogonal transform, preventing cascades triggered by a few weak layers.
Extensive experiments demonstrate that TWLA maintains high accuracy under \textbf{W1.58A4}, while delivering significant inference acceleration. The code is available at \href{https://github.com/Kishon-zzx/TWLA}{TWLA}.

\end{abstract}
\section{Introduction}
\label{sec:intro}

\begin{figure}
    \centering
    \includegraphics[width=0.92\linewidth]{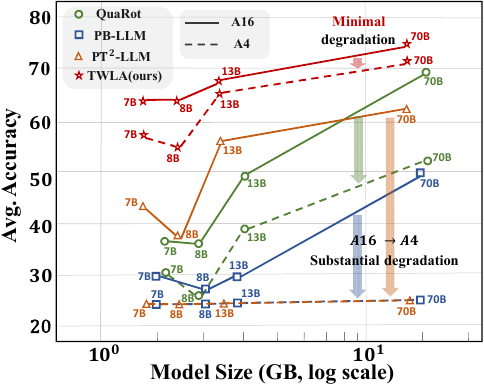}
    \caption{LLaMA-family performance on seven zero-shot tasks. TWLA remains robust under both weight-only and weight–activation quantization (at equal memory cost), while other methods degrade substantially with 4-bit activation quantization.}
    \label{fig:intro}
    \vspace{-0.2cm}
\end{figure}

In recent years, Large Language Models (LLMs) have achieved outstanding results across a wide range of diverse domains~\citep{zhu2025pathology}. However, this strong capability largely comes from scaling up model size. Many state-of-the-art models contain billions, or even hundreds of billions, of parameters. This scale creates heavy demands on memory and computation during inference. For example, DeepSeek-R1-671B~\citep{Guo_2025} has hundreds of billions of parameters. With FP16 inference, storing the weights alone can require no less than 1 TB of memory. Such a high resource requirement makes deployment on edge devices and other resource-limited platforms difficult~\citep{zhao2025quark}.  Therefore, reducing inference cost while preserving model quality has become a central challenge for practical and sustainable LLM deployment~\citep{li2024towards,li2025adaptive}.

\begin{figure*}
    \centering
    \includegraphics[width=0.94\linewidth]{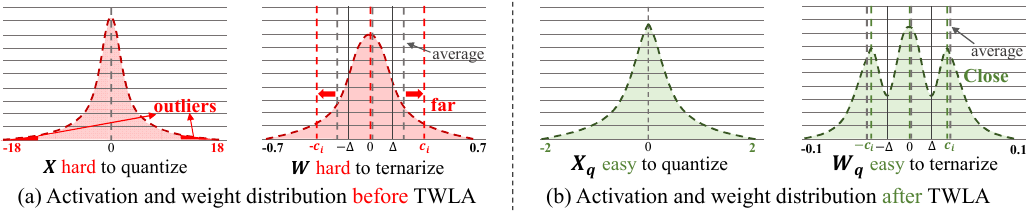}
    \caption{(a) Before applying TWLA, activation outliers hinder low-bit quantization and unimodal weights are misaligned with ternary codebooks. (b) After applying TWLA, activations are smoothed and weights are transformed into
symmetric tri-modal forms.}
    \label{fig2:vision}
\end{figure*}

Quantization has become a core technique for model compression~\citep{zhao2026specquant,xu2026kbvq,gul2026flrqfasterllmquantization} and ternarization is a representative option. It constrains weights to $\{-1, 0, +1\}$, which enables high compression and reduces compute complexity. 
Similar to binarization~\citep{zhao2026bwla,liu2025bimacosrbinaryonestepdiffusion}, ternarization can replace most floating-point multiplications with additions and simple branching to lower inference costs, while offering stronger representational power and a superior accuracy-efficiency trade-off.
Recent studies, such as TernaryLLM~\citep{chen2024ternaryllmternarizedlargelanguage} and PT$^2$-LLM~\citep{yan2025pt2llmposttrainingternarizationlarge}, mitigate the accuracy loss through distillation or iterative optimization. However, existing ternarization methods primarily focus on weight-only schemes and lack systematic modeling of activation quantization, leading to severe accuracy degradation at low bit-widths (see Fig.~\ref{fig:intro}). To maintain performance, they typically retain activations in full precision and dequantize ternary weights during inference, which fundamentally limits end-to-end acceleration. BitNet v2~\citep{wang2025bitnetv2native4bit} demonstrates joint deployment of ternary weights with 4-bit activations, but it relies on expensive quantization-aware training (QAT), which requires a reported training cost exceeding $10^4$ GPU hours. These limitations highlight an urgent need for a practical post-training quantization (PTQ) approach. It should jointly support ternary weights and low-bit activations,  and deliver efficient end-to-end inference.

We revisit the statistical properties of weights and activations in LLMs, as shown in Fig.~\ref{fig2:vision}(a). Empirically, per-channel weight distributions are often close to a unimodal Gaussian. This shape mismatches the ternary codebook $\{-1, 0, +1\}$, which leads to large quantization error when weights are projected into the ternary space. In contrast, a tri-modal distribution is more aligned with ternary representation, and it should reduce approximation error in principle. Meanwhile, activations exhibit heavy tails and extreme outliers. Under low-bit quantization, this heavy-tailed behavior often dominates the distortion. This contrast motivates our central question. \emph{Can we, within a PTQ framework, \ding{182} reshape weight distributions toward a ternary-friendly tri-modal form, and \ding{183} suppress activation outliers to weaken heavy tails, so that ternary weights and low-bit activations can work well together?}

To this end, we propose \textbf{TWLA} (\textbf{T}ernarized \textbf{W}eights and \textbf{L}ow-bit \textbf{A}ctivations), a PTQ framework for efficient inference, which consists of three tightly coupled modules.
First, we propose \textbf{E2M-ATQ}, a Euclidean-to-Manifold asymmetric ternary quantizer, to improve layer-output calibratability for weight ternarization. 
Second, we introduce \textbf{KOTMS}, a Kronecker-structured orthogonal rotation optimized with a Cayley parameterization, which reshapes weights toward a ternary-friendly tri-modal distribution (see Fig.~\ref{fig2:vision}(b)); by orthogonal equivalence, the same rotation enables shared orthogonal mixing of activations, statistically shrinking outliers and stabilizing the dynamic range for low-bit activation quantization. 
Notably, since the activation benefits arise from a shared rotation, the quantizability gains are inherently heterogeneous across layers; consequently, layers with smaller gains can still become bottlenecks for low-bit activation quantization. To mitigate this effect, we further propose \textbf{ILA-AMP}, an inter-layer aware activation mixed-precision strategy. It introduces a second-order interaction cost between adjacent layers for the first time and unifies per-layer sensitivity with inter-layer coupling induced by distribution shift into a single quadratic surrogate objective. This formulation explicitly incorporates the layer-wise disparity of activation gains caused by the shared orthogonal matrix into the optimization, preventing cascades initiated by a few weak layers. 
Through this coordinated design, TWLA enables stable and efficient deployment of ternary weights with low-bit activations for end-to-end inference.

 To summarize, our main contributions are:
\begin{itemize}
    \item We first identify the key bottlenecks of LLMs under W1.58A4: limited ternary-quantizer calibratability and a ternary codebook--distribution mismatch for weights, and heavy-tailed distributions for activations.
    \item We propose TWLA, a retraining-free PTQ framework that achieves 1.58-bit weights together with low-bit activation quantization (e.g., 4-bit).
    \item Extensive experiments show that TWLA outperforms SOTA 2-bit and sub-2-bit methods, advancing LLMs to the W1.58A4 regime for PTQ.
\end{itemize}

\section{Related Work}
\label{sec:related}

\subsection{Ternarization for Large Language Models.}
Ternarization constrains parameters to $\{-1,0,+1\}$, reducing storage and simplifying arithmetic for efficient deployment. Early work such as TWN~\citep{li2022ternaryweightnetworks} and TTQ~\citep{zhu2017trainedternaryquantization} established scale-aware ternary quantization and learned scaling factors, and later studies extended ternarization to activations for fully ternary networks~\citep{Wang_2018_CVPR,7966166}. More recent efforts scale ternarization to Transformer and LLM settings, often with distillation or end-to-end training, e.g., TernaryBERT~\citep{zhang2020ternarybertdistillationawareultralowbit}, BitNet v2~\citep{wang2025bitnetv2native4bit} and RobuQ~\citep{yang2025robuqpushingditsw158a2}. However, many of these methods rely on the training from scratch or QAT, which is costly and less transferable to arbitrary pretrained models. To reduce overhead, PTQ-based ternarization has been explored, such as PT$^2$-LLM~\citep{yan2025pt2llmposttrainingternarizationlarge}. However, existing PTQ approaches are largely weight-only, typically keeping activations in full precision and dequantizing weights during inference. As a result, jointly suppressing activation outliers and enabling low-bit activations together with ternary weights under PTQ remains an open challenge.

\subsection{Mixed-Precision Quantization.}
Mixed-Precision Quantization (MPQ) assigns different bit-widths to weights and/or activations to improve the accuracy--efficiency trade-off under a fixed budget. 
Although extensively explored in vision, MPQ for LLMs presents unique challenges. Recent works like APTQ~\citep{Guan_2024} extend MPQ to Transformer blocks via Hessian-trace criteria. Under extreme low-bit regimes (e.g., 2-bit), coarse layer/block allocation often becomes insufficient, motivating finer-grained mixed precision and stronger outlier handling. Representative weight-focused methods include SpQR~\citep{dettmers2023spqrsparsequantizedrepresentationnearlossless}, PB-LLM~\citep{shang2023pbllmpartiallybinarizedlarge}, and LLM-MQ~\citep{li2023llm}, as well as TreeQ~\citep{yang2025treeqpushingquantizationboundary} and SliM-LLM~\citep{huang2024slim}. Beyond weight-only designs, ResQ~\citep{saxena2024resq} allocates higher precision to high-variance activation subspaces and applies rotations to suppress activation outliers. Despite these advances, most MPQ methods still assume independent layer-wise sensitivity, while activation quantization can shift distributions and couple errors across layers. Modeling such cross-layer interactions remains an important open problem for MPQ in LLMs.

\begin{figure*}[h!]
    \centering
    \includegraphics[width=\linewidth]{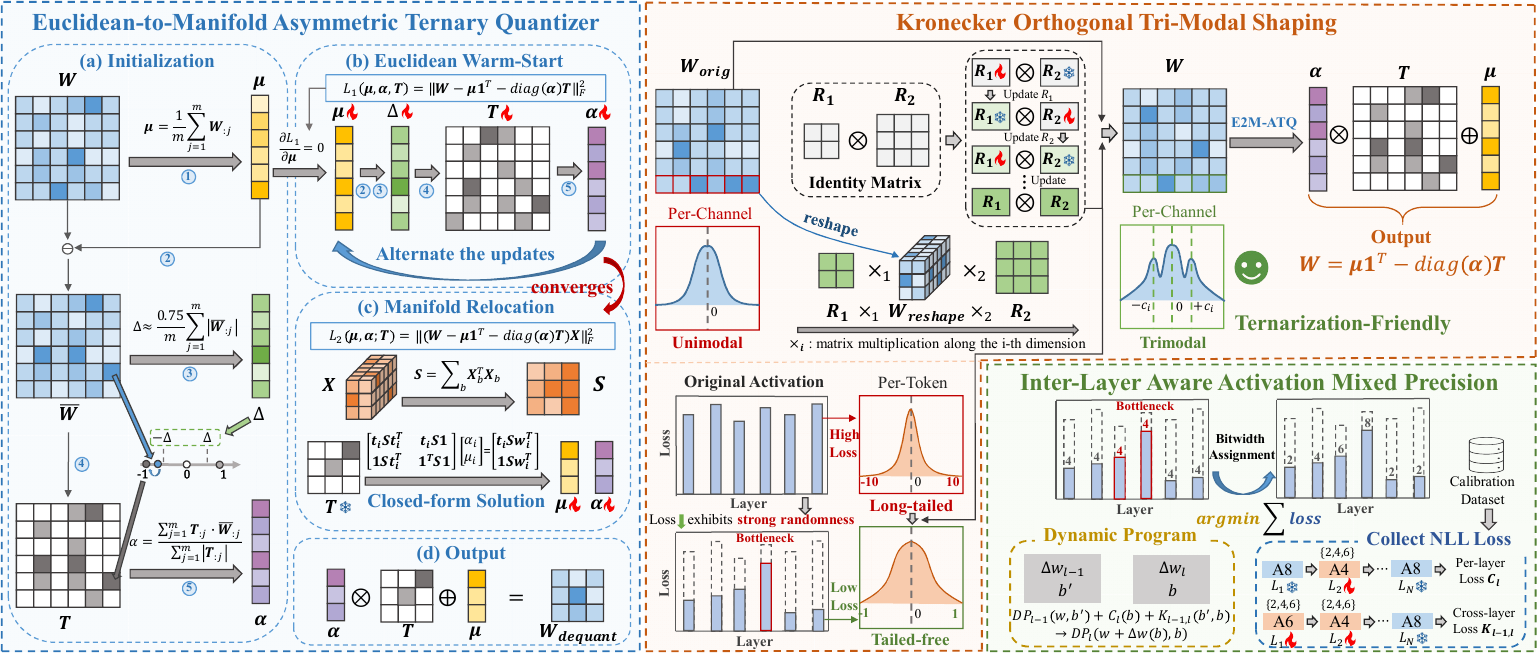}
    \caption{Overview of TWLA (initialization follows the same design as PT$^2$-LLM~\citep{yan2025pt2llmposttrainingternarizationlarge}). \textcolor[HTML]{1d73b6}{Euclidean-to-Manifold Asymmetric Ternary Quantizer (E2M-ATQ)}: minimizes layer-output error under weight ternarization via a two-stage optimization. \textcolor[HTML]{f27830}{Kronecker Orthogonal Tri-Modal Shaping (KOTMS)}: reshapes weight into ternary-friendly tri-modal distribution. \textcolor[HTML]{24a645}{Inter-Layer Aware Activation Mixed Precision (ILA-AMP)}: prevents cascades triggered by weak layers.}
    \label{fig:overview}
\end{figure*}

\section{Method}
\label{sec:method}
\paragraph{Overview.}
Fig.~\ref{fig:overview} illustrates the overall workflow of TWLA. We first review the standard ternarization formulation and basic notations in Sec.~\ref{sec:preliminaries}. Building on this foundation, Sec.~\ref{sec:e2m-atq} introduces E2M-ATQ, a training-free two-stage procedure to minimize layer output error under weight ternarization. Sec.~\ref{sec:kotms} then presents KOTMS, which applies a lightweight Kronecker-structured orthogonal rotation to reshape weights into ternary-friendly tri-modal distributions while suppressing activation outliers through shared rotation. Finally, Sec.~\ref{sec:ila-amp} proposes ILA-AMP, which models second-order interaction costs between adjacent layers to allocate activation bits across layers, thereby preventing accuracy cascades caused by a few weak layers. The complete pseudocode is provided in Appendix~\ref{app:pseudocode}, and weights/activations distribution visualizations are available in Appendix~\ref{app:distribution}.

\subsection{Preliminaries}
\label{sec:preliminaries}
\paragraph{Weight Ternarization.}
Ternarization converts floating-point weights in a model into the ternary set $\{-1,0,+1\}$. 
Specifically, given a full-precision weight matrix $\mathbf{W}\in\mathbb{R}^{n\times m}$, 
TWN~\citep{li2022ternaryweightnetworks} employs a row-wise threshold $\boldsymbol{\Delta}\in\mathbb{R}^{n\times 1}$ 
to map each element $W_{ij}$ to a ternary codebook 
$\mathbf{T}\in\{-1,0,+1\}^{n\times m}$ according to
\begin{equation}
T_{ij}=
\begin{cases}
+1, & W_{ij}>\Delta_i,\\
0,  & |W_{ij}|\le \Delta_i,\\
-1, & W_{ij}<-\Delta_i,
\end{cases}
\quad
\Delta_i \approx \frac{0.75}{m}\sum_{j=1}^{m}\left|W_{ij}\right|.
\label{eq:ternary}
\end{equation}

To improve reconstruction accuracy, TWN introduces a row-wise scaling factor 
$\boldsymbol{\alpha}\in\mathbb{R}^{n\times 1}$ to recover the magnitude of the original weights 
after the ternary matrix $\mathbf{T}$ is determined. 
For the $i$-th row, the optimal scaling factor $\alpha_i$ is obtained by minimizing the 
least-squares reconstruction error:
\begin{equation}
\alpha_i
=\arg\min_{\alpha}\ \left\|\mathbf{W}_{i,:}-\alpha\,\mathbf{T}_{i,:}\right\|_2^2
=\frac{\sum_{j=1}^{m}T_{ij}W_{ij}}{\sum_{j=1}^{m}|T_{ij}|}.
\label{eq:alpha}
\end{equation}

Accordingly, the quantized weight matrix can be expressed as $\hat{\mathbf{W}}=\boldsymbol{\alpha}\mathbf{T}$.
This ternarization scheme provides a practical solution for 
initializing ternary parameters in PTQ settings.

\subsection{Euclidean-to-Manifold Asymmetric Ternary Quantizer}
\label{sec:e2m-atq}
Pretrained LLM weights are often biased with non-zero row-wise means, which violates the common symmetry assumption in low-bit quantization. To capture such bias, we adopt an asymmetric ternary parameterization (e.g., \citep{yan2025pt2llmposttrainingternarizationlarge}) that represents a full-precision weight matrix $\mathbf{W}\in\mathbb{R}^{n\times m}$ as quantized weight matrix $\bar{\mathbf{W}}\in\mathbb{R}^{n\times m}$:
\begin{equation}
\bar{\mathbf{W}}
=
\boldsymbol{\mu}\,\mathbf{1}^{\top}
+
\mathrm{diag}(\boldsymbol{\alpha})\,\mathbf{T},
\qquad
\mathbf{T}\in\{-1,0,1\}^{n\times m},
\label{eq:atq-param}
\end{equation}
where $\boldsymbol{\mu},\boldsymbol{\alpha}\in\mathbb{R}^{n}$ denote the row-wise shift and scale, and $\mathbf{1}\in\mathbb{R}^{m}$ is the all-one vector. We initialize $\boldsymbol{\mu}$ by the row mean $\boldsymbol{\mu}=\frac{1}{m}\sum_{j=1}^{m}\mathbf{W}_{:j}.$
The discrete ternary codebook induces a stratified feasible set: with $\mathbf{T}$ fixed, the quantized weights vary only within the corresponding affine stratum and optimizing the continuous parameters $(\boldsymbol{\mu},\boldsymbol{\alpha})$ often admits closed-form updates, whereas allowing $\mathbf{T}$ to change entails discrete search and combinatorial complexity. However, under output-alignment objectives, calibration-induced inter-column correlations globally couple $\mathbf{T}$, turning an otherwise element-wise thresholding step into a difficult combinatorial optimization. We therefore adopt a two-stage pipeline: we first obtain a stable ternary pattern and a reliable initialization in the Euclidean weight domain, and then freeze $\mathbf{T}$ as a structural prior and relocate $(\boldsymbol{\mu},\boldsymbol{\alpha})$ on the metric manifold defined by the calibration second moment via a metric-consistent closed-form update, thereby aligning the layer output within the selected stratum.

\paragraph{Euclidean warm-start under Frobenius geometry.}
In the first stage, we obtain a stable ternary pattern by minimizing the weight-domain reconstruction error
\begin{equation}
L_1(\boldsymbol{\mu}, \boldsymbol{\alpha}, \mathbf{T})
=
\left\|\mathbf{W}
-
\boldsymbol{\mu}\mathbf{1}^\top
-
\mathrm{diag}(\boldsymbol{\alpha})\,\mathbf{T}
\right\|_F^2.
\label{eq:L1}
\end{equation}
Let $\mathbf{E}=\mathbf{W}-\bar{\mathbf{W}}$ denote the residual error, where the mean of $\mathbf{E}$ is not always zero due to inevitable errors during the ternarization process. Following~\citep{li2024arb}, we apply residual-mean correction:
\begin{equation}
\boldsymbol{\mu}
\leftarrow
\boldsymbol{\mu}
+
\frac{1}{m}\mathbf{E}\mathbf{1}.
\end{equation}
After correcting $\boldsymbol{\mu}$, we sequentially update $\boldsymbol{\alpha}$ and $\mathbf{T}$ according to Eq.~\ref{eq:alpha} and Eq.~\ref{eq:ternary}. This update scheme naturally extends to an iterative algorithm. Specifically, at each iteration we update $\boldsymbol{\mu}$, then $\boldsymbol{\alpha}$, and finally $\mathbf{T}$, so that each variable is optimal under the objective in Eq.~\ref{eq:L1} given the others. (Appendix~\ref{app:mono_decrease} proves the objective decreases monotonically throughout the iterations). Upon convergence, we obtain a stable ternary pattern $\mathbf{T}^{(0)}$ and a Euclidean initialization $(\boldsymbol{\mu}^{(0)},\boldsymbol{\alpha}^{(0)})$ for the subsequent manifold relocation stage.

\paragraph{Manifold relocation under calibration-induced metric.}
The Euclidean reconstruction error in the weight domain may not faithfully reflect forward output deviation. In this stage, we relocate the continuous parameters $(\boldsymbol{\mu},\boldsymbol{\alpha})$ on the metric manifold induced by calibration activations, while freezing the discrete stratum as $\mathbf{T}=\mathbf{T}^{(0)}$.
Given calibration activations $\mathbf{X}$, we minimize the layer output error
\begin{equation}
L_2(\boldsymbol{\mu},\boldsymbol{\alpha};\mathbf{T})
=
\left\|
\left(
\mathbf{W}
-
\boldsymbol{\mu}\mathbf{1}^\top
-
\mathrm{diag}(\boldsymbol{\alpha})\,\mathbf{T}
\right)\mathbf{X}
\right\|_F^2 .
\end{equation}
To avoid repeatedly multiplying high-dimensional $\mathbf{X}$, we precompute the activation second moment~\citep{li2024arb}
$\mathbf{S}=\sum_b \mathbf{X}_b^\top\mathbf{X}_b \in \mathbb{R}^{m\times m}$, which allows us to rewrite the $L_2$ reconstruction error
$L_2=\|\mathbf{W}\mathbf{X}-\mathbf{W}_c\mathbf{X}\|_F^2$ as the following equivalent quadratic form:
\begin{equation}
L_2(\boldsymbol{\mu},\boldsymbol{\alpha};\mathbf{T})
=
\mathrm{Tr}\!\left(\mathbf{E}\mathbf{S}\mathbf{E}^\top\right),
\mathbf{E}
=
\mathbf{W}
-
\boldsymbol{\mu}\mathbf{1}^\top
-
\mathrm{diag}(\boldsymbol{\alpha})\,\mathbf{T}.
\label{eq:L2_main}
\end{equation}
Equivalently, minimizing Eq.~\ref{eq:L2_main} amounts to projecting $\mathbf{W}$ onto the affine constraint set induced by a fixed $\mathbf{T}$. Since $\mathbf{S}$ is typically dense, optimizing over $\mathbf{T}$ would introduce global coupling and yield a challenging combinatorial problem; we thus fix $\mathbf{T}=\mathbf{T}^{(0)}$ and optimize only $(\boldsymbol{\mu},\boldsymbol{\alpha})$ for relocation. Under this constraint, the objective decouples across rows: for each row $i$ with $(\mathbf{w}_i,\mathbf{t}_i)$, imposing $\partial L_2/\partial \mu_i=0$ and $\partial L_2/\partial \alpha_i=0$ gives a $2\times2$ linear system.

\begin{equation}
\begin{bmatrix}
\mathbf{t}_i \mathbf{S}\mathbf{t}_i^\top & \mathbf{t}_i \mathbf{S}\mathbf{1} \\
\mathbf{1}^\top \mathbf{S}\mathbf{t}_i^\top & \mathbf{1}^\top \mathbf{S}\mathbf{1}
\end{bmatrix}
\begin{bmatrix}
\alpha_i \\
\mu_i
\end{bmatrix}
=
\begin{bmatrix}
\mathbf{t}_i \mathbf{S}\mathbf{w}_i^\top \\
\mathbf{1}^\top \mathbf{S}\mathbf{w}_i^\top
\end{bmatrix}.
\label{eq:2x2_main}
\end{equation}
The unique closed-form solution is
\begin{equation}
\begin{split}
\alpha_i^{*}
=
\frac{
(\mathbf{t}_i \mathbf{S}\mathbf{w}_i^\top)(\mathbf{1}^\top \mathbf{S}\mathbf{1})
-
(\mathbf{t}_i \mathbf{S}\mathbf{1})(\mathbf{1}^\top \mathbf{S}\mathbf{w}_i^\top)
}{(\mathbf{t}_i \mathbf{S}\mathbf{t}_i^\top)(\mathbf{1}^\top \mathbf{S}\mathbf{1})
-
(\mathbf{t}_i \mathbf{S}\mathbf{1})(\mathbf{1}^\top \mathbf{S}\mathbf{t}_i^\top)},
\\[0.6em]
\mu_i^{*}
=
\frac{
(\mathbf{t}_i \mathbf{S}\mathbf{t}_i^\top)(\mathbf{1}^\top \mathbf{S}\mathbf{w}_i^\top)
-
(\mathbf{1}^\top \mathbf{S}\mathbf{t}_i^\top)(\mathbf{t}_i \mathbf{S}\mathbf{w}_i^\top)
}{(\mathbf{t}_i \mathbf{S}\mathbf{t}_i^\top)(\mathbf{1}^\top \mathbf{S}\mathbf{1})
-
(\mathbf{t}_i \mathbf{S}\mathbf{1})(\mathbf{1}^\top \mathbf{S}\mathbf{t}_i^\top)}.
\end{split}
\end{equation}
The full derivation is provided in Appendix~\ref{app:e2m_rowwise_normal_eq}. After the two-stage Euclidean-to-Manifold (E2M) procedure, the resulting quantized weights under E2M-ATQ are
\begin{equation}
\bar{\mathbf{W}}
=
\boldsymbol{\mu}^{*}\mathbf{1}^\top
+
\mathrm{diag}(\boldsymbol{\alpha}^{*})\,\mathbf{T}^{(0)}.
\end{equation}

\subsection{Kronecker Orthogonal Tri-Modal Shaping}
\label{sec:kotms}

In the previous section, E2M-ATQ improves the continuous shift and scale parameters for a fixed ternary pattern. However, such parameter relocation does not change the geometric mismatch between pretrained weights and the ternary codebook. The asymmetric ternary levels $\{-\alpha_i+\mu_i,\ \mu_i,\ +\alpha_i+\mu_i\}$ are most effective when each channel contains three well-separated attraction regions, while pretrained LLM weights are usually concentrated around a near-unimodal shape~\citep{ye2025dbellquant}. Consequently, many entries remain close to ternary decision boundaries, making the final hard projection sensitive to small perturbations and increasing the ternarization error.

To reduce this mismatch before hard ternary projection, KOTMS introduces a structured orthogonal coordinate transformation and optimizes it with a ternary-codebook shaping loss. The orthogonal transformation preserves the full-precision mapping through its inverse rotation, while the shaping loss encourages the transformed weights to become more compatible with the asymmetric ternary levels used by E2M-ATQ. In this way, KOTMS complements E2M-ATQ: E2M-ATQ calibrates the ternary parameters within a selected discrete pattern, whereas KOTMS improves the coordinate system in which the ternary projection is performed.

\begin{table*}[h!]
\centering
\caption{Comparison of perplexity on WikiText2 and averaged accuracy on seven Zero-Shot tasks (Arc-Challenge, Arc-Easy , HellaSwag , LAMBADA-openai, LAMBADA-standard, PIQA, and WinoGrande). MP indicates that mixed-precision is adopted for weights and/or activations. Full results are in the Appendix~\ref{app:detailed main results}}
\resizebox{\textwidth}{!}{%
\tablestyle{2pt}{1.2}
\begin{tabular}{l|c|c|*{2}{cc:}cc|cc|cc:cc:cc}
\toprule
\multirow{3}{*}{\textbf{Method}} & \multirow{3}{*}{\makecell{\textbf{\#Bits}\\ (W)}} & \multirow{3}{*}{\makecell{\textbf{\#Bits}\\ (A)}}    &
\multicolumn{2}{c}{\textbf{LLaMA2-7B}} &
\multicolumn{2}{c}{\textbf{LLaMA2-13B}} &
\multicolumn{2}{c}{\textbf{LLaMA2-70B}} &
\multicolumn{2}{c}{\textbf{LLaMA3-8B}} &
\multicolumn{2}{c}{\textbf{Qwen3-8B}} &
\multicolumn{2}{c}{\textbf{Qwen3-14B}} &
\multicolumn{2}{c}{\textbf{Qwen3-32B}} \\
\cdashline{4-17}
 & & & 0-shot$^7$ & Wiki & 0-shot$^7$ & Wiki & 0-shot$^7$ & Wiki & 0-shot$^7$ & Wiki & 0-shot$^7$ & Wiki & 0-shot$^7$ & Wiki & 0-shot$^7$ & Wiki \\
 & & & Avg.($\uparrow$) &($\downarrow$) & Avg.($\uparrow$) &($\downarrow$) & Avg.($\uparrow$) &($\downarrow$) & Avg.($\uparrow$) &($\downarrow$) & Avg.($\uparrow$) &($\downarrow$) & Avg.($\uparrow$) &($\downarrow$) & Avg.($\uparrow$) &($\downarrow$)\\
\hdashline
FP16       & 16   & 16 & 69.49 & 5.47
& 72.19 & 4.88
& 76.71 & 3.32
& 72.51 & 6.14
& 69.09 & 9.00
& 72.47 & 8.64
& 72.42 & 7.61 \\
\hdashline
GPTQ        & 2    & \multirow{6}{*}{16} & 29.09 & 47.13
& 27.21 & 182.11
& 45.59 & 24.53
& 26.38 & 154.38
& 29.25 & 61.74
& 28.61 & 84.25
& 38.18 & 32.95 \\
QuaRot   & 2    &
& 36.30 & 19.97
& 49.30 & 10.36
& 68.79 & 5.56
& 35.66 & 23.05
& -- & --
& -- & --
& -- & -- \\
SliM-LLM & 2MP  & 
& 48.96 & 16.03
& 52.00 & 9.06
& -- & --
& 27.49 & 38.09
& 39.13 & 27.10
& 48.10 & 15.19
& 64.79 & 12.12 \\
PB-LLM   & 1.7  & 
& 29.01 & 41.04
& 26.29 & 335.22
& 50.88 & 12.02
& 29.32 & 42.59
& 38.43 & 26.20
& 44.80 & 22.90
& 63.92 & 13.17 \\
PT$^2$-LLM & 1.58 & 
& 42.82 & 11.56
& 56.54 & 9.19
& 62.47 & 6.27
& 39.04 & 32.19
& -- & --
& 46.35 & 16.48
& -- & -- \\
\rowcolor{gray!10}
\textbf{TWLA} & 1.58 &  
& \textbf{62.91} & \textbf{6.97}
& \textbf{67.70} & \textbf{5.79}
& \textbf{73.60} & \textbf{4.13}
& \textbf{62.98} & \textbf{9.39}
& \textbf{62.05} & \textbf{12.52}
& \textbf{68.48} & \textbf{10.42}
& \textbf{69.54} & \textbf{8.94} \\
\hdashline
GPTQ     & 2    & 6
& 27.94 & 120.21
& 25.90 & 2e3
& 35.89 & 287.83
& 25.69 & 386.19
& 26.65 & 125.31
& 26.76 & 200.91
& 31.15 & 125.47 \\
QuaRot   & 2    & 6
& 35.22 & 21.25
& 47.95 & 10.62
& 67.97 & 5.63
& 35.51 & 24.37
& -- & --
& -- & --
& -- & -- \\
ResQ     & 2.3MP & 6.1MP
& 51.18 & 9.93
& 56.49 & 8.62
& 57.81 & 5.38
& 47.57 & 16.93
& -- & --
& -- & --
& -- & -- \\
SliM-LLM & 2MP  & 6
& 37.06 & 565.93
& 43.32 & 11.32
& -- & --
& 27.15 & 233.82
& 31.41 & 67.61
& 37.46 & 28.91
& 52.43 & 20.88 \\
PB-LLM   & 1.7  & 6
& 30.60 & 44.36
& 26.61 & 767.99
& 40.89 & 23.66
& 30.28 & 55.75
& 31.67 & 66.83
& 37.28 & 33.15
& 51.45 & 21.05 \\
PT$^2$-LLM & 1.58 & 6
& 38.61 & 22.12
& 50.82 & 16.33
& 48.96 & 13.78
& 30.81 & 69.83
& -- & --
& 36.00 & 30.90
& -- & -- \\
\rowcolor{gray!10}
\textbf{TWLA} & 1.58 & 6MP
& \textbf{61.98} & \textbf{7.14}
& \textbf{66.88} & \textbf{5.90}
& \textbf{73.45} & \textbf{4.21}
& \textbf{61.44} & \textbf{9.76}
& \textbf{59.03} & \textbf{12.83}
& \textbf{67.29} & \textbf{10.69}
& \textbf{68.13} & \textbf{9.00} \\
\hdashline
GPTQ     & 2    & 4
& 25.80 & 2e4
& 25.35 & 7e3
& 25.92 & 2e4
& 25.64 & 2e4
& 25.29 & 1e4
& 25.79 & 3e4
& 24.85 & 2e7 \\
QuaRot   & 2    & 4
& 30.83 & 37.79
& 39.42 & 14.49
& 53.64 & 6.77
& 27.69 & 119.90
& -- & --
& -- & --
& -- & -- \\
ResQ     & 2.3MP & 4.2MP
& 45.23 & 11.97
& 51.84 & 9.29
& 56.43 & 7.92
& 39.12 & 21.95
& -- & --
& -- & --
& -- & -- \\
SliM-LLM & 2MP  & 4
& 25.67 & 4e3
& 26.44 & 1e3
& -- & --
& 25.60 & 1e4
& 19.18 & 2e4
& 25.62 & 1e4
& 26.31 & 6e3 \\
PB-LLM   & 1.7  & 4
& 26.46 & 458.08
& 25.54 & 3e3
& 26.18 & 2e3
& 26.29 & 684.28
& 26.26 & 5e3
& 29.23 & 914.41
& 26.36 & 1e4 \\
PT$^2$-LLM & 1.58 & 4
& 27.31 & 341.08
& 29.07 & 2e3
& 26.19 & 1e3
& 25.81 & 720.21
& -- & --
& 25.53 & 5e3
& -- & -- \\
\rowcolor{gray!10}
\textbf{TWLA} & 1.58 & 4MP
& \textbf{58.00} & \textbf{8.31}
& \textbf{64.30} & \textbf{6.68}
& \textbf{71.10} & \textbf{4.77}
& \textbf{55.23} & \textbf{12.83}
& \textbf{50.42} & \textbf{16.15}
& \textbf{62.00} & \textbf{12.84}
& \textbf{65.25} & \textbf{9.71} \\ 
\bottomrule
\end{tabular}}
\label{tab:main}
\end{table*}

\paragraph{Codebook-Aligned Shaping Objective.}
We use a symmetric three-component Gaussian mixture as a smooth surrogate for ternary-codebook alignment. The three modes serve as adaptive row-wise anchors corresponding to the negative, zero, and positive ternary regions. For the $i$-th row $\mathbf{w}_i$, let $\mathbf{R}$ denote a learnable orthogonal transform and define
\begin{equation}
    \mathbf{z}_i = \mathbf{w}_i\mathbf{R}, 
    \qquad 
    z_{ij}=(\mathbf{z}_i)_j .
\end{equation}
We assign three anchors $\{-c_i,0,+c_i\}$ to each row and minimize the following negative log-likelihood:
\begin{equation}
\begin{split}
\mathcal{L}_{\mathrm{TriGMM}}
=
-\frac{1}{nm}
\sum_{i=1}^{n}
\sum_{j=1}^{m}
\log
\Big[
&\pi_+\phi(z_{ij};+c_i,\sigma_i^2)\\
+\pi_0\phi(z_{ij};0,\sigma_i^2)
&+\pi_-\phi(z_{ij};-c_i,\sigma_i^2)
\Big],
\end{split}
\label{eq:loss}
\end{equation}
where $\phi(\cdot;c,\sigma^2)$ denotes the Gaussian density, $c_i=\frac{1}{m}\sum_{j=1}^{m}|z_{ij}|$, $\sigma_i=\mathrm{std}(\{z_{ij}\}_{j=1}^{m})$, and $\pi_+=\pi_-=\frac{1-\pi_0}{2}$ with $\pi_0\in(0,1)$. This objective gives higher likelihood to values close to the ternary-aligned anchors and lower likelihood to values far from them. Therefore, minimizing $\mathcal{L}_{\mathrm{TriGMM}}$ provides a differentiable approximation to moving transformed entries toward $\{-c_i,0,+c_i\}$, which is consistent with the subsequent hard ternary projection $T_{ij}\leftarrow\Pi_{\{-1,0,1\}}(\cdot)$.

To avoid degenerate solutions, such as $c_i\!\to\!0$ or excessive concentration around the zero component, we further regularize the zero-mode responsibility:
\begin{equation}
    \mathcal{L}_{\mathrm{shape}}
    =
    \mathcal{L}_{\mathrm{TriGMM}}
    +
    \beta \mathcal{L}_{\mathrm{zero}} ,
\end{equation}
where $\mathcal{L}_{\mathrm{zero}}$ controls the mass assigned to the central mode. Details of this regularizer and the soft-projection interpretation are provided in Appendix~\ref{app:kotms-zero-constraint} and Appendix~\ref{app:kotms-soft-proj}. The resulting objective supplies the optimization signal for learning the orthogonal transform used in KOTMS.

\paragraph{Kronecker-Structured Orthogonal Transform.}
A dense orthogonal matrix $\mathbf{R}\in\mathbb{R}^{m\times m}$ is impractical for LLM layers because it requires large storage and expensive matrix multiplication. KOTMS therefore restricts the transform to a Kronecker-structured orthogonal family~\citep{gu2026loproenhancinglowrankquantization,xiao2025singlequantefficientquantizationlarge}:
\begin{equation}
\mathbf{R}
=
\mathbf{R}_1\otimes\mathbf{R}_2,
\quad
\mathbf{R}_1\in\mathcal{O}(n_1),\ 
\mathbf{R}_2\in\mathcal{O}(n_2),\ 
n_1n_2=m .
\end{equation}
This parameterization keeps the transformation exactly invertible: $\mathbf{R}^{-1}=\mathbf{R}^{\top}=\mathbf{R}_1^{\top}\otimes\mathbf{R}_2^{\top}.$ Thus, the inverse rotation can be consistently applied to the activation side, preserving the full-precision function while allowing the weight coordinates to be shaped for ternary projection.

For implementation, a row vector $\mathbf{v}\in\mathbb{R}^{1\times m}$ is reshaped into $\mathbf{V}_{\mathrm{mat}}\in\mathbb{R}^{n_1\times n_2}$. The product with the Kronecker transform can then be evaluated as
\begin{equation}
\mathbf{v}\mathbf{R}
=
\operatorname{vec}\!\left(
\mathbf{R}_2^{\top}
\mathbf{V}_{\mathrm{mat}}
\mathbf{R}_1
\right)^{\top}.
\label{eq:kron}
\end{equation}
As a result, KOTMS stores and applies two small orthogonal factors instead of a full dense rotation matrix. When $n_1$ and $n_2$ are chosen with comparable sizes, the application is dominated by two compact matrix multiplications. This makes the transform suitable for PTQ calibration and for deployment-time activation rotation with limited overhead. In addition, the shared orthogonal mixing disperses concentrated activation directions and helps reduce heavy-tailed activation outliers, which facilitates low-bit activation quantization; empirical evidence is provided in Appendix~\ref{app:act_outlier}.

\paragraph{Orthogonality-Preserving Optimization.}
We optimize the Kronecker factors under the shaping objective in Eq.~\eqref{eq:loss}. To maintain strict orthogonality during optimization, each factor is parameterized by a Cayley transform. Specifically, for $k\in\{1,2\}$, we introduce a free matrix $\mathbf{S}_k\in\mathbb{R}^{n_k\times n_k}$ and construct a skew-symmetric generator
\begin{equation}
    \mathbf{A}_k=\mathbf{S}_k-\mathbf{S}_k^{\top}.
\end{equation}
The corresponding orthogonal factor is defined as
\begin{equation}
\mathbf{R}_k
=
\mathrm{cayley}(\mathbf{A}_k)
=
(\mathbf{I}+\mathbf{A}_k)^{-1}
(\mathbf{I}-\mathbf{A}_k),
\qquad k\in\{1,2\}.
\end{equation}
This guarantees $\mathbf{R}_k^{\top}\mathbf{R}_k=\mathbf{I}$ throughout optimization and avoids explicit projection steps. During calibration, gradients of $\mathcal{L}_{\mathrm{shape}}$ update the free matrices $\mathbf{S}_1$ and $\mathbf{S}_2$, and the resulting orthogonal factors define the final KOTMS rotation.

\subsection{Inter-Layer Aware Activation Mixed Precision}
\label{sec:ila-amp}
\paragraph{Discussion.}
Although KOTMS can statistically mitigate activation outliers, it is important to note that its learning objective is defined purely in the weight domain. Consequently, the improvement in activation quantizability is largely an indirect by-product of applying the same orthogonal mixing, and thus can be highly uneven across layers (See Appendix~\ref{app:vis_kotms_cross_layer} for a detailed distributional visualization.). As a result, a small number of less-benefited layers may become the primary bottleneck for low-bit activation quantization. Motivated by this observation, we allocate activation bitwidths across layers under a global budget constraint.

Consider a network with $L$ layers. Let $b_\ell$ denote the activation bitwidth assigned to layer $\ell$, with candidates $\mathcal{B}=\{2,4,6,8\}$, and let $\mathbf{b}=(b_1,\ldots,b_L)$ denote a layer-wise configuration. We adopt the per-token negative log-likelihood (NLL) on a validation set as the value function:
\begin{equation}
v_{\mathrm{NLL}}(\mathbf{b})
=
\mathbb{E}_{(\mathbf{x},t)\sim\mathcal{D}}
\Big[
-\log p_{\boldsymbol{\theta}}(x_{t+1}\mid \mathbf{x}_{\le t}; \mathbf{b})
\Big].
\end{equation}
Here, $\mathcal{D}$ is the validation set and $p_{\boldsymbol{\theta}}(\cdot;\mathbf{b})$ is the activation-quantized model under assignment $\mathbf{b}$. Most MPQ methods assume layer-wise independent sensitivity, i.e., quantization effects are approximately additive across layers. This is often invalid for LLMs: quantizing layer $\ell$ changes its output distribution, shifts the input statistics of layer $\ell!+!1$, and perturbs the downstream quantizer. These shifts accumulate and amplify, inducing cross-layer coupling that is especially severe at 2--4 bits and can trigger accuracy collapse. We therefore add explicit adjacent-layer interaction terms to model local error propagation along the stack.

Directly minimizing $v_{\mathrm{NLL}}(\mathbf{b})$ over $\mathcal{B}^L$ is intractable. We instead build a DP-friendly second-order surrogate that approximates the NLL change via (i) per-layer costs $C_\ell(b)$ and (ii) adjacent interaction costs $K_{\ell-1,\ell}(b',b)$. We use $b_{\max}=8$ as a reference: start from uniform 8-bit quantization and then lower selected layers around this baseline, yielding controlled perturbations and lower-variance sensitivity estimates. The first-order cost is defined as
\begin{equation}
C_\ell(b)=v_{\mathrm{NLL}}(b_\ell=b,\; b_{k\neq \ell}=b_{\max})-v_{\mathrm{NLL}}(\mathbf{b}_{\max}),
\end{equation}
and the adjacent interaction cost as
\begin{equation} 
\resizebox{\hsize}{!}{$ 
\begin{split} 
K_{\ell-1,\ell}(b', b) = v_{\mathrm{NLL}}\!\Big(b_{\ell-1}=b',\; b_\ell=b,\; b_{k\notin\{\ell-1,\ell\}}=b_{\max}\Big)\\ -
v_{\mathrm{NLL}}(\mathbf{b}_{\max}) - C_{\ell-1}(b') - C_\ell(b), 
\end{split} 
$} 
\end{equation}
When $K_{\ell-1,\ell}(b',b)>0$, it indicates coupling amplification caused by error propagation between adjacent layers. We provide further justification for modeling interactions only between adjacent layers in Appendix~\ref{app:adjacent_only}.

Then we formulate mixed-precision activation allocation as a budget-constrained minimization problem with adjacent second-order terms:
\begin{equation}
\min_{\{b_\ell\}}
\sum_{\ell=1}^L C_\ell(b_\ell)
+
\sum_{\ell=2}^L K_{\ell-1,\ell}(b_{\ell-1}, b_\ell),
\text{s.t. }
\sum_{\ell=1}^L b_\ell \le B,
\end{equation}
where $B$ is the global bit budget. 
Since $K_{\ell-1,\ell}$ is defined only on adjacent layer pairs, the surrogate objective has a chain structure.
We therefore solve the global bit allocation exactly via dynamic programming under the budget constraint $c\le B$,
and recover the optimal assignment $\{b_\ell^*\}_{\ell=1}^L$ by backtracking. Full details are deferred to Appendix~\ref{app:dp}.

\section{Experiments}
\label{sec:Experiments}
\begin{table}[t]
\centering
\caption{Results on Qwen3-32B-Instruct across three challenging benchmarks: MMLU, HumanEval, and GSM8K. Best and second-best results are marked in \textbf{bold} and \underline{underlined}, respectively.}
\resizebox{0.48\textwidth}{!}{%
\tablestyle{2pt}{1.2}
\begin{tabular}{clccccc}
\toprule
\textbf{Model} & \textbf{Method} & \textbf{\#Bits(W)} & \textbf{\#Bits(A)} & \textbf{MMLU} & \textbf{HumanEval} & \textbf{GSM8K} \\
\midrule
\multirow{11}{*}{\makecell{Qwen3-32B \\ -Instruct}}
& FP16    & 16   & 16 & 80.75 & 50.41 & 66.55 \\
\cline{2-7}
& GPTQ    & 3    & \multirow{5}{*}{16} & \underline{73.97} & \underline{41.00} & \underline{53.33} \\
& GPTQ    & 2    &  & 27.10 &  0 &  1.23 \\
& SliM-LLM   & 2MP &  & 55.38 & 23.74 & 35.21 \\
& PB-LLM & 1.7 &  & 48.21 & 12.20 & 24.33 \\
& \textbf{TWLA}    & 1.58 &  &\textbf{75.59} &\textbf{42.99} &\textbf{55.72} \\
\cline{2-7}
& GPTQ    & 3    &  4 & 23.28 &  0 &  1.02 \\
& GPTQ    & 2    & 4  & 23.20 &  0 &  0.13 \\
& SliM-LLM   & 2MP &  4 & 23.71 &  0 &  0.97 \\
& PB-LLM & 1.7 &  4 & 23.12 &  0 &  1.33 \\
& \textbf{TWLA}    & 1.58 & 4MP   & \textbf{70.21} & \textbf{37.58} & \textbf{48.67} \\
\bottomrule
\end{tabular}}
\label{tab:qwen32b_instruct}
\end{table}

\subsection{Experiment setup}
\paragraph{Models and Datasets.}
We conduct experiments on the LLaMA families~\citep{touvron2023llamaopenefficientfoundation} and the Qwen3 family~\citep{qwen3technicalreport}. In addition, we evaluate instruction-tuned variants to further demonstrate the effectiveness of our method. 
Beyond standard perplexity evaluation on WikiText2~\citep{wikitext2} and C4~\citep{c4}, we assess TWLA on a broad set of zero-shot tasks, including ARC-Challenge and ARC-Easy~\citep{clark2018thinksolvedquestionanswering}, HellaSwag~\citep{zellers2019hellaswagmachinereallyfinish}, LAMBADA-openai and LAMBADA-standard~\citep{paperno2016lambadadatasetwordprediction}, PIQA~\citep{bisk2019piqareasoningphysicalcommonsense}, and WinoGrande~\citep{sakaguchi2019winograndeadversarialwinogradschema}. 
We further evaluate TWLA on more challenging reasoning benchmarks, including the multi-domain knowledge task MMLU~\citep{mmlu}, the mathematical reasoning benchmark GSM8K~\citep{gsm8k}, and the code generation benchmark HumanEval~\citep{humaneval}.

\paragraph{Baseline Methods.} 
We compare TWLA with diverse PTQ baselines in the 2-bit and sub-2-bit regimes, covering both weight-only and weight--activation quantization. For weight-only PTQ, we include SliM-LLM~\citep{huang2024slim} as a strong mixed-precision baseline, PB-LLM~\citep{shang2023pbllmpartiallybinarizedlarge} for sub-2-bit quantization (with an average weight bit-width close to ours), and PT$^{2}$-LLM~\citep{yan2025pt2llmposttrainingternarizationlarge} as a representative ternarization method with the same average weight bit-width as TWLA. For weight--activation PTQ, we benchmark ResQ~\citep{saxena2024resq}, which applies mixed-precision activation quantization with an average of about 4.25 bits. We further report GPTQ~\citep{frantar2023gptqaccurateposttrainingquantization} and QuaRot~\citep{ashkboos2024quarot} as widely adopted PTQ baselines for completeness.

\paragraph{Implementation Details.} 
All experiments are conducted on NVIDIA A6000 GPUs. For E2M-ATQ, we perform 15 iterations to ensure the convergence of the ternarization parameters. Following~\citep{zhao2026bwla}, we select 128 calibration samples from WikiText2~\citep{wikitext2}, each with a sequence length of 2048. Based on this sample set, we optimize the parameters in KOTMS for 100 iterations with a fixed learning rate of 0.01, and the same sample set is also employed as the calibration data for the ILA-AMP.

\begin{table}[t]
\centering
\caption{Impact of different components in TWLA.}
\label{tab:method_ablation}
\resizebox{\linewidth}{!}{
\tablestyle{2pt}{1.2}
\begin{tabular}{c |c:c:c| cc|cc}
\toprule
\multirow{2}{*}{\textbf{\makecell{\#Bits \\(A)}}} &\multirow{2}{*}{\textbf{\makecell{E2M- \\ ATQ}}} & \multirow{2}{*}{\textbf{KOTMS}} &\multirow{2}{*}{\textbf{\makecell{ILA- \\ AMP}}}
&\multicolumn{2}{c}{\textbf{LLaMA2-13B}} &\multicolumn{2}{c}{\textbf{Qwen3-14B}} \\

& & & & C4 ($\downarrow$) & MMLU ($\uparrow$) & C4 ($\downarrow$) & MMLU ($\uparrow$) \\
\midrule
\multirow{4}{*}{16}
 & $\times$       & $\times$  & -     & 6e3 &23.01 &1e4 &23.22 \\
 & \checkmark     & $\times$  & -     & 18.32 &30.15 &23.64 &57.64 \\
 & $\times$       & \checkmark & -     & 57.98 &24.12 &102.12 &25.02 \\
 & \checkmark     & \checkmark   & -  & \textbf{8.64} & \textbf{44.86} &
                                  \textbf{17.26} & \textbf{69.15}  \\
\midrule
\multirow{4}{*}{4}
 & \checkmark       & $\times$  & $\times$     & 3e3 &23.76 &2e3 &24.12 \\
 & \checkmark     & \checkmark  & $\times$     & 25.03 &27.52 &31.78 &47.60 \\
 & \checkmark       & $\times$ & \checkmark     & 55.86 &24.01 &390.95 &25.33 \\
 & \checkmark     & \checkmark   & \checkmark  & \textbf{10.07} & \textbf{38.17} &
                                  \textbf{21.00} & \textbf{60.82}  \\
\bottomrule
\end{tabular}}
\end{table}

\begin{figure*}[t!]
    \centering
    \includegraphics[width=0.95\linewidth]{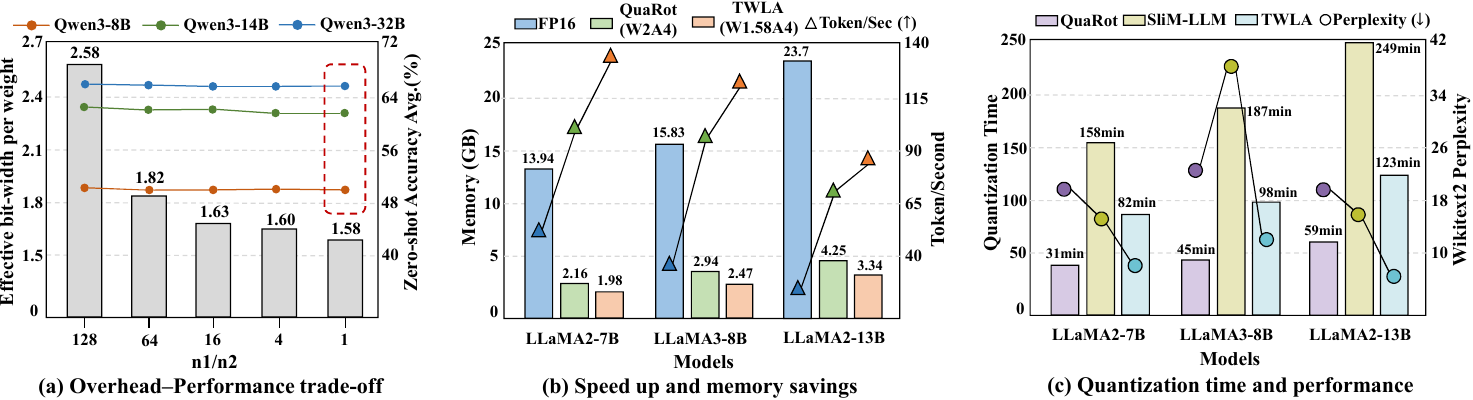}
    \caption{(a) Ablation of the Overhead–Performance trade-off for KOTMS. (b) Comparison of throughput and memory consumption across FP16, QuaRot, and TWLA. (c) Comparison of quantization time and perplexity across QuaRot, SliM-LLM, and TWLA.}
    \label{fig:overhead}
    \vspace{-0.3cm}
\end{figure*}

\subsection{Main Results}

\paragraph{Comparison Results.}
We systematically evaluate ternarization across the LLaMA and Qwen3 families under different activation bit-widths, where all non-ternary PTQ baselines use 2-bit weights. As shown in Table~\ref{tab:main}, TWLA achieves the best performance consistently across model scales. Under A16 setting, TWLA operates at the lowest average weight precision (1.58 bits) yet attains both the lowest WikiText2 perplexity and the highest average accuracy, clearly outperforming 2-bit baselines such as GPTQ, QuaRot, and SliM-LLM, and delivering substantial gains over the current SOTA ternarization method PT$^2$-LLM (e.g., on LLaMA-3-8B, average accuracy improves from 39.04 to 62.98 while WikiText2 PPL drops by 70\%). Under the more challenging A4 setting, prior methods generally suffer perplexity explosion, whereas TWLA remains stable and further surpasses ResQ, which also employs mixed-precision activations; on LLaMA-2-70B, TWLA raises average accuracy from 56.43 to 71.10 (exceeding 92\% of FP16) while reducing memory consumption by over 30\% relative to ResQ. Full expanded results are provided in Appendix~\ref{app:detailed main results}.

\paragraph{Experiments of Instruction-tuned Models.}
Instruction tuning substantially improves the practical utility of LLMs, but it also makes PTQ considerably more challenging. We evaluate Qwen3-32B-Instruct on three challenging benchmarks (Table \ref{tab:qwen32b_instruct}). With FP16 activations, TWLA retains roughly 85\% of FP16 performance, recovering most of the reasoning capability of the model. Notably, TWLA even outperforms 3-bit GPTQ while using roughly half the memory, yielding a favorable accuracy--efficiency trade-off.
When activations are further quantized to 4-bit, existing methods largely collapse: MMLU accuracy drops to near random-guessing level (around 25\%), while HumanEval and GSM8K fall to almost zero. In contrast, TWLA preserves about 90\% of its FP16-activation performance, indicating strong robustness and high quantization quality in the low-bit activation regime.

\subsection{Ablation Studies}
TWLA consists of three modules (E2M-ATQ, KOTMS, and ILA-AMP) designed to reduce quantization error. Our ablations include (i) evaluating the individual contribution of each module and (ii) ablating the orthogonal matrix size in KOTMS to examine the overhead–performance trade-off. We provide calibration-data ablations in Appendix~\ref{app:dataset} (seed, size, and type), and report additional ILA-AMP analyses on sensitivity metrics, layer-wise bit allocations, and interaction-order cost--benefit trade-offs in Appendix~\ref{app:amp}.

\paragraph{Modular Sensitivity Study.}
We report C4 perplexity and MMLU accuracy to isolate the effects of each component (Table~\ref{tab:method_ablation}). Without activation quantization, either E2M-ATQ or KOTMS alone improves both metrics, with larger gains from E2M-ATQ; combining them performs best (e.g., Qwen3-14B gains $\sim$45\% MMLU). Under 4-bit activations, using only E2M-ATQ+KOTMS degrades accuracy, whereas adding ILA-AMP recovers a further $\sim$12\% MMLU on LLaMA2-13B and Qwen3-14B, bringing results close to the no-activation-quantization setting. These results highlight the strong synergy among E2M-ATQ, KOTMS, and ILA-AMP, and underscore the necessity of using them together.


\paragraph{Overhead–Performance Trade-off.}
We ablate the orthogonal matrix size in KOTMS to quantify the overhead--performance trade-off (Fig.~\ref{fig:overhead}(a)). By varying the Kronecker dimensions $n_1$ and $n_2$ to control the structure of the orthogonal transform, we find that making them more balanced (smaller $n_1/n_2$) sharply reduces KOTMS’s effective weight-bit overhead with only minor accuracy loss. On the Qwen3 family, higher symmetry reduces the OKT overhead from roughly 1 bit to a negligible level ($<$0.01 bit), with the average accuracy dropping by $<2\%$. We therefore choose $n_1$ and $n_2$ to be as close as possible in all experiments.

\subsection{Efficiency Analysis of TWLA}
\paragraph{Speedup and Memory Savings.} 
We evaluate the inference efficiency of TWLA on the LLaMA family with \texttt{llama.cpp}\footnote{\url{https://github.com/ggml-org/llama.cpp}}
on NVIDIA RTX A6000 GPUs, leveraging its native ternary operators for optimized low-bit kernels. We compare FP16, QuaRot (W2A4), and TWLA (W1.58A4) in throughput and memory. Under batch size 4 with 128-token prefill and 256-token decoding, Fig.~\ref{fig:overview}(b) shows that ternary weights further reduce memory and improve speed beyond 2-bit weights. On LLaMA2-13B, QuaRot improves throughput from 23.70 tokens/s (FP16) to 66.42 tokens/s, while TWLA further increases it to 86.35 tokens/s, yielding a 3.64$\times$ speedup over FP16 and a 1.3$\times$ speedup over QuaRot. Meanwhile, the parameter memory is reduced from 23.7GB to 3.34GB by packing five ternary values into one 8-bit integer (effective $\sim$1.6 bits/weight), saving $>$ 80\% of the storage. These results highlight the substantial efficiency gains brought by TWLA.

\vspace{-0.1cm}
 
\paragraph{Quantization Time Comparison.}
As shown in Fig.~\ref{fig:overview}(c), TWLA exhibits a clear performance--efficiency trade-off. Compressing LLaMA2-7B takes only 82 minutes, substantially faster than SliM-LLM (158 minutes). Although TWLA is slightly slower than QuaRot, the accuracy gain it delivers (an average reduction of about 12 points in WikiText2 perplexity across the LLaMA family) makes its overhead moderate and practically acceptable.

\section{Conclusion}
\label{sec:conclusion}
In this work, we propose TWLA, a PTQ framework that achieves 1.58-bit weight compression together with 4-bit activation quantization while maintaining high accuracy. TWLA targets the key bottlenecks in this extreme quantization regime by systematically mitigating the mismatch between weight distributions and ternary codebooks and suppressing heavy-tailed activation outliers, thereby achieving a strong balance between efficiency and accuracy in a highly challenging setting. Extensive experiments show that TWLA provides a practical PTQ solution for ternary weights with low-bit activations and offers a scalable path toward efficient LLM inference.

\clearpage

\section*{Impact Statement}
TWLA is a post-training quantization (PTQ) framework that enables accurate inference with ternary weights and low-bit activations, substantially reducing memory footprint and improving throughput for large language models. This capability can lower the cost and energy required to deploy LLMs, and may broaden access to LLM-based applications in resource-constrained environments (e.g., on-device or edge deployment), potentially benefiting privacy-sensitive and latency-critical use cases.
Although our approach aims to make LLMs more accessible and widely used, it does not address the potential risks of misuse for malicious purposes. To mitigate these risks, a strong commitment to user data protection, clear ethical guidelines, and transparency mechanisms is essential.

\nocite{langley00}

\bibliography{ref}

@inproceedings{langley00,
 author    = {P. Langley},
 title     = {Crafting Papers on Machine Learning},
 year      = {2000},
 pages     = {1207--1216},
 editor    = {Pat Langley},
 booktitle     = {Proceedings of the 17th International Conference
              on Machine Learning (ICML 2000)},
 address   = {Stanford, CA},
 publisher = {Morgan Kaufmann}
}

@misc{li2022ternaryweightnetworks,
      title={Ternary Weight Networks}, 
      author={Fengfu Li and Bin Liu and Xiaoxing Wang and Bo Zhang and Junchi Yan},
      year={2022},
      eprint={1605.04711},
      archivePrefix={arXiv},
      primaryClass={cs.CV},
      url={https://arxiv.org/abs/1605.04711}, 
}

@misc{zhu2017trainedternaryquantization,
      title={Trained Ternary Quantization}, 
      author={Chenzhuo Zhu and Song Han and Huizi Mao and William J. Dally},
      year={2017},
      eprint={1612.01064},
      archivePrefix={arXiv},
      primaryClass={cs.LG},
      url={https://arxiv.org/abs/1612.01064}, 
}

@misc{zhang2020ternarybertdistillationawareultralowbit,
      title={TernaryBERT: Distillation-aware Ultra-low Bit BERT}, 
      author={Wei Zhang and Lu Hou and Yichun Yin and Lifeng Shang and Xiao Chen and Xin Jiang and Qun Liu},
      year={2020},
      eprint={2009.12812},
      archivePrefix={arXiv},
      primaryClass={cs.CL},
      url={https://arxiv.org/abs/2009.12812}, 
}

@InProceedings{Wang_2018_CVPR,
author = {Wang, Peisong and Hu, Qinghao and Zhang, Yifan and Zhang, Chunjie and Liu, Yang and Cheng, Jian},
title = {Two-Step Quantization for Low-Bit Neural Networks},
booktitle = {Proceedings of the IEEE Conference on Computer Vision and Pattern Recognition (CVPR)},
month = {June},
year = {2018}
}

@INPROCEEDINGS{7966166,
  author={Alemdar, Hande and Leroy, Vincent and Prost-Boucle, Adrien and Pétrot, Frédéric},
  booktitle={2017 International Joint Conference on Neural Networks (IJCNN)}, 
  title={Ternary neural networks for resource-efficient AI applications}, 
  year={2017},
  volume={},
  number={},
  pages={2547-2554},
  doi={10.1109/IJCNN.2017.7966166}}

@misc{chen2024ternaryllmternarizedlargelanguage,
      title={TernaryLLM: Ternarized Large Language Model}, 
      author={Tianqi Chen and Zhe Li and Weixiang Xu and Zeyu Zhu and Dong Li and Lu Tian and Emad Barsoum and Peisong Wang and Jian Cheng},
      year={2024},
      eprint={2406.07177},
      archivePrefix={arXiv},
      primaryClass={cs.LG},
      url={https://arxiv.org/abs/2406.07177}, 
}

@misc{yan2025pt2llmposttrainingternarizationlarge,
      title={PT$^2$-LLM: Post-Training Ternarization for Large Language Models}, 
      author={Xianglong Yan and Chengzhu Bao and Zhiteng Li and Tianao Zhang and Kaicheng Yang and Haotong Qin and Ruobing Xie and Xingwu Sun and Yulun Zhang},
      year={2025},
      eprint={2510.03267},
      archivePrefix={arXiv},
      primaryClass={cs.LG},
      url={https://arxiv.org/abs/2510.03267}, 
}

@misc{wang2025bitnetv2native4bit,
      title={BitNet v2: Native 4-bit Activations with Hadamard Transformation for 1-bit LLMs}, 
      author={Hongyu Wang and Shuming Ma and Furu Wei},
      year={2025},
      eprint={2504.18415},
      archivePrefix={arXiv},
      primaryClass={cs.CL},
      url={https://arxiv.org/abs/2504.18415}, 
}

@inproceedings{Guan_2024, series={DAC ’24},
   title={APTQ: Attention-aware Post-Training Mixed-Precision Quantization for Large Language Models},
   url={http://dx.doi.org/10.1145/3649329.3658498},
   DOI={10.1145/3649329.3658498},
   booktitle={Proceedings of the 61st ACM/IEEE Design Automation Conference},
   publisher={ACM},
   author={Guan, Ziyi and Huang, Hantao and Su, Yupeng and Huang, Hong and Wong, Ngai and Yu, Hao},
   year={2024},
   month=jun, pages={1–6},
   collection={DAC ’24} }

@misc{dettmers2023spqrsparsequantizedrepresentationnearlossless,
      title={SpQR: A Sparse-Quantized Representation for Near-Lossless LLM Weight Compression}, 
      author={Tim Dettmers and Ruslan Svirschevski and Vage Egiazarian and Denis Kuznedelev and Elias Frantar and Saleh Ashkboos and Alexander Borzunov and Torsten Hoefler and Dan Alistarh},
      year={2023},
      eprint={2306.03078},
      archivePrefix={arXiv},
      primaryClass={cs.CL},
      url={https://arxiv.org/abs/2306.03078}, 
}

@misc{shang2023pbllmpartiallybinarizedlarge,
      title={PB-LLM: Partially Binarized Large Language Models}, 
      author={Yuzhang Shang and Zhihang Yuan and Qiang Wu and Zhen Dong},
      year={2023},
      eprint={2310.00034},
      archivePrefix={arXiv},
      primaryClass={cs.LG},
      url={https://arxiv.org/abs/2310.00034}, 
}

@inproceedings{li2023llm,
  title={Llm-mq: Mixed-precision quantization for efficient llm deployment},
  author={Li, Shiyao and Ning, Xuefei and Hong, Ke and Liu, Tengxuan and Wang, Luning and Li, Xiuhong and Zhong, Kai and Dai, Guohao and Yang, Huazhong and Wang, Yu},
  booktitle={The Efficient Natural Language and Speech Processing Workshop with NeurIPS},
  volume={9},
  pages={3},
  year={2023}
}

@article{huang2024slim,
  title={SliM-LLM: Salience-driven mixed-precision quantization for large language models},
  author={Huang, Wei and Qin, Haotong and Liu, Yangdong and Li, Yawei and Liu, Qinshuo and Liu, Xianglong and Benini, Luca and Magno, Michele and Zhang, Shiming and Qi, Xiaojuan},
  journal={arXiv preprint arXiv:2405.14917},
  year={2024}
}

@article{saxena2024resq,
  title={Resq: Mixed-precision quantization of large language models with low-rank residuals},
  author={Saxena, Utkarsh and Sharify, Sayeh and Roy, Kaushik and Wang, Xin},
  journal={arXiv preprint arXiv:2412.14363},
  year={2024}
}

@article{li2024arb,
  title={Arb-llm: Alternating refined binarizations for large language models},
  author={Li, Zhiteng and Yan, Xianglong and Zhang, Tianao and Qin, Haotong and Xie, Dong and Tian, Jiang and Kong, Linghe and Zhang, Yulun and Yang, Xiaokang and others},
  journal={arXiv preprint arXiv:2410.03129},
  year={2024}
}

@article{ye2025dbellquant,
  title={DBellQuant: Breaking the Bell with Double-Bell Transformation for LLMs Post Training Binarization},
  author={Ye, Zijian and Huang, Wei and Yu, Yifei and Ren, Tianhe and Wang, Zhongrui and Qi, Xiaojuan},
  journal={arXiv preprint arXiv:2507.01027},
  year={2025}
}

@article{Guo_2025,
   title={DeepSeek-R1 incentivizes reasoning in LLMs through reinforcement learning},
   volume={645},
   ISSN={1476-4687},
   url={http://dx.doi.org/10.1038/s41586-025-09422-z},
   DOI={10.1038/s41586-025-09422-z},
   number={8081},
   journal={Nature},
   publisher={Springer Science and Business Media LLC},
   author={Guo, Daya and Yang, Dejian and Zhang, Haowei and Song, Junxiao and Wang, Peiyi and Zhu, Qihao and Xu, Runxin and Zhang, Ruoyu and Ma, Shirong and Bi, Xiao and Zhang, Xiaokang and Yu, Xingkai and Wu, Yu and Wu, Z. F. and Gou, Zhibin and Shao, Zhihong and Li, Zhuoshu and Gao, Ziyi and Liu, Aixin and Xue, Bing and Wang, Bingxuan and Wu, Bochao and Feng, Bei and Lu, Chengda and Zhao, Chenggang and Deng, Chengqi and Ruan, Chong and Dai, Damai and Chen, Deli and Ji, Dongjie and Li, Erhang and Lin, Fangyun and Dai, Fucong and Luo, Fuli and Hao, Guangbo and Chen, Guanting and Li, Guowei and Zhang, H. and Xu, Hanwei and Ding, Honghui and Gao, Huazuo and Qu, Hui and Li, Hui and Guo, Jianzhong and Li, Jiashi and Chen, Jingchang and Yuan, Jingyang and Tu, Jinhao and Qiu, Junjie and Li, Junlong and Cai, J. L. and Ni, Jiaqi and Liang, Jian and Chen, Jin and Dong, Kai and Hu, Kai and You, Kaichao and Gao, Kaige and Guan, Kang and Huang, Kexin and Yu, Kuai and Wang, Lean and Zhang, Lecong and Zhao, Liang and Wang, Litong and Zhang, Liyue and Xu, Lei and Xia, Leyi and Zhang, Mingchuan and Zhang, Minghua and Tang, Minghui and Zhou, Mingxu and Li, Meng and Wang, Miaojun and Li, Mingming and Tian, Ning and Huang, Panpan and Zhang, Peng and Wang, Qiancheng and Chen, Qinyu and Du, Qiushi and Ge, Ruiqi and Zhang, Ruisong and Pan, Ruizhe and Wang, Runji and Chen, R. J. and Jin, R. L. and Chen, Ruyi and Lu, Shanghao and Zhou, Shangyan and Chen, Shanhuang and Ye, Shengfeng and Wang, Shiyu and Yu, Shuiping and Zhou, Shunfeng and Pan, Shuting and Li, S. S. and Zhou, Shuang and Wu, Shaoqing and Yun, Tao and Pei, Tian and Sun, Tianyu and Wang, T. and Zeng, Wangding and Liu, Wen and Liang, Wenfeng and Gao, Wenjun and Yu, Wenqin and Zhang, Wentao and Xiao, W. L. and An, Wei and Liu, Xiaodong and Wang, Xiaohan and Chen, Xiaokang and Nie, Xiaotao and Cheng, Xin and Liu, Xin and Xie, Xin and Liu, Xingchao and Yang, Xinyu and Li, Xinyuan and Su, Xuecheng and Lin, Xuheng and Li, X. Q. and Jin, Xiangyue and Shen, Xiaojin and Chen, Xiaosha and Sun, Xiaowen and Wang, Xiaoxiang and Song, Xinnan and Zhou, Xinyi and Wang, Xianzu and Shan, Xinxia and Li, Y. K. and Wang, Y. Q. and Wei, Y. X. and Zhang, Yang and Xu, Yanhong and Li, Yao and Zhao, Yao and Sun, Yaofeng and Wang, Yaohui and Yu, Yi and Zhang, Yichao and Shi, Yifan and Xiong, Yiliang and He, Ying and Piao, Yishi and Wang, Yisong and Tan, Yixuan and Ma, Yiyang and Liu, Yiyuan and Guo, Yongqiang and Ou, Yuan and Wang, Yuduan and Gong, Yue and Zou, Yuheng and He, Yujia and Xiong, Yunfan and Luo, Yuxiang and You, Yuxiang and Liu, Yuxuan and Zhou, Yuyang and Zhu, Y. X. and Huang, Yanping and Li, Yaohui and Zheng, Yi and Zhu, Yuchen and Ma, Yunxian and Tang, Ying and Zha, Yukun and Yan, Yuting and Ren, Z. Z. and Ren, Zehui and Sha, Zhangli and Fu, Zhe and Xu, Zhean and Xie, Zhenda and Zhang, Zhengyan and Hao, Zhewen and Ma, Zhicheng and Yan, Zhigang and Wu, Zhiyu and Gu, Zihui and Zhu, Zijia and Liu, Zijun and Li, Zilin and Xie, Ziwei and Song, Ziyang and Pan, Zizheng and Huang, Zhen and Xu, Zhipeng and Zhang, Zhongyu and Zhang, Zhen},
   year={2025},
   month=sep, pages={633–638} }

@misc{touvron2023llamaopenefficientfoundation,
      title={LLaMA: Open and Efficient Foundation Language Models}, 
      author={Hugo Touvron and Thibaut Lavril and Gautier Izacard and Xavier Martinet and Marie-Anne Lachaux and Timothée Lacroix and Baptiste Rozière and Naman Goyal and Eric Hambro and Faisal Azhar and Aurelien Rodriguez and Armand Joulin and Edouard Grave and Guillaume Lample},
      year={2023},
      eprint={2302.13971},
      archivePrefix={arXiv},
      primaryClass={cs.CL},
      url={https://arxiv.org/abs/2302.13971}, 
}

@misc{clark2018thinksolvedquestionanswering,
      title={Think you have Solved Question Answering? Try ARC, the AI2 Reasoning Challenge}, 
      author={Peter Clark and Isaac Cowhey and Oren Etzioni and Tushar Khot and Ashish Sabharwal and Carissa Schoenick and Oyvind Tafjord},
      year={2018},
      eprint={1803.05457},
      archivePrefix={arXiv},
      primaryClass={cs.AI},
      url={https://arxiv.org/abs/1803.05457}, 
}

@misc{zellers2019hellaswagmachinereallyfinish,
      title={HellaSwag: Can a Machine Really Finish Your Sentence?}, 
      author={Rowan Zellers and Ari Holtzman and Yonatan Bisk and Ali Farhadi and Yejin Choi},
      year={2019},
      eprint={1905.07830},
      archivePrefix={arXiv},
      primaryClass={cs.CL},
      url={https://arxiv.org/abs/1905.07830}, 
}

@misc{paperno2016lambadadatasetwordprediction,
      title={The LAMBADA dataset: Word prediction requiring a broad discourse context}, 
      author={Denis Paperno and Germán Kruszewski and Angeliki Lazaridou and Quan Ngoc Pham and Raffaella Bernardi and Sandro Pezzelle and Marco Baroni and Gemma Boleda and Raquel Fernández},
      year={2016},
      eprint={1606.06031},
      archivePrefix={arXiv},
      primaryClass={cs.CL},
      url={https://arxiv.org/abs/1606.06031}, 
}

@misc{bisk2019piqareasoningphysicalcommonsense,
      title={PIQA: Reasoning about Physical Commonsense in Natural Language}, 
      author={Yonatan Bisk and Rowan Zellers and Ronan Le Bras and Jianfeng Gao and Yejin Choi},
      year={2019},
      eprint={1911.11641},
      archivePrefix={arXiv},
      primaryClass={cs.CL},
      url={https://arxiv.org/abs/1911.11641}, 
}

@misc{sakaguchi2019winograndeadversarialwinogradschema,
      title={WinoGrande: An Adversarial Winograd Schema Challenge at Scale}, 
      author={Keisuke Sakaguchi and Ronan Le Bras and Chandra Bhagavatula and Yejin Choi},
      year={2019},
      eprint={1907.10641},
      archivePrefix={arXiv},
      primaryClass={cs.CL},
      url={https://arxiv.org/abs/1907.10641}, 
}

@article{wikitext2,
  title={Pointer sentinel mixture models},
  author={Merity, Stephen and Xiong, Caiming and Bradbury, James and Socher, Richard},
  journal={arXiv preprint arXiv:1609.07843},
  year={2016}
}

@misc{mmlu,
      title={Measuring Massive Multitask Language Understanding}, 
      author={Dan Hendrycks and Collin Burns and Steven Basart and Andy Zou and Mantas Mazeika and Dawn Song and Jacob Steinhardt},
      year={2021},
      eprint={2009.03300},
      archivePrefix={arXiv},
      primaryClass={cs.CY},
      url={https://arxiv.org/abs/2009.03300}, 
}

@misc{humaneval,
      title={Evaluating Large Language Models Trained on Code}, 
      author={Mark Chen and Jerry Tworek and Heewoo Jun and Qiming Yuan and Henrique Ponde de Oliveira Pinto and Jared Kaplan and Harri Edwards and Yuri Burda and Nicholas Joseph and Greg Brockman and Alex Ray and Raul Puri and Gretchen Krueger and Michael Petrov and Heidy Khlaaf and Girish Sastry and Pamela Mishkin and Brooke Chan and Scott Gray and Nick Ryder and Mikhail Pavlov and Alethea Power and Lukasz Kaiser and Mohammad Bavarian and Clemens Winter and Philippe Tillet and Felipe Petroski Such and Dave Cummings and Matthias Plappert and Fotios Chantzis and Elizabeth Barnes and Ariel Herbert-Voss and William Hebgen Guss and Alex Nichol and Alex Paino and Nikolas Tezak and Jie Tang and Igor Babuschkin and Suchir Balaji and Shantanu Jain and William Saunders and Christopher Hesse and Andrew N. Carr and Jan Leike and Josh Achiam and Vedant Misra and Evan Morikawa and Alec Radford and Matthew Knight and Miles Brundage and Mira Murati and Katie Mayer and Peter Welinder and Bob McGrew and Dario Amodei and Sam McCandlish and Ilya Sutskever and Wojciech Zaremba},
      year={2021},
      eprint={2107.03374},
      archivePrefix={arXiv},
      primaryClass={cs.LG},
      url={https://arxiv.org/abs/2107.03374}, 
}

@misc{gsm8k,
      title={Training Verifiers to Solve Math Word Problems}, 
      author={Karl Cobbe and Vineet Kosaraju and Mohammad Bavarian and Mark Chen and Heewoo Jun and Lukasz Kaiser and Matthias Plappert and Jerry Tworek and Jacob Hilton and Reiichiro Nakano and Christopher Hesse and John Schulman},
      year={2021},
      eprint={2110.14168},
      archivePrefix={arXiv},
      primaryClass={cs.LG},
      url={https://arxiv.org/abs/2110.14168}, 
}

@misc{c4,
      title={Exploring the Limits of Transfer Learning with a Unified Text-to-Text Transformer}, 
      author={Colin Raffel and Noam Shazeer and Adam Roberts and Katherine Lee and Sharan Narang and Michael Matena and Yanqi Zhou and Wei Li and Peter J. Liu},
      year={2023},
      eprint={1910.10683},
      archivePrefix={arXiv},
      primaryClass={cs.LG},
      url={https://arxiv.org/abs/1910.10683}, 
}

@misc{qwen3technicalreport,
      title={Qwen3 Technical Report}, 
      author={An Yang and Anfeng Li and Baosong Yang and Beichen Zhang and Binyuan Hui and Bo Zheng and Bowen Yu and Chang Gao and Chengen Huang and Chenxu Lv and Chujie Zheng and Dayiheng Liu and Fan Zhou and Fei Huang and Feng Hu and Hao Ge and Haoran Wei and Huan Lin and Jialong Tang and Jian Yang and Jianhong Tu and Jianwei Zhang and Jianxin Yang and Jiaxi Yang and Jing Zhou and Jingren Zhou and Junyang Lin and Kai Dang and Keqin Bao and Kexin Yang and Le Yu and Lianghao Deng and Mei Li and Mingfeng Xue and Mingze Li and Pei Zhang and Peng Wang and Qin Zhu and Rui Men and Ruize Gao and Shixuan Liu and Shuang Luo and Tianhao Li and Tianyi Tang and Wenbiao Yin and Xingzhang Ren and Xinyu Wang and Xinyu Zhang and Xuancheng Ren and Yang Fan and Yang Su and Yichang Zhang and Yinger Zhang and Yu Wan and Yuqiong Liu and Zekun Wang and Zeyu Cui and Zhenru Zhang and Zhipeng Zhou and Zihan Qiu},
      year={2025},
      eprint={2505.09388},
      archivePrefix={arXiv},
      primaryClass={cs.CL},
      url={https://arxiv.org/abs/2505.09388}, 
}

@misc{frantar2023gptqaccurateposttrainingquantization,
      title={GPTQ: Accurate Post-Training Quantization for Generative Pre-trained Transformers}, 
      author={Elias Frantar and Saleh Ashkboos and Torsten Hoefler and Dan Alistarh},
      year={2023},
      eprint={2210.17323},
      archivePrefix={arXiv},
      primaryClass={cs.LG},
      url={https://arxiv.org/abs/2210.17323}, 
}

@article{ashkboos2024quarot,
  title={Quarot: Outlier-free 4-bit inference in rotated llms},
  author={Ashkboos, Saleh and Mohtashami, Amirkeivan and Croci, Maximilian L and Li, Bo and Cameron, Pashmina and Jaggi, Martin and Alistarh, Dan and Hoefler, Torsten and Hensman, James},
  journal={Advances in Neural Information Processing Systems},
  volume={37},
  pages={100213--100240},
  year={2024}
}

@inproceedings{zhao2026specquant,
  title={Specquant: Spectral decomposition and adaptive truncation for ultra-low-bit llms quantization},
  author={Zhao, Zhixiong and Liu, Fangxin and Wang, Junjie and Guan, Chenyang and Wang, Zongwu and Jiang, Li and Guan, Haibing},
  booktitle={Proceedings of the AAAI Conference on Artificial Intelligence},
  volume={40},
  number={34},
  pages={28786--28794},
  year={2026}
}

@inproceedings{zhao2025quark,
  title={QUARK: Quantization-Enabled Circuit Sharing for Transformer Acceleration by Exploiting Common Patterns in Nonlinear Operations},
  author={Zhao, Zhixiong and Li, Haomin and Liu, Fangxin and Lu, Yuncheng and Wang, Zongwu and Yang, Tao and Jiang, Li and Guan, Haibing},
  booktitle={2025 IEEE/ACM International Conference On Computer Aided Design (ICCAD)},
  pages={1--9},
  year={2025},
  organization={IEEE}
}

@article{xu2026kbvq,
  title={KBVQ-MoE: KLT-guided SVD with Bias-Corrected Vector Quantization for MoE Large Language Models},
  author={Xu, Zukang and Zhao, Zhixiong and Hu, Xing and Chen, Zhixuan and Yang, Dawei},
  journal={arXiv preprint arXiv:2602.11184},
  year={2026}
}

@article{zhao2026bwla,
  title={Bwla: Breaking the barrier of w1ax post-training quantization for llms},
  author={Zhao, Zhixiong and Xu, Zukang and Yang, Dawei},
  journal={arXiv preprint arXiv:2605.00422},
  year={2026}
}

@misc{liu2025bimacosrbinaryonestepdiffusion,
      title={BiMaCoSR: Binary One-Step Diffusion Model Leveraging Flexible Matrix Compression for Real Super-Resolution}, 
      author={Kai Liu and Kaicheng Yang and Zheng Chen and Zhiteng Li and Yong Guo and Wenbo Li and Linghe Kong and Yulun Zhang},
      year={2025},
      eprint={2502.00333},
      archivePrefix={arXiv},
      primaryClass={cs.CV},
      url={https://arxiv.org/abs/2502.00333}, 
}

@misc{yang2025robuqpushingditsw158a2,
      title={RobuQ: Pushing DiTs to W1.58A2 via Robust Activation Quantization}, 
      author={Kaicheng Yang and Xun Zhang and Haotong Qin and Yucheng Lin and Kaisen Yang and Xianglong Yan and Yulun Zhang},
      year={2025},
      eprint={2509.23582},
      archivePrefix={arXiv},
      primaryClass={cs.CV},
      url={https://arxiv.org/abs/2509.23582}, 
}

@misc{yang2025treeqpushingquantizationboundary,
      title={TreeQ: Pushing the Quantization Boundary of Diffusion Transformer via Tree-Structured Mixed-Precision Search}, 
      author={Kaicheng Yang and Kaisen Yang and Baiting Wu and Xun Zhang and Qianrui Yang and Haotong Qin and He Zhang and Yulun Zhang},
      year={2025},
      eprint={2512.06353},
      archivePrefix={arXiv},
      primaryClass={cs.CV},
      url={https://arxiv.org/abs/2512.06353}, 
}

@misc{gul2026flrqfasterllmquantization,
      title={FLRQ: Faster LLM Quantization with Flexible Low-Rank Matrix Sketching}, 
      author={Hongyaoxing Gul and Lijuan Hu and Shuzi Niu and Fangfang Liu},
      year={2026},
      eprint={2601.05684},
      archivePrefix={arXiv},
      primaryClass={cs.LG},
      url={https://arxiv.org/abs/2601.05684}, 
}

@misc{gu2026loproenhancinglowrankquantization,
      title={LoPRo: Enhancing Low-Rank Quantization via Permuted Block-Wise Rotation}, 
      author={Hongyaoxing Gu and Lijuan Hu and Liye Yu and Haowei Li and Fangfang Liu},
      year={2026},
      eprint={2601.19675},
      archivePrefix={arXiv},
      primaryClass={cs.LG},
      url={https://arxiv.org/abs/2601.19675}, 
}

@misc{xiao2025singlequantefficientquantizationlarge,
      title={SingleQuant: Efficient Quantization of Large Language Models in a Single Pass}, 
      author={Jinying Xiao and Bin Ji and Shasha Li and Xiaodong Liu and Ma Jun and Ye Zhong and Wei Li and Xuan Xie and Qingbo Wu and Jie Yu},
      year={2025},
      eprint={2511.22316},
      archivePrefix={arXiv},
      primaryClass={cs.LG},
      url={https://arxiv.org/abs/2511.22316}, 
}

@inproceedings{li2024towards,
  title={Towards effective data-free knowledge distillation via diverse diffusion augmentation},
  author={Li, Muquan and Zhang, Dongyang and He, Tao and Xie, Xiurui and Li, Yuan-Fang and Qin, Ke},
  booktitle={Proceedings of the 32nd ACM International Conference on Multimedia},
  pages={4416--4425},
  year={2024}
}

@inproceedings{li2025adaptive,
  title={Adaptive dataset quantization},
  author={Li, Muquan and Zhang, Dongyang and Dong, Qiang and Xie, Xiurui and Qin, Ke},
  booktitle={Proceedings of the AAAI Conference on Artificial Intelligence},
  volume={39},
  number={11},
  pages={12093--12101},
  year={2025}
}

@inproceedings{zhu2025pathology,
  title={Pathology-Aware Prototype Evolution via LLM-Driven Semantic Disambiguation for Multicenter Diabetic Retinopathy Diagnosis},
  author={Zhu, Chunzheng and Lin, Yangfang and Shao, Jialin and Lin, Jianxin and Wang, Yijun},
  booktitle={Proceedings of the 33rd ACM International Conference on Multimedia},
  pages={9196--9205},
  year={2025}
}
\bibliographystyle{icml2026}

\newpage
\appendix
\onecolumn
\section*{Appendix Overview}
\begin{itemize}
    \item Section~\ref{app:pseudocode}: Pseudocode of TWLA.
    \item Section~\ref{sec:proof}: Detailed Proofs and Derivations.
    \begin{itemize}
        \item Section~\ref{app:mono_decrease}: Monotonic Decrease of the Euclidean Initialization Updates.
        \item Section~\ref{app:e2m_rowwise_normal_eq}: Row-wise Normal Equations and Numerical Stabilization for Metric-Aware Relocation.
        \item Section~\ref{app:kron_complexity}: Storage and Computational Complexity of Kronecker-Structured Orthogonal Transforms.
        \item Section~\ref{app:act_outlier}: Why Orthogonal Mixing Suppresses Activation Outliers.
        \item Section~\ref{app:kotms-zero-constraint}: Zero-Peak Mass Regularization for TriGMM.
        \item Section~\ref{app:kotms-soft-proj}: Small-Variance Limit: TriGMM as a Soft Projection.
        \item Section~\ref{app:adjacent_only}: Why Modeling Only Adjacent-Layer Interactions is Sufficient.
        \item Section~\ref{app:dp}: Dynamic Programming for Chain-Structured Mixed-Precision Allocation.
    \end{itemize}
    \item Section~\ref{app:results}: Additional Experimental Results.
    \begin{itemize}
        \item Section~\ref{app:detailed main results}: More Detailed Results.
        \item Section~\ref{app:dataset}: Ablation study on Calibration Data.
        \item Section~\ref{app:amp}: Ablation study on ILA-AMP.
    \end{itemize}
    \item Section~\ref{app:distribution}: Distribution Visualizations.
    \begin{itemize}
        \item Section~\ref{app:vis_kotms_single_layer}: Single-layer visualization of KOTMS.
        \item Section~\ref{app:vis_kotms_cross_layer}: Cross-layer heterogeneity in activation quantizability after KOTMS.
    \end{itemize}
\end{itemize}

\section{Pseudocode of TWLA}
\label{app:pseudocode}

For completeness, this section provides the detailed pseudocode of our TWLA framework. TWLA is a unified post-training quantization (PTQ) pipeline for large language models, aiming to jointly enable 1.58-bit ternary weights and low-bit activations (e.g., 2–6-bit mixed precision) while maintaining competitive accuracy.

As described in Section~\ref{sec:method}, TWLA consists of three key modules, highlighted in the pseudocode via color-coded comment blocks:
(1) KOTMS (Kronecker Orthogonal Tri-Modal Shaping): For each layer, we construct a structured rotation matrix $R_\ell = R_{1,\ell}\otimes R_{2,\ell}$ from two lightweight Kronecker-structured orthogonal factors $R_{1,\ell}\in\mathbb{R}^{m_1\times m_1}$ and $R_{2,\ell}\in\mathbb{R}^{m_2\times m_2}$. To obtain a suitable decomposition, we first factorize the rotation dimension into $(m_1,m_2)$ using Algorithm~\ref{alg:kron_factor_twla_alg}, enabling efficient construction of the orthogonal Kronecker structure. KOTMS performs function-preserving orthogonal mixing to reshape the typically unimodal, near-zero-concentrated and heavy-tailed weights into a tri-modal form that better matches ternary codebooks, and it statistically suppresses activation heavy tails and outliers through the same orthogonal mixing. The full optimization procedure is summarized in Algorithm~\ref{alg:kotms_alg}.

\begin{algorithm}[h!]
\caption{Kronecker Dimension Factorization (for KOTMS)}
\label{alg:kron_factor_twla_alg}
\begin{algorithmic}[1]
\REQUIRE Hidden dimension $d$ to be rotated (typically $d=m$ for $W\in\mathbb{R}^{n\times m}$), $d\ge 1$
\ENSURE Factor pair $(d_1,d_2)$ such that $d=d_1d_2$ and $d_1\approx d_2$
\STATE $a \leftarrow \lfloor \sqrt{d}\rfloor$ \hfill // start near $\sqrt{d}$
\WHILE{$a>1$ \AND $d \bmod a \ne 0$}
    \STATE $a \leftarrow a-1$ \hfill // search closest divisor below $\sqrt{d}$
\ENDWHILE
\STATE $d_2 \leftarrow a$
\STATE $d_1 \leftarrow d/d_2$
\STATE \textbf{return} $(d_1,d_2)$
\end{algorithmic}
\end{algorithm}

\begin{algorithm}[h!]
\caption{KOTMS: Kronecker Orthogonal Tri-Modal Shaping}
\label{alg:kotms_alg}
\begin{algorithmic}[1]
\REQUIRE Weight matrix $W\in\mathbb{R}^{n\times m}$; target sparsity $\rho$; prior $\pi_0$; penalty $\beta$
\REQUIRE Steps $T_{\mathrm{kotms}}$; step size $\eta$; numerical floor $\sigma_{\min}$
\ENSURE Kronecker orthogonal factors $(R_1,R_2)$ and implicit rotation $R=R_1\otimes R_2$

\STATE $\pi_+ \leftarrow (1-\pi_0)/2$, \quad $\pi_- \leftarrow (1-\pi_0)/2$
\STATE $(m_1,m_2)\leftarrow \textsc{KroneckerFactorize}(m)$ \hfill // see Alg.~\ref{alg:kron_factor_twla_alg}
\STATE Initialize $S_1\in\mathbb{R}^{m_1\times m_1}$, $S_2\in\mathbb{R}^{m_2\times m_2}$ (e.g., zeros)

\textbf{\textcolor[HTML]{f27830}{Orthogonal Kronecker rotation via Cayley parameterization}}
\FOR{$t=1$ \textbf{to} $T_{\mathrm{kotms}}$}
    \STATE $A_1 \leftarrow S_1-S_1^\top$, \quad $A_2 \leftarrow S_2-S_2^\top$
    \STATE $R_1 \leftarrow (I+A_1)^{-1}(I-A_1)$,\quad $R_2 \leftarrow (I+A_2)^{-1}(I-A_2)$
    \STATE $R \leftarrow R_1\otimes R_2$, \quad $Z \leftarrow WR$

    \AlgNote{Tri-modal shaping loss (row-wise TriGMM + zero-mass regularizer)}
    \FOR{$i=1,2,\dots,n$}
        \STATE $c_i \leftarrow \frac{1}{m}\sum_{j=1}^m |z_{ij}|$,\quad $\sigma_i \leftarrow \max(\mathrm{std}(\{z_{ij}\}_{j=1}^m),\sigma_{\min})$
        \STATE Compute responsibilities $\{r_{ij+},r_{ij0},r_{ij-}\}_{j=1}^m$ using $(\pi_+,\pi_0,\pi_-)$ and means $(+c_i,0,-c_i)$
        \STATE $\bar r_{i0}\leftarrow \frac{1}{m}\sum_{j=1}^m r_{ij0}$
    \ENDFOR
    \STATE $L_{\mathrm{shape}}\leftarrow L_{\mathrm{TriGMM}}(Z)+\beta\cdot \frac{1}{n}\sum_i(\bar r_{i0}-\rho)^2$
    \STATE $(S_1,S_2)\leftarrow (S_1,S_2) - \eta \nabla L_{\mathrm{shape}}$
\ENDFOR

\STATE \textbf{return} $(R_1,R_2)$
\end{algorithmic}
\end{algorithm}

(2) E2M-ATQ (Euclidean-to-Manifold Asymmetric Ternary Quantizer): After KOTMS shaping, we ternarize each layer using a two-stage Euclidean-to-Manifold strategy. In Stage I, we alternate updates of the discrete ternary pattern and continuous row-wise parameters under Frobenius (Euclidean) geometry to obtain a stable ternary structural prior. In Stage II, we fix the ternary pattern and relocate the row-wise shift/scale parameters under the calibration-induced metric defined by the activation second moment, yielding a metric-consistent closed-form update that directly aligns the forward output error. The pseudocode is provided in Algorithm~\ref{alg:e2m_atq_alg}.

\begin{algorithm}[h!]
\caption{E2M-ATQ: Euclidean Warm-start $\rightarrow$ Manifold Relocation}
\label{alg:e2m_atq_alg}
\begin{algorithmic}[1]
\REQUIRE Weight matrix $W\in\mathbb{R}^{n\times m}$; calibration inputs $X$ for this layer
\REQUIRE Euclidean iters $T_{\mathrm{euc}}$; TWN-style threshold rule (e.g., $\Delta$); metric floor (optional)
\ENSURE Ternary codebook $T^{(0)}\in\{-1,0,+1\}^{n\times m}$, row-wise $(\mu,\alpha)$, ternary-dequantized $\bar W$

\textbf{\textcolor[HTML]{1d73b6}{Stage I: Euclidean warm-start (stabilize discrete pattern)}}
\STATE $\mu \leftarrow \mathrm{row\_mean}(W)$
\STATE Initialize $T \leftarrow \textsc{TernaryThreshold}(W-\mu,\Delta)$
\STATE $\alpha \leftarrow \textsc{SupportAwareLS}(W,\mu,T)$
\FOR{$t=1$ \textbf{to} $T_{\mathrm{euc}}$}
    \STATE $R \leftarrow W - \mu \mathbf{1}^\top - \mathrm{diag}(\alpha)T$ \hfill // residual
    \STATE $\mu \leftarrow \mu + \frac{1}{m}R\mathbf{1}$ \hfill // residual-mean correction
    \STATE $\alpha \leftarrow \textsc{SupportAwareLS}(W,\mu,T)$
    \STATE Update $\Delta$ and $T \leftarrow \textsc{TernaryThreshold}(W-\mu,\Delta)$
\ENDFOR
\STATE $T^{(0)} \leftarrow T$ \hfill // freeze stratum

\textbf{\textcolor[HTML]{1d73b6}{Stage II: Manifold relocation under calibration-induced metric}}
\STATE $S \leftarrow \sum_b X_b^\top X_b$ \hfill // second moment
\FOR{$i=1,2,\dots,n$}
    \STATE $t_i \leftarrow T^{(0)}_{i:}$,\quad $w_i \leftarrow W_{i:}$
    \STATE Form $2\times 2$ system: 
    $A_i=\begin{bmatrix}t_iSt_i^\top & t_iS\mathbf{1}\\ \mathbf{1}^\top St_i^\top & \mathbf{1}^\top S\mathbf{1}\end{bmatrix}$,\;
    $b_i=\begin{bmatrix}t_iSw_i^\top\\ \mathbf{1}^\top Sw_i^\top\end{bmatrix}$
    \STATE Solve $A_i\begin{bmatrix}\alpha_i\\ \mu_i\end{bmatrix}=b_i$
\ENDFOR
\STATE $\bar W \leftarrow \mu \mathbf{1}^\top + \mathrm{diag}(\alpha)T^{(0)}$
\STATE \textbf{return} $(T^{(0)},\mu,\alpha,\bar W)$
\end{algorithmic}
\end{algorithm}

(3) ILA-AMP (Inter-Layer Aware Activation Mixed Precision): Although KOTMS can indirectly improve activation quantizability, the gain is often layer-dependent and may propagate through deep Transformers, causing a few low-bit layers to become bottlenecks and trigger cascading degradation. To address this, we treat the activation bitwidth as a controllable degree of freedom and allocate it across layers under a global budget. ILA-AMP constructs an efficient second-order surrogate based on the average validation NLL, including unary per-layer costs and adjacent pairwise interaction costs to capture local coupling amplification induced by distribution shifts and error propagation. Thanks to the chain-structured objective, the optimal mixed-precision configuration can be obtained exactly via dynamic programming. The procedure is summarized in Algorithm~\ref{alg:ila_amp_alg}.

\begin{algorithm}[h!]
\caption{ILA-AMP: Inter-Layer Aware Activation Mixed Precision}
\label{alg:ila_amp_alg}
\begin{algorithmic}[1]
\REQUIRE Quantized-weight model with $L$ layers; bit candidates $\mathcal{B}=\{2,4,6,8\}$; reference $b_{\max}=8$
\REQUIRE Validation set $\mathcal{D}_{\mathrm{val}}$; layer weights $\{w_\ell\}$; budget $B$
\ENSURE Optimal activation bits $\{b_\ell^\star\}_{\ell=1}^L$

\textbf{\textcolor[HTML]{24a645}{Build unary and adjacent pairwise costs via NLL}}
\STATE Set all layers to $b_{\max}$; $\mathrm{BaseNLL}\leftarrow v_{\mathrm{NLL}}(\mathbf{b}_{\max})$
\FOR{$\ell=1,2,\dots,L$}
    \FOR{each $b\in\mathcal{B}$}
        \STATE Set all layers to $b_{\max}$; set layer $\ell$ to $b$
        \STATE $C_\ell(b)\leftarrow v_{\mathrm{NLL}}(b_\ell=b,\text{ others }b_{\max})-\mathrm{BaseNLL}$
    \ENDFOR
\ENDFOR
\FOR{$\ell=2,3,\dots,L$}
    \FOR{each $b'\in\mathcal{B}$}
        \FOR{each $b\in\mathcal{B}$}
            \STATE Set all layers to $b_{\max}$; set layers $(\ell\!-\!1,\ell)$ to $(b',b)$
            \STATE $K_{\ell-1,\ell}(b',b)\leftarrow v_{\mathrm{NLL}}(\ell\!-\!1{=}b',\ell{=}b)-\mathrm{BaseNLL}-C_{\ell-1}(b')-C_\ell(b)$
        \ENDFOR
    \ENDFOR
\ENDFOR

\textbf{\textcolor[HTML]{24a645}{Dynamic programming under global budget}}
\STATE Initialize $\mathrm{DP}[\ell][c][b]\leftarrow +\infty$
\FOR{each $b\in\mathcal{B}$}
    \STATE $c\leftarrow w_1 b$; \IF{$c\le B$} \STATE $\mathrm{DP}[1][c][b]\leftarrow C_1(b)$ \ENDIF
\ENDFOR
\FOR{$\ell=2,3,\dots,L$}
    \FOR{$c=0,1,\dots,B$}
        \FOR{each $b\in\mathcal{B}$}
            \IF{$c-w_\ell b \ge 0$}
                \STATE $\mathrm{DP}[\ell][c][b]\leftarrow C_\ell(b)+\min_{b'\in\mathcal{B}}
                \big(\mathrm{DP}[\ell-1][c-w_\ell b][b']+K_{\ell-1,\ell}(b',b)\big)$
                \STATE Record argmin pointer for backtracking
            \ENDIF
        \ENDFOR
    \ENDFOR
\ENDFOR
\STATE Backtrack from $\min_{c\le B}\min_{b\in\mathcal{B}}\mathrm{DP}[L][c][b]$ to get $\{b_\ell^\star\}$
\STATE \textbf{return} $\{b_\ell^\star\}$
\end{algorithmic}
\end{algorithm}

Overall, Algorithm~\ref{alg:twla_simplified_alg} presents the simplified end-to-end TWLA pipeline: we first learn and fold Kronecker orthogonal rotations per layer (KOTMS), then perform E2M-ATQ ternarization in the rotated coordinate system, and finally solve the budget-constrained inter-layer-aware mixed-precision activation assignment with ILA-AMP. The resulting pseudocode faithfully reflects the PTQ pipeline used in all our experiments and can be directly applied to large language models at different scales.

\begin{algorithm}[h!]
\caption{TWLA: Overall PTQ Pipeline}
\label{alg:twla_simplified_alg}
\begin{algorithmic}[1]
\REQUIRE LLM with $L$ layers and weights $\{W_\ell\}$; calibration set $\mathcal{D}_{\mathrm{cal}}$; validation set $\mathcal{D}_{\mathrm{val}}$
\REQUIRE KOTMS hyperparams; E2M-ATQ hyperparams; ILA-AMP hyperparams and budget $B$
\ENSURE Ternary weights $\{\bar W_\ell\}$, rotations $\{(R_{1,\ell},R_{2,\ell})\}$, activation bits $\{b_\ell^\star\}$

\textbf{\textcolor[HTML]{f27830}{Step 1: KOTMS (function-preserving rotation + tri-modal shaping)}}
\FOR{$\ell=1,2,\dots,L$}
    \STATE $(R_{1,\ell},R_{2,\ell}) \leftarrow \textsc{KOTMS}(W_\ell)$ \hfill // Alg.~\ref{alg:kotms_alg}
    \STATE $R_\ell \leftarrow R_{1,\ell}\otimes R_{2,\ell}$
    \STATE $W_\ell \leftarrow W_\ell R_\ell$; fold/insert $R_\ell^\top$ onto activation pathway
\ENDFOR

\textbf{\textcolor[HTML]{1d73b6}{Step 2: E2M-ATQ (ternarize weights with Euclidean-to-Manifold updates)}}
\FOR{$\ell=1,2,\dots,L$}
    \STATE Collect layer inputs $X_\ell$ from $\mathcal{D}_{\mathrm{cal}}$
    \STATE $(T_\ell^{(0)},\mu_\ell,\alpha_\ell,\bar W_\ell)\leftarrow \textsc{E2M-ATQ}(W_\ell,X_\ell)$ \hfill // Alg.~\ref{alg:e2m_atq_alg}
    \STATE Replace $W_\ell$ by $\bar W_\ell$
\ENDFOR

\textbf{\textcolor[HTML]{24a645}{Step 3: ILA-AMP (allocate activation bits via DP with adjacent interactions)}}
\STATE $\{b_\ell^\star\}_{\ell=1}^L \leftarrow \textsc{ILA-AMP}(\mathcal{D}_{\mathrm{val}},B)$ \hfill // Alg.~\ref{alg:ila_amp_alg}
\STATE Deploy activation quantizers according to $\{b_\ell^\star\}$

\STATE \textbf{return} Quantized model
\end{algorithmic}
\end{algorithm}

\section{Detailed Proofs and Derivations}
\label{sec:proof}

\subsection{Monotonic Decrease of the Euclidean Initialization Updates}
\label{app:mono_decrease}

This appendix justifies the statement in the main text that the Euclidean initialization
updates follow a coordinate-descent scheme and yield a monotonically non-increasing objective
(cf.\ Eq.~(2) in the main paper).

\subsubsection{Row-wise Objective}
We analyze a single row for clarity; the extension to the full matrix follows by summing over rows.
Let $\mathbf{w}\in\mathbb{R}^{m}$ be a row of the weight matrix, and let
$\mu\in\mathbb{R}$, $\alpha\in\mathbb{R}$, and $\mathbf{t}\in\{-1,0,1\}^{m}$ be the row-wise
shift, scale, and ternary pattern, respectively. Define the (row-wise) quantization error
\begin{equation}
\mathcal{L}(\mu,\alpha,\mathbf{t})
\;=\;
\big\|\mathbf{w}-\mu\mathbf{1}-\alpha\mathbf{t}\big\|_2^2,
\label{eq:row_obj}
\end{equation}
where $\mathbf{1}\in\mathbb{R}^{m}$ is the all-one vector. For a ternary vector $\mathbf{t}$,
we denote its support size by
\begin{equation}
s(\mathbf{t}) \;=\; \|\mathbf{t}\|_0 \;=\;\sum_{k=1}^{m}|t_k|
\;=\;\sum_{k=1}^{m}t_k^2,
\label{eq:support_size}
\end{equation}
which corresponds to the number of non-zero ternary entries (i.e., excluding ternary zeros).

\subsubsection{Update Rules (Coordinate Descent)}
Given $(\mu^\tau,\alpha^\tau,\mathbf{t}^\tau)$ at iteration $\tau$, we perform the following updates:
\begin{enumerate}
\item \textbf{$\mu$-update.} With $(\alpha^\tau,\mathbf{t}^\tau)$ fixed, update
\begin{equation}
\mu^{\tau+1}
\;=\;
\mu^\tau + \delta_\mu^\tau,
\qquad
\delta_\mu^\tau
\;=\;
\frac{1}{m}\mathbf{1}^\top
\big(\mathbf{w}-\mu^\tau\mathbf{1}-\alpha^\tau\mathbf{t}^\tau\big).
\label{eq:mu_update}
\end{equation}

\item \textbf{$\alpha$-update (support-aware).} With $(\mu^{\tau+1},\mathbf{t}^\tau)$ fixed, update
\begin{equation}
\alpha^{\tau+1}
\;=\;
\arg\min_{\alpha}\,
\big\|\mathbf{w}-\mu^{\tau+1}\mathbf{1}-\alpha\mathbf{t}^\tau\big\|_2^2
\;=\;
\frac{\sum_{k=1}^{m} t_k^\tau\,(w_k-\mu^{\tau+1})}{\sum_{k=1}^{m}|t_k^\tau|},
\label{eq:alpha_update}
\end{equation}
where the denominator excludes ternary zeros ($t_k^\tau=0$).

\item \textbf{$\mathbf{t}$-update (row-wise ternary thresholding).}
With $(\mu^{\tau+1},\alpha^{\tau+1})$ fixed, update $\mathbf{t}$ by row-wise ternary thresholding,
which is equivalent to the element-wise minimization
\begin{equation}
t_k^{\tau+1}
\in
\arg\min_{t\in\{-1,0,1\}}
(w_k-\mu^{\tau+1}-\alpha^{\tau+1}t)^2,
\qquad k=1,\dots,m.
\label{eq:t_update}
\end{equation}
\end{enumerate}

\subsubsection{Monotonic Decrease}
\begin{lemma}[Decrease after the $\mu$-update]
\label{lem:mu_decrease}
Let $\mathbf{r}^\tau=\mathbf{w}-\mu^\tau\mathbf{1}-\alpha^\tau\mathbf{t}^\tau$ be the residual.
After updating $\mu$ as in Eq.~\ref{eq:mu_update} (with $\alpha^\tau,\mathbf{t}^\tau$ fixed),
the objective decreases by
\begin{equation}
\mathcal{L}(\mu^\tau,\alpha^\tau,\mathbf{t}^\tau)
-
\mathcal{L}(\mu^{\tau+1},\alpha^\tau,\mathbf{t}^\tau)
\;=\;
m(\delta_\mu^\tau)^2
\;\ge\;0.
\label{eq:mu_drop}
\end{equation}
\end{lemma}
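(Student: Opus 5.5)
The plan is a direct algebraic expansion of the squared norm. With $\alpha^\tau,\mathbf{t}^\tau$ fixed, write the new residual in terms of the old one. By Eq.~\ref{eq:mu_update},
\[
\mathbf{w}-\mu^{\tau+1}\mathbf{1}-\alpha^\tau\mathbf{t}^\tau=\mathbf{r}^\tau-\delta_\mu^\tau\mathbf{1},
\qquad \delta_\mu^\tau=\tfrac{1}{m}\mathbf{1}^\top\mathbf{r}^\tau .
\]
Then $\mathcal{L}(\mu^{\tau+1},\alpha^\tau,\mathbf{t}^\tau)=\|\mathbf{r}^\tau-\delta_\mu^\tau\mathbf{1}\|_2^2$, while $\mathcal{L}(\mu^\tau,\alpha^\tau,\mathbf{t}^\tau)=\|\mathbf{r}^\tau\|_2^2$ by the definition in Eq.~\ref{eq:row_obj}.

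Next, expand the square:
\[
\|\mathbf{r}^\tau-\delta_\mu^\tau\mathbf{1}\|_2^2=\|\mathbf{r}^\tau\|_2^2-2\delta_\mu^\tau\,\mathbf{1}^\top\mathbf{r}^\tau+(\delta_\mu^\tau)^2\,\mathbf{1}^\top\mathbf{1}.
\]
Using $\mathbf{1}^\top\mathbf{1}=m$ and $\mathbf{1}^\top\mathbf{r}^\tau=m\delta_\mu^\tau$, the cross term becomes $-2m(\delta_\mu^\tau)^2$. The last term is $m(\delta_\mu^\tau)^2$, so the two combine to $-m(\delta_\mu^\tau)^2$.

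Subtracting from $\|\mathbf{r}^\tau\|_2^2$ gives the exact drop $m(\delta_\mu^\tau)^2$. This is nonnegative as a square times $m\ge1$, which establishes Eq.~\ref{eq:mu_drop}.

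As a sanity check, the update is the exact minimizer of the one-dimensional quadratic $\mu\mapsto\|\mathbf{r}^\tau-(\mu-\mu^\tau)\mathbf{1}\|^2$. Its stationarity condition gives precisely $\mu-\mu^\tau=\frac{1}{m}\mathbf{1}^\top\mathbf{r}^\tau$, and the curvature is $2m$. This matches the Pythagorean decomposition of $\mathbf{r}^\tau$ into its projection onto $\mathrm{span}(\mathbf{1})$ and the orthogonal complement.

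There is no real obstacle. The only care needed is keeping $\alpha^\tau,\mathbf{t}^\tau$ fixed so the residual shift is purely along $\mathbf{1}$, and tracking the factor $m$ correctly.
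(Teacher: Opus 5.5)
Your proof is correct and follows the paper's argument essentially verbatim: write the post-update residual as $\mathbf{r}^\tau-\delta_\mu^\tau\mathbf{1}$, expand the square, and substitute $\mathbf{1}^\top\mathbf{r}^\tau=m\delta_\mu^\tau$ to obtain the exact drop $m(\delta_\mu^\tau)^2$. Writing the intermediate residual explicitly with $\alpha^\tau,\mathbf{t}^\tau$ held fixed is slightly cleaner than the paper's notation, which calls it $\mathbf{r}^{\tau+1}$ even though $\alpha$ and $\mathbf{t}$ have not yet been updated.
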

\begin{proof}
Since $\mathbf{r}^{\tau+1}=\mathbf{r}^\tau-\delta_\mu^\tau\mathbf{1}$,
\[
\|\mathbf{r}^{\tau+1}\|_2^2
=
\|\mathbf{r}^\tau\|_2^2
-2\delta_\mu^\tau\,\mathbf{1}^\top\mathbf{r}^\tau
+m(\delta_\mu^\tau)^2.
\]
Using $\delta_\mu^\tau=\frac{1}{m}\mathbf{1}^\top\mathbf{r}^\tau$ yields
$\|\mathbf{r}^{\tau+1}\|_2^2=\|\mathbf{r}^\tau\|_2^2-m(\delta_\mu^\tau)^2$.
\end{proof}

\begin{lemma}[Decrease after the support-aware $\alpha$-update]
\label{lem:alpha_decrease}
Fix $(\mu,\mathbf{t})$ and let $\alpha^{\star}$ be the minimizer in Eq.~\ref{eq:alpha_update}.
Then for any $\alpha$,
\begin{equation}
\mathcal{L}(\mu,\alpha,\mathbf{t})
-
\mathcal{L}(\mu,\alpha^{\star},\mathbf{t})
\;=\;
\|\mathbf{t}\|_2^2\,(\alpha-\alpha^{\star})^2
\;=\;
s(\mathbf{t})\,(\alpha-\alpha^{\star})^2
\;\ge\;0.
\label{eq:alpha_drop}
\end{equation}
\end{lemma}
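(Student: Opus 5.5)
The plan is to mirror the proof of Lemma~\ref{lem:mu_decrease}: view the objective as a one-dimensional quadratic in $\alpha$ with $(\mu,\mathbf{t})$ held fixed, and complete the square around the minimizer $\alpha^\star$. Set $\mathbf{u}=\mathbf{w}-\mu\mathbf{1}$, so that $\mathcal{L}(\mu,\alpha,\mathbf{t})=\|\mathbf{u}-\alpha\mathbf{t}\|_2^2=\|\mathbf{u}\|_2^2-2\alpha\,\mathbf{t}^\top\mathbf{u}+\alpha^2\|\mathbf{t}\|_2^2$. This is a quadratic in $\alpha$ with leading coefficient $\|\mathbf{t}\|_2^2$.

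First I handle the degenerate case $\mathbf{t}=\mathbf{0}$. Here the objective does not depend on $\alpha$, so both sides of Eq.~\ref{eq:alpha_drop} vanish, since $s(\mathbf{t})=0$. The identity then holds for any convention on $\alpha^\star$ (for example $\alpha^\star=0$), and I will note this explicitly because the formula in Eq.~\ref{eq:alpha_update} has a zero denominator in that case.

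When $s(\mathbf{t})>0$, the key step uses the first-order optimality condition $\mathbf{t}^\top(\mathbf{u}-\alpha^\star\mathbf{t})=0$. This is exactly Eq.~\ref{eq:alpha_update}, because $\sum_k t_k(w_k-\mu)=\mathbf{t}^\top\mathbf{u}$ and $\sum_k|t_k|=\|\mathbf{t}\|_2^2$. Writing $\mathbf{u}-\alpha\mathbf{t}=(\mathbf{u}-\alpha^\star\mathbf{t})-(\alpha-\alpha^\star)\mathbf{t}$ and expanding the squared norm gives
\[
\mathcal{L}(\mu,\alpha,\mathbf{t})=\mathcal{L}(\mu,\alpha^\star,\mathbf{t})-2(\alpha-\alpha^\star)\,\mathbf{t}^\top(\mathbf{u}-\alpha^\star\mathbf{t})+(\alpha-\alpha^\star)^2\|\mathbf{t}\|_2^2 .
\]
The cross term vanishes by the orthogonality condition, which leaves the claimed difference $\|\mathbf{t}\|_2^2(\alpha-\alpha^\star)^2$.

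Finally, Eq.~\ref{eq:support_size} gives $\|\mathbf{t}\|_2^2=s(\mathbf{t})$, because $t_k^2=|t_k|\in\{0,1\}$ for ternary entries, and nonnegativity is immediate. There is no real obstacle here. The only point needing care is to state clearly that "support-aware" means the denominator $\sum_k|t_k|$ coincides with $\|\mathbf{t}\|_2^2$, so the TWN-style formula really is the exact least-squares minimizer and the cross term cancels.
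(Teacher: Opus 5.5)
Your proof is correct and matches the paper's argument: set $\mathbf{u}=\mathbf{w}-\mu\mathbf{1}$, expand $\|\mathbf{u}-\alpha\mathbf{t}\|_2^2$ around the least-squares minimizer $\alpha^\star=\mathbf{t}^\top\mathbf{u}/\mathbf{t}^\top\mathbf{t}$ so the cross term cancels, then use $\mathbf{t}^\top\mathbf{t}=\sum_k|t_k|=s(\mathbf{t})$. Your separate handling of $\mathbf{t}=\mathbf{0}$, where the closed-form update has a zero denominator, is a small point of care that the paper leaves implicit.
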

\begin{proof}
Let $\mathbf{u}=\mathbf{w}-\mu\mathbf{1}$. Then
$\mathcal{L}(\mu,\alpha,\mathbf{t})=\|\mathbf{u}-\alpha\mathbf{t}\|_2^2$.
This is a 1D least-squares problem with closed-form minimizer
$\alpha^{\star}=\frac{\mathbf{t}^\top\mathbf{u}}{\mathbf{t}^\top\mathbf{t}}$.
Expanding $\|\mathbf{u}-\alpha\mathbf{t}\|_2^2$ around $\alpha^{\star}$ gives
$\|\mathbf{u}-\alpha\mathbf{t}\|_2^2
=
\|\mathbf{u}-\alpha^{\star}\mathbf{t}\|_2^2
+(\mathbf{t}^\top\mathbf{t})(\alpha-\alpha^{\star})^2$.
Since $\mathbf{t}^\top\mathbf{t}=\sum_k t_k^2=\sum_k |t_k|=s(\mathbf{t})$, we obtain Eq.~\ref{eq:alpha_drop}.
\end{proof}

\begin{lemma}[Non-increase after the $\mathbf{t}$-update]
\label{lem:t_noninc}
Fix $(\mu,\alpha)$. If $\mathbf{t}^{\star}$ is updated element-wise as in Eq.~\ref{eq:t_update},
then
\begin{equation}
\mathcal{L}(\mu,\alpha,\mathbf{t}^{\star})
\;\le\;
\mathcal{L}(\mu,\alpha,\mathbf{t}).
\label{eq:t_noninc}
\end{equation}
\end{lemma}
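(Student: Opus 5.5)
The plan is to use the fact that, with $(\mu,\alpha)$ fixed, the row objective in Eq.~\ref{eq:row_obj} is a sum of independent per-coordinate terms. Writing $u_k = w_k-\mu$, we have
\[
\mathcal{L}(\mu,\alpha,\mathbf{t}) \;=\; \sum_{k=1}^{m} (u_k-\alpha t_k)^2 ,
\]
and the $k$-th summand depends on $\mathbf{t}$ only through $t_k$. The feasible set $\{-1,0,1\}^m$ is a Cartesian product of the per-coordinate sets $\{-1,0,1\}$. Minimizing the sum therefore reduces to minimizing each summand separately over $t_k\in\{-1,0,1\}$, which is exactly the element-wise rule in Eq.~\ref{eq:t_update}.

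The key steps are as follows. First, by definition of Eq.~\ref{eq:t_update}, $(u_k-\alpha t_k^{\star})^2 \le (u_k-\alpha t)^2$ for every $t\in\{-1,0,1\}$. In particular this holds for $t=t_k$, the current entry. Second, I sum these $m$ inequalities over $k$ to obtain Eq.~\ref{eq:t_noninc} directly. The argmin always exists because each coordinate minimizes over a finite set. Ties, for instance at the boundary $|u_k| = |\alpha|/2$, are harmless because any minimizer satisfies the inequality.

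The one point needing care is the claim in the text that this element-wise minimization \emph{is} the row-wise ternary thresholding. I would check it by a short case analysis. Assume $\alpha>0$; the case $\alpha<0$ is the same with the signs of $t$ flipped, and if $\alpha=0$ every $t$ gives the same value. Compare the three candidate costs $u_k^2$, $(u_k-\alpha)^2$ and $(u_k+\alpha)^2$. The minimizer is $+1$ when $u_k>\alpha/2$, $-1$ when $u_k<-\alpha/2$, and $0$ otherwise. This is a threshold rule with $\Delta=\alpha/2$, applied to the shifted weights $\mathbf{w}-\mu\mathbf{1}$.

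I expect the main obstacle to be the mismatch with the TWN heuristic threshold $\Delta_i\approx 0.75\,\mathrm{mean}|W_{ij}|$ in Eq.~\ref{eq:ternary}. That threshold need not equal $\alpha/2$, so monotonicity is only guaranteed when the $\mathbf{t}$-update uses the exact element-wise argmin, i.e.\ $\Delta=\alpha^{\tau+1}/2$. I would state the lemma under the update rule Eq.~\ref{eq:t_update} as written, which is what the statement assumes. I would also remark that, combined with Lemmas~\ref{lem:mu_decrease} and~\ref{lem:alpha_decrease}, one full iteration is non-increasing. The objective is bounded below by $0$, so the sequence of objective values converges.
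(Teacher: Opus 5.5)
Your proof is correct and matches the paper's: with $(\mu,\alpha)$ fixed, the objective splits into independent per-coordinate terms, and the element-wise argmin minimizes each summand, so the sum cannot increase. Your extra observation is valid but does not affect the lemma as stated. Element-wise minimization coincides with thresholding only at $\Delta=|\alpha|/2$ on $\mathbf{w}-\mu\mathbf{1}$, not at the TWN heuristic $0.75\,\mathrm{mean}|W_{ij}|$, and the paper glosses over this.
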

\begin{proof}
With $(\mu,\alpha)$ fixed, the objective decomposes as
$\mathcal{L}(\mu,\alpha,\mathbf{t})=\sum_{k=1}^{m}(w_k-\mu-\alpha t_k)^2$.
Choosing each $t_k^{\star}\in\arg\min_{t\in\{-1,0,1\}}(w_k-\mu-\alpha t)^2$
minimizes every summand and therefore cannot increase the sum.
\end{proof}

\begin{proposition}[Monotonic decrease and convergence]
\label{prop:mono}
Let $\mathcal{L}^\tau=\mathcal{L}(\mu^\tau,\alpha^\tau,\mathbf{t}^\tau)$.
The update order $\mu\rightarrow\alpha\rightarrow\mathbf{t}$ satisfies
\begin{equation}
\mathcal{L}^{\tau+1}
\;\le\;
\mathcal{L}^{\tau},
\qquad \forall \tau\ge 0,
\label{eq:mono}
\end{equation}
and the sequence $\{\mathcal{L}^{\tau}\}_{\tau\ge 0}$ converges.
Moreover, we have the explicit descent bound
\begin{equation}
\mathcal{L}^{\tau+1}
\;\le\;
\mathcal{L}^{\tau}
-
m(\delta_\mu^\tau)^2
-
s(\mathbf{t}^{\tau})\,(\alpha^\tau-\alpha^{\tau+1})^2,
\label{eq:descent_bound}
\end{equation}
where the $\mathbf{t}$-update may further decrease the objective.
\end{proposition}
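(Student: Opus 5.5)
The plan is to chain the three preceding lemmas along one full sweep $\mu\rightarrow\alpha\rightarrow\mathbf{t}$ of iteration $\tau$. The intermediate states are
\[
(\mu^\tau,\alpha^\tau,\mathbf{t}^\tau)\to(\mu^{\tau+1},\alpha^\tau,\mathbf{t}^\tau)\to(\mu^{\tau+1},\alpha^{\tau+1},\mathbf{t}^\tau)\to(\mu^{\tau+1},\alpha^{\tau+1},\mathbf{t}^{\tau+1}).
\]
For the first transition, Lemma~\ref{lem:mu_decrease} gives an exact decrease of $m(\delta_\mu^\tau)^2$.

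For the second transition, I apply Lemma~\ref{lem:alpha_decrease} with $(\mu,\mathbf{t})=(\mu^{\tau+1},\mathbf{t}^\tau)$, $\alpha=\alpha^\tau$ and $\alpha^\star=\alpha^{\tau+1}$. This yields a further exact decrease of $s(\mathbf{t}^\tau)(\alpha^\tau-\alpha^{\tau+1})^2$. For the third transition, Lemma~\ref{lem:t_noninc} with $(\mu,\alpha)=(\mu^{\tau+1},\alpha^{\tau+1})$ shows the objective does not increase.

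Summing the three (in)equalities telescopes directly to the descent bound Eq.~\ref{eq:descent_bound}. Since both subtracted terms are nonnegative, Eq.~\ref{eq:mono} follows immediately.

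For convergence, note that $\mathcal{L}^\tau$ is a squared Euclidean norm, so $\mathcal{L}^\tau\ge 0$ for all $\tau$. A non-increasing sequence of reals bounded below converges by the monotone convergence theorem. As a byproduct, summing Eq.~\ref{eq:descent_bound} over $\tau$ gives $\sum_\tau m(\delta_\mu^\tau)^2<\infty$ and $\sum_\tau s(\mathbf{t}^\tau)(\alpha^\tau-\alpha^{\tau+1})^2\le\mathcal{L}^0<\infty$. Hence the $\mu$-corrections vanish, and so do the $\alpha$-changes on nonempty supports. I would state this only as a remark, since the proposition claims convergence of the objective values, not of the iterates.

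The main obstacle is the degenerate case $s(\mathbf{t}^\tau)=0$, where $\mathbf{t}^\tau=\mathbf{0}$ and the $\alpha$-update Eq.~\ref{eq:alpha_update} has a zero denominator. I would handle it by convention: keep $\alpha^{\tau+1}=\alpha^\tau$, or take any value, since the objective is then independent of $\alpha$. In that case the second transition contributes exactly zero, matching the vanishing term $s(\mathbf{t}^\tau)(\cdot)^2$, and the bound still holds.

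A second subtlety is ties in the $\arg\min$ of Eq.~\ref{eq:t_update}. Lemma~\ref{lem:t_noninc} is stated for any minimizer, so any tie-breaking rule is allowed. Finally, the statement is per row, and the full-matrix objective Eq.~\ref{eq:L1} is the sum of the row objectives with independent row parameters. The result therefore extends to the whole matrix by summing, with the bound's terms summed over rows.
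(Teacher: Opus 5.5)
Your proof is correct and matches the paper's argument: chain Lemmas~\ref{lem:mu_decrease}, \ref{lem:alpha_decrease}, and \ref{lem:t_noninc} along one $\mu\rightarrow\alpha\rightarrow\mathbf{t}$ sweep to obtain the descent bound, then use $\mathcal{L}^\tau\ge 0$ and monotone convergence. Your handling of the degenerate case $s(\mathbf{t}^\tau)=0$, where Eq.~\ref{eq:alpha_update} is undefined, and of ties in Eq.~\ref{eq:t_update} covers two small points the paper leaves implicit.
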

\begin{proof}
Applying Lemma~\ref{lem:mu_decrease} gives a decrease after the $\mu$-update.
Then Lemma~\ref{lem:alpha_decrease} yields a decrease after the $\alpha$-update.
Finally, Lemma~\ref{lem:t_noninc} shows the $\mathbf{t}$-update is non-increasing.
Thus $\mathcal{L}^{\tau+1}\le\mathcal{L}^{\tau}$ for all $\tau$.
Since $\mathcal{L}^{\tau}\ge 0$, the monotone sequence converges.
Combining Lemma~\ref{lem:mu_decrease} and Lemma~\ref{lem:alpha_decrease} yields
Eq.~\ref{eq:descent_bound}; the $\mathbf{t}$-update can only decrease further.
\end{proof}

\paragraph{Extension to the full matrix.}
For a matrix $\mathbf{W}\in\mathbb{R}^{n\times m}$ with row-wise parameters
$\boldsymbol{\mu}\in\mathbb{R}^{n}$, $\boldsymbol{\alpha}\in\mathbb{R}^{n}$, and
$\mathbf{T}\in\{-1,0,1\}^{n\times m}$, the full Euclidean objective is the sum of
row-wise objectives in Eq.~\ref{eq:row_obj}. Therefore, the same monotonic
non-increase holds for the entire update procedure.

\subsection{Row-wise Normal Equations and Numerical Stabilization for Metric-Aware Relocation}
\label{app:e2m_rowwise_normal_eq}

This section provides a step-by-step derivation of the row-wise $2\times2$ linear system in Eq.~\ref{eq:2x2_main}
by optimizing $(\boldsymbol{\mu},\boldsymbol{\alpha})$ under the calibration-defined metric induced by
$\mathbf{S}\succeq \mathbf{0}$ while fixing $\mathbf{T}=\mathbf{T}^{(0)}$. We also describe practical numerical
stabilization used when $\mathbf{S}$ is ill-conditioned or the $2\times2$ system becomes nearly singular.

\paragraph{Row-wise decoupling from the trace form.}
Recall $\mathbf{R}=\mathbf{W}-\boldsymbol{\mu}\mathbf{1}^\top-\mathrm{diag}(\boldsymbol{\alpha})\,\mathbf{T}^{(0)}$
and the objective (Eq.~\ref{eq:app_L2_trace})
\begin{equation}
L_2(\boldsymbol{\mu},\boldsymbol{\alpha};\mathbf{T}^{(0)})
\;=\;
\mathrm{Tr}\!\left(\mathbf{R}\mathbf{S}\mathbf{R}^\top\right),
\qquad \mathbf{S}=\mathbf{S}^\top\succeq \mathbf{0}.
\label{eq:app_L2_trace_recall}
\end{equation}
Let $\mathbf{r}_i\in\mathbb{R}^{1\times m}$ be the $i$-th row of $\mathbf{R}$, and let $\mathbf{w}_i,\mathbf{t}_i$
denote the $i$-th rows of $\mathbf{W}$ and $\mathbf{T}^{(0)}$, respectively. Then
\begin{equation}
\mathbf{r}_i
=
\mathbf{w}_i-\mu_i\mathbf{1}^\top-\alpha_i\mathbf{t}_i.
\label{eq:app_row_residual_def}
\end{equation}
Using the identity $\mathrm{Tr}(\mathbf{A}\mathbf{A}^\top)=\sum_i \mathbf{a}_i\mathbf{a}_i^\top$ for row vectors
$\mathbf{a}_i$, we obtain
\begin{align}
\mathrm{Tr}(\mathbf{R}\mathbf{S}\mathbf{R}^\top)
&=\sum_{i=1}^{n} \left(\mathbf{R}\mathbf{S}\mathbf{R}^\top\right)_{ii}
=\sum_{i=1}^{n} \mathbf{r}_i\mathbf{S}\mathbf{r}_i^\top.
\label{eq:app_rowwise_decomp}
\end{align}
Hence the optimization decouples across rows:
\begin{equation}
\min_{\boldsymbol{\mu},\boldsymbol{\alpha}}
\mathrm{Tr}(\mathbf{R}\mathbf{S}\mathbf{R}^\top)
\quad\Longleftrightarrow\quad
\forall i:\ \min_{\mu_i,\alpha_i}\ f_i(\mu_i,\alpha_i),
\qquad
f_i(\mu_i,\alpha_i)=\mathbf{r}_i\mathbf{S}\mathbf{r}_i^\top.
\label{eq:app_fi_def}
\end{equation}

\paragraph{Step-by-step differentiation (two equivalent derivations).}
Fix a row $i$ and define
\begin{equation}
\mathbf{u}_i \triangleq \mathbf{r}_i
=
\mathbf{w}_i-\mu_i\mathbf{1}^\top-\alpha_i\mathbf{t}_i,
\qquad
f_i(\mu_i,\alpha_i)=\mathbf{u}_i\mathbf{S}\mathbf{u}_i^\top.
\label{eq:app_ui_def}
\end{equation}
We now compute $\partial f_i/\partial \mu_i$ and $\partial f_i/\partial \alpha_i$ in a fully explicit manner.

\emph{(A) Differential (matrix calculus) route.}
Since $\mathbf{S}=\mathbf{S}^\top$, the differential of the quadratic form satisfies
\begin{align}
\mathrm{d}f_i
&=\mathrm{d}(\mathbf{u}_i\mathbf{S}\mathbf{u}_i^\top)
=(\mathrm{d}\mathbf{u}_i)\mathbf{S}\mathbf{u}_i^\top + \mathbf{u}_i\mathbf{S}(\mathrm{d}\mathbf{u}_i)^\top
\notag\\
&=(\mathrm{d}\mathbf{u}_i)\mathbf{S}\mathbf{u}_i^\top + \mathbf{u}_i\mathbf{S}^\top(\mathrm{d}\mathbf{u}_i)^\top
=2(\mathrm{d}\mathbf{u}_i)\mathbf{S}\mathbf{u}_i^\top.
\label{eq:app_df_step1}
\end{align}
From Eq.~\ref{eq:app_ui_def}, we have
\begin{equation}
\mathrm{d}\mathbf{u}_i = -(\mathrm{d}\mu_i)\mathbf{1}^\top - (\mathrm{d}\alpha_i)\mathbf{t}_i.
\label{eq:app_du}
\end{equation}
Plugging Eq.~\ref{eq:app_du} into Eq.~\ref{eq:app_df_step1} yields
\begin{align}
\mathrm{d}f_i
&=2\Big(-(\mathrm{d}\mu_i)\mathbf{1}^\top-(\mathrm{d}\alpha_i)\mathbf{t}_i\Big)\mathbf{S}\mathbf{u}_i^\top
\notag\\
&=-2(\mathrm{d}\mu_i)\underbrace{\mathbf{1}^\top\mathbf{S}\mathbf{u}_i^\top}_{\text{scalar}}
-2(\mathrm{d}\alpha_i)\underbrace{\mathbf{t}_i\mathbf{S}\mathbf{u}_i^\top}_{\text{scalar}}.
\label{eq:app_df_step2}
\end{align}
By identification of coefficients of $\mathrm{d}\mu_i$ and $\mathrm{d}\alpha_i$, we obtain
\begin{equation}
\frac{\partial f_i}{\partial \mu_i}=-2\,\mathbf{1}^\top\mathbf{S}\mathbf{u}_i^\top,
\qquad
\frac{\partial f_i}{\partial \alpha_i}=-2\,\mathbf{t}_i\mathbf{S}\mathbf{u}_i^\top.
\label{eq:app_grad_results}
\end{equation}

\emph{(B) Fully expanded scalar route (optional but explicit).}
Write $f_i=\sum_{p=1}^{m}\sum_{q=1}^{m} u_{ip}S_{pq}u_{iq}$, where $u_{ip}=w_{ip}-\mu_i-\alpha_i t_{ip}$.
Then
\[
\frac{\partial f_i}{\partial \alpha_i}
=
\sum_{p,q} \Big(\frac{\partial u_{ip}}{\partial \alpha_i}S_{pq}u_{iq}
+u_{ip}S_{pq}\frac{\partial u_{iq}}{\partial \alpha_i}\Big)
=
\sum_{p,q} \big(-t_{ip}S_{pq}u_{iq}-u_{ip}S_{pq}t_{iq}\big).
\]
Using symmetry $S_{pq}=S_{qp}$ and re-indexing the second term gives
$\frac{\partial f_i}{\partial \alpha_i}=-2\sum_{p,q} t_{ip}S_{pq}u_{iq}=-2\,\mathbf{t}_i\mathbf{S}\mathbf{u}_i^\top$,
matching Eq.~\ref{eq:app_grad_results}. The derivation for $\partial f_i/\partial \mu_i$ is identical with
$\partial u_{ip}/\partial \mu_i=-1$, yielding $-2\,\mathbf{1}^\top\mathbf{S}\mathbf{u}_i^\top$.

\paragraph{From stationarity to the $2\times2$ linear system.}
Setting Eq.~\ref{eq:app_grad_results} to zero yields the stationarity conditions
\begin{equation}
\mathbf{t}_i\mathbf{S}\mathbf{u}_i^\top=0,
\qquad
\mathbf{1}^\top\mathbf{S}\mathbf{u}_i^\top=0.
\label{eq:app_stationarity}
\end{equation}
Substitute $\mathbf{u}_i^\top=\mathbf{w}_i^\top-\mu_i\mathbf{1}-\alpha_i\mathbf{t}_i^\top$ into Eq.~\ref{eq:app_stationarity}:
\begin{align}
\mathbf{t}_i\mathbf{S}\big(\mathbf{w}_i^\top-\mu_i\mathbf{1}-\alpha_i\mathbf{t}_i^\top\big)&=0,
\label{eq:app_stat_t}
\\
\mathbf{1}^\top\mathbf{S}\big(\mathbf{w}_i^\top-\mu_i\mathbf{1}-\alpha_i\mathbf{t}_i^\top\big)&=0.
\label{eq:app_stat_1}
\end{align}
Expand each equation and collect coefficients of $\alpha_i$ and $\mu_i$:
\begin{align}
(\mathbf{t}_i\mathbf{S}\mathbf{t}_i^\top)\alpha_i + (\mathbf{t}_i\mathbf{S}\mathbf{1})\mu_i
&=\mathbf{t}_i\mathbf{S}\mathbf{w}_i^\top,
\label{eq:app_collect1}
\\
(\mathbf{1}^\top\mathbf{S}\mathbf{t}_i^\top)\alpha_i + (\mathbf{1}^\top\mathbf{S}\mathbf{1})\mu_i
&=\mathbf{1}^\top\mathbf{S}\mathbf{w}_i^\top.
\label{eq:app_collect2}
\end{align}
Writing Eqs.~\ref{eq:app_collect1}--\ref{eq:app_collect2} in matrix form gives the $2\times2$ normal equations:
\begin{equation}
\begin{bmatrix}
\mathbf{t}_i \mathbf{S}\mathbf{t}_i^\top & \mathbf{t}_i \mathbf{S}\mathbf{1} \\
\mathbf{1}^\top \mathbf{S}\mathbf{t}_i^\top & \mathbf{1}^\top \mathbf{S}\mathbf{1}
\end{bmatrix}
\begin{bmatrix}
\alpha_i \\
\mu_i
\end{bmatrix}
=
\begin{bmatrix}
\mathbf{t}_i \mathbf{S}\mathbf{w}_i^\top \\
\mathbf{1}^\top \mathbf{S}\mathbf{w}_i^\top
\end{bmatrix},
\label{eq:app_2x2_system}
\end{equation}
which matches Eq.~\ref{eq:2x2_main} in the main text.

\paragraph{Closed-form solution via Cramer's rule.}
Define the scalars
\begin{equation}
a_i=\mathbf{t}_i \mathbf{S}\mathbf{t}_i^\top,\quad
b_i=\mathbf{t}_i \mathbf{S}\mathbf{1},\quad
c=\mathbf{1}^\top \mathbf{S}\mathbf{1},\quad
d_i=\mathbf{t}_i \mathbf{S}\mathbf{w}_i^\top,\quad
e_i=\mathbf{1}^\top \mathbf{S}\mathbf{w}_i^\top.
\label{eq:app_scalar_defs}
\end{equation}
Then Eq.~\ref{eq:app_2x2_system} becomes
$\begin{bmatrix} a_i & b_i \\ b_i & c \end{bmatrix}\begin{bmatrix}\alpha_i\\ \mu_i\end{bmatrix}=
\begin{bmatrix} d_i\\ e_i\end{bmatrix}$.
Its determinant is
\begin{equation}
D_i = a_i c - b_i^2.
\label{eq:app_det}
\end{equation}
If $D_i\neq 0$, the unique solution is
\begin{equation}
\alpha_i^{*}=\frac{d_i c-b_i e_i}{D_i},
\qquad
\mu_i^{*}=\frac{a_i e_i-b_i d_i}{D_i}.
\label{eq:app_closed_form}
\end{equation}

\paragraph{Uniqueness and degeneracy (why $D_i$ can be small).}
The matrix in Eq.~\ref{eq:app_2x2_system} is the Gram matrix of the two vectors
$\mathbf{t}_i^\top$ and $\mathbf{1}$ under the $\mathbf{S}$-induced inner product
$\langle \mathbf{x},\mathbf{y}\rangle_{\mathbf{S}}=\mathbf{x}^\top\mathbf{S}\mathbf{y}$:
\[
G_i=
\begin{bmatrix}
\langle \mathbf{t}_i^\top,\mathbf{t}_i^\top\rangle_{\mathbf{S}} &
\langle \mathbf{t}_i^\top,\mathbf{1}\rangle_{\mathbf{S}}\\
\langle \mathbf{1},\mathbf{t}_i^\top\rangle_{\mathbf{S}} &
\langle \mathbf{1},\mathbf{1}\rangle_{\mathbf{S}}
\end{bmatrix}
=
\begin{bmatrix}
a_i & b_i\\
b_i & c
\end{bmatrix}.
\]
Since $\mathbf{S}\succeq \mathbf{0}$, we have $a_i\ge 0$, $c\ge 0$, and $D_i=\det(G_i)\ge 0$.
Moreover, $D_i=0$ if and only if $\mathbf{t}_i^\top$ and $\mathbf{1}$ are colinear in the seminorm induced by $\mathbf{S}$
(i.e., linearly dependent modulo the nullspace of $\mathbf{S}$ on $\mathrm{span}\{\mathbf{t}_i^\top,\mathbf{1}\}$).
A practically important degenerate case is $\mathbf{t}_i=\mathbf{0}$ (all ternary zeros), which implies
$a_i=b_i=d_i=0$ and hence $D_i=0$; then $f_i$ does not depend on $\alpha_i$ and the optimal solution reduces to
\begin{equation}
\mu_i^{*}=\frac{\mathbf{1}^\top\mathbf{S}\mathbf{w}_i^\top}{\mathbf{1}^\top\mathbf{S}\mathbf{1}},
\qquad
\alpha_i^{*}=0.
\label{eq:app_degenerate_allzero}
\end{equation}

\paragraph{Practical numerical stabilization.}
With limited calibration samples, $\mathbf{S}$ can be ill-conditioned, making $D_i$ numerically close to zero even when
theoretically positive. We apply moment regularization
\begin{equation}
\mathbf{S}\leftarrow \mathbf{S}+\varepsilon \mathbf{I},
\qquad \varepsilon>0,
\label{eq:app_S_reg}
\end{equation}
which improves conditioning and increases the Gram determinant $D_i$ away from machine precision.
In our implementation, $\varepsilon$ is chosen as a small fraction of the average diagonal mass,
e.g., $\varepsilon=\lambda\cdot \mathrm{Tr}(\mathbf{S})/m$ with $\lambda\in[10^{-6},10^{-3}]$.

When $|D_i|$ remains very small for rare rows, we safely solve Eq.~\ref{eq:app_2x2_system} using a stable routine
(e.g., a direct $2\times2$ solve in double precision) and apply a conservative fallback:
if $|D_i|<\tau$ (a small threshold), we use the Euclidean warm-start values $(\mu_i^{(0)},\alpha_i^{(0)})$ rather than
the potentially unstable closed form. These safeguards preserve the formulation while improving robustness under extreme
activation statistics.

\paragraph{Reconstruction.}
After solving all rows, we aggregate $\boldsymbol{\mu}^{*}$ and $\boldsymbol{\alpha}^{*}$ and reconstruct the relocated
E2M-ATQ weights as
\begin{equation}
\bar{\mathbf{W}}
=
\boldsymbol{\mu}^{*}\mathbf{1}^\top
+
\mathrm{diag}(\boldsymbol{\alpha}^{*})\,\mathbf{T}^{(0)}.
\label{eq:app_reconstruct_Wbar}
\end{equation}

\subsection{Storage and Computational Complexity of Kronecker-Structured Orthogonal Transforms}
\label{app:kron_complexity}

We analyze the storage and computational cost of parameterizing a large auxiliary transform
$\mathbf{R}\in\mathbb{R}^{m\times m}$ as a Kronecker product of two smaller orthogonal factors.
This provides a practical alternative to learning a dense $m\times m$ transform.

\subsubsection{Dense transform is prohibitive at LLM dimensions}
A dense auxiliary matrix $\mathbf{R}\in\mathbb{R}^{m\times m}$ requires storing $m^2$ parameters.
Applying it to a vector $\mathbf{v}\in\mathbb{R}^{1\times m}$ costs $O(m^2)$ multiply-adds.
For LLM hidden sizes $m\in\{4096,\,5120,\,8192,\ldots\}$, both the $m^2$ storage and $m^2$ compute
become prohibitive.

\subsubsection{Kronecker parameterization}
We parameterize the auxiliary transform as
\begin{equation}
\mathbf{R}=\mathbf{R}_1\otimes \mathbf{R}_2,\qquad
\mathbf{R}_1\in\mathbb{R}^{n_1\times n_1},\quad
\mathbf{R}_2\in\mathbb{R}^{n_2\times n_2},\quad
n_1n_2=m,
\label{eq:app_kron_param}
\end{equation}
where each factor is orthogonal: $\mathbf{R}_1^\top\mathbf{R}_1=\mathbf{I}_{n_1}$ and
$\mathbf{R}_2^\top\mathbf{R}_2=\mathbf{I}_{n_2}$, which implies $\mathbf{R}^\top\mathbf{R}=\mathbf{I}_m$.

\paragraph{Storage.}
A dense $\mathbf{R}$ stores $m^2$ entries.
In contrast, the Kronecker form stores only $n_1^2+n_2^2$ entries:
\begin{equation}
\underbrace{m^2}_{\text{dense storage}}
\quad\longrightarrow\quad
\underbrace{(n_1^2+n_2^2)}_{\text{Kronecker storage}}.
\label{eq:app_storage}
\end{equation}
When $n_1\approx n_2\approx \sqrt{m}$, we have $n_1^2+n_2^2\approx 2m$, i.e., nearly linear storage.

\paragraph{Concrete example.}
For $m=4096$, a dense matrix requires
\begin{equation}
m^2 = 4096^2 = 16{,}777{,}216
\end{equation}
stored entries.
Choosing $n_1=n_2=64$ (since $64\cdot 64=4096$) yields only
\begin{equation}
n_1^2+n_2^2 = 64^2+64^2 = 8192
\end{equation}
entries.
The storage reduction factor is therefore
\begin{equation}
\frac{m^2}{n_1^2+n_2^2}
=
\frac{16{,}777{,}216}{8192}
=
2048.
\label{eq:app_storage_ratio_4096}
\end{equation}
Thus, the Kronecker parameterization is $2048\times$ smaller in storage than a dense $4096\times 4096$ transform.

\subsubsection{Fast application via reshape--multiply--vectorize}
Let $\mathbf{v}\in\mathbb{R}^{1\times m}$.
Reshape it into a matrix $\mathbf{V}\in\mathbb{R}^{n_1\times n_2}$ such that
$\mathrm{vec}(\mathbf{V})=\mathbf{v}^\top$.
Using a standard Kronecker identity, we obtain
\begin{equation}
\mathbf{v}(\mathbf{R}_1\otimes \mathbf{R}_2)
=
\mathrm{vec}\!\left(\mathbf{R}_2^{\top}\mathbf{V}\mathbf{R}_1\right)^{\top}.
\label{eq:app_kron_identity}
\end{equation}
Therefore, we can apply $\mathbf{R}$ without forming an $m\times m$ matrix, by computing
\begin{equation}
\mathbf{Y}=\mathbf{V}\mathbf{R}_1 \in \mathbb{R}^{n_1\times n_2},
\qquad
\mathbf{Z}=\mathbf{R}_2^\top \mathbf{Y} \in \mathbb{R}^{n_1\times n_2},
\qquad
\mathbf{v}'=\mathrm{vec}(\mathbf{Z})^\top \in \mathbb{R}^{1\times m}.
\end{equation}

\paragraph{Compute complexity.}
The cost of $\mathbf{V}\mathbf{R}_1$ is $O(n_1 n_2 \cdot n_1)=O(n_1^2n_2)=O(m n_1)$.
The cost of $\mathbf{R}_2^\top\mathbf{Y}$ is $O(n_2 n_1 \cdot n_2)=O(n_2^2n_1)=O(m n_2)$.
Hence the total is
\begin{equation}
O(m n_1) + O(m n_2) = O\big(m(n_1+n_2)\big).
\label{eq:app_compute}
\end{equation}
With $n_1\approx n_2\approx \sqrt{m}$, this becomes $O(m^{3/2})$, substantially smaller than $O(m^2)$.

\paragraph{Concrete compute example ($m=4096$, $n_1=n_2=64$).}
A dense multiplication $\mathbf{v}\mathbf{R}$ uses on the order of $m^2=16.8$ million multiply-adds.
With the Kronecker form, the two multiplies cost on the order of
\begin{equation}
m(n_1+n_2) = 4096(64+64)=524{,}288,
\end{equation}
which is about $16{,}777{,}216 / 524{,}288 \approx 32\times$ fewer operations (up to constant factors).
This makes a structured orthogonal auxiliary transform practical at LLM scale.

\subsection{Why Orthogonal Mixing Suppresses Activation Outliers}
\label{app:act_outlier}

We provide a step-by-step argument that an orthogonal auxiliary transform can reduce
activation outliers by spreading energy across coordinates, while preserving the total energy.
Although our method later uses a structured orthogonal matrix, the argument below applies to any
sufficiently mixing orthogonal $\mathbf{R}\in\mathbb{R}^{m\times m}$.

\subsubsection{Outliers and the peak-to-RMS ratio}
Let $\mathbf{x}\in\mathbb{R}^{1\times m}$ denote a token-wise activation vector.
A common operational notion of ``outliers'' is that a few coordinates are much larger than
the typical scale.
We quantify this with the peak-to-RMS ratio
\begin{equation}
\mathrm{RMS}(\mathbf{x}) := \frac{\|\mathbf{x}\|_2}{\sqrt{m}},
\qquad
\kappa(\mathbf{x}) := \frac{\|\mathbf{x}\|_\infty}{\mathrm{RMS}(\mathbf{x})}
= \frac{\sqrt{m}\,\|\mathbf{x}\|_\infty}{\|\mathbf{x}\|_2}.
\label{eq:app_kappa_def}
\end{equation}
Large $\kappa(\mathbf{x})$ indicates spiky activations: the maximum coordinate is large relative to
the average energy per dimension.

\subsubsection{Orthogonal transforms preserve energy}
Assume $\mathbf{R}$ is orthogonal: $\mathbf{R}^\top\mathbf{R}=\mathbf{I}_m$.
Then, for $\mathbf{y}=\mathbf{x}\mathbf{R}$,
\begin{equation}
\|\mathbf{y}\|_2 = \|\mathbf{x}\mathbf{R}\|_2 = \|\mathbf{x}\|_2.
\label{eq:app_norm_preserve}
\end{equation}
Thus orthogonal mixing does not change the total activation energy, but can change how energy is
distributed across coordinates.

\subsubsection{A probabilistic bound on the maximum coordinate after mixing}
We now show that for a random orthogonal $\mathbf{R}$, each coordinate of $\mathbf{y}=\mathbf{x}\mathbf{R}$
concentrates around scale $\|\mathbf{x}\|_2/\sqrt{m}$, and the maximum over coordinates is only
logarithmically larger.

\paragraph{Step 1: One coordinate is an inner product with a random unit vector.}
Let $\mathbf{R}_j\in\mathbb{R}^{m}$ be the $j$-th column of $\mathbf{R}$.
For orthogonal $\mathbf{R}$ sampled uniformly at random (Haar), $\mathbf{R}_j$ is uniformly distributed on the
unit sphere $S^{m-1}$.
The $j$-th coordinate of $\mathbf{y}$ is
\begin{equation}
y_j = (\mathbf{x}\mathbf{R})_j = \langle \mathbf{x}^\top, \mathbf{R}_j\rangle.
\label{eq:app_inner_prod}
\end{equation}

\paragraph{Step 2: Spherical concentration for a fixed vector.}
Fix $\mathbf{x}$ and let $\mathbf{u}$ be uniform on $S^{m-1}$.
A standard concentration inequality on the sphere gives, for any $t>0$,
\begin{equation}
\Pr\!\left(\big|\langle \mathbf{x}^\top, \mathbf{u}\rangle\big|
\ge t\,\frac{\|\mathbf{x}\|_2}{\sqrt{m}}\right)
\le 2\exp\!\left(-\frac{(m-2)t^2}{2}\right).
\label{eq:app_sphere_tail}
\end{equation}
Applying Eq.~\ref{eq:app_sphere_tail} to $\mathbf{u}=\mathbf{R}_j$ yields the same bound for each $y_j$.

\paragraph{Step 3: Union bound over all coordinates.}
Using a union bound over $j=1,\ldots,m$,
\begin{equation}
\Pr\!\left(\|\mathbf{y}\|_\infty
\ge t\,\frac{\|\mathbf{x}\|_2}{\sqrt{m}}\right)
\le 2m\exp\!\left(-\frac{(m-2)t^2}{2}\right).
\label{eq:app_union}
\end{equation}
Set the right-hand side to $\delta\in(0,1)$ and solve for $t$:
\begin{equation}
2m\exp\!\left(-\frac{(m-2)t^2}{2}\right) \le \delta
\quad\Longrightarrow\quad
t \ge \sqrt{\frac{2\log(2m/\delta)}{m-2}}.
\label{eq:app_t_choice}
\end{equation}
Therefore, with probability at least $1-\delta$,
\begin{equation}
\|\mathbf{x}\mathbf{R}\|_\infty
=
\|\mathbf{y}\|_\infty
\le
\|\mathbf{x}\|_2\sqrt{\frac{2\log(2m/\delta)}{m-2}}.
\label{eq:app_infty_bound}
\end{equation}

\paragraph{Step 4: Peak-to-RMS ratio becomes logarithmic.}
Combining Eq.~\ref{eq:app_norm_preserve} and Eq.~\ref{eq:app_infty_bound}, we obtain
\begin{equation}
\kappa(\mathbf{x}\mathbf{R})
=
\frac{\sqrt{m}\,\|\mathbf{x}\mathbf{R}\|_\infty}{\|\mathbf{x}\mathbf{R}\|_2}
\le
\sqrt{m}\cdot
\sqrt{\frac{2\log(2m/\delta)}{m-2}}
=
\sqrt{\frac{2m}{m-2}\log\!\left(\frac{2m}{\delta}\right)}.
\label{eq:app_kappa_bound}
\end{equation}
For large $m$, this scales as $\kappa(\mathbf{x}\mathbf{R}) \lesssim \sqrt{2\log(2m/\delta)}$,
which grows only logarithmically in $m$.
In contrast, the worst-case $\kappa(\mathbf{x})$ can be as large as $\sqrt{m}$ when energy concentrates
in one coordinate.

\paragraph{Step 5: Existence (probabilistic method).}
Since a random orthogonal $\mathbf{R}$ satisfies Eq.~\ref{eq:app_infty_bound} with probability $1-\delta$,
there must exist at least one orthogonal matrix $\mathbf{R}$ achieving that bound.
Hence orthogonal mixing is \emph{theoretically sufficient} to suppress extreme activation peaks.

\subsubsection{Implication for low-bit activation quantization}
Consider symmetric uniform quantization with $b$ bits and per-tensor scale
$s(\mathbf{x})=\|\mathbf{x}\|_\infty$ (a common conservative choice).
The quantization step size is $\Delta = 2s(\mathbf{x})/(2^b-1)$.
Under standard high-resolution quantization modeling, the mean squared error per coordinate scales as
$O(\Delta^2)$, so reducing $s(\mathbf{x})$ directly reduces quantization error.

Applying Eq.~\ref{eq:app_infty_bound} shows that, after orthogonal mixing,
$s(\mathbf{x}\mathbf{R})=\|\mathbf{x}\mathbf{R}\|_\infty$ is upper-bounded by a logarithmic factor times
$\|\mathbf{x}\|_2/\sqrt{m}$, while $\|\mathbf{x}\|_2$ is unchanged.
Therefore, orthogonal mixing can significantly shrink the required dynamic range (and thus $\Delta$),
improving low-bit activation quantization by reducing the dominance of outliers.

\subsection{Zero-Peak Mass Regularization for TriGMM}
\label{app:kotms-zero-constraint}

The TriGMM objective in Eq.~\ref{eq:loss} is used as a codebook-aligned attraction field rather than a literal density fit. 
Without additional constraints, directly minimizing the tri-modal negative log-likelihood may admit degenerate configurations, e.g., $c_i\to 0$ (peak collapse) or an imbalanced assignment where a single component dominates, which undermines the intended ternary-aligned tri-modal shaping.

\paragraph{Posterior responsibilities.}
For channel $i$ and entry $z_{ij}$, define the component means
$\mu_{i+}=+c_i$, $\mu_{i0}=0$, and $\mu_{i-}=-c_i$. 
Let $\phi(\cdot;\mu,\sigma^2)$ denote the Gaussian density and let the mixture weights satisfy
$\pi_{+}=\pi_{-}=(1-\pi_0)/2$ with $\pi_0\in(0,1)$.
The posterior responsibility of component $k\in\{+,0,-\}$ is
\begin{equation}
\label{eq:app_resp}
r_{ijk}
=
\frac{
\pi_k\,\phi(z_{ij};\mu_{ik},\sigma_i^2)
}{
\sum_{\ell\in\{+,0,-\}}
\pi_\ell\,\phi(z_{ij};\mu_{i\ell},\sigma_i^2)
}.
\end{equation}
We measure the \emph{average zero-peak responsibility} in channel $i$ as
\begin{equation}
\label{eq:app_zero_avg}
\bar r_{i0}
=
\frac{1}{m}\sum_{j=1}^{m} r_{ij0}.
\end{equation}

\paragraph{Zero-peak mass constraint.}
To explicitly control the mass assigned to the ternary zero attractor and prevent collapse,
we steer $\bar r_{i0}$ toward a target sparsity ratio $\rho\in(0,1)$ via
\begin{equation}
\label{eq:app_zero_reg}
L_{\mathrm{zero}}
=
\frac{1}{n}\sum_{i=1}^{n}\big(\bar r_{i0}-\rho\big)^2,
\qquad
L_{\mathrm{shape}}
=
L_{\mathrm{TriGMM}}+\beta\,L_{\mathrm{zero}},
\end{equation}
where $\beta>0$ controls the strength of the regularizer. 
This constraint makes the zero-peak mass an explicit and controllable structural prior, consistent with the support-set semantics of ternary quantization (i.e., entries attracted to the zero peak are encouraged to map to the $0$ state), while empirically preventing degenerate tri-modal shaping such as $c_i\to 0$.

\subsection{Small-Variance Limit: TriGMM as a Soft Projection}
\label{app:kotms-soft-proj}

This section justifies the interpretation that minimizing $L_{\mathrm{TriGMM}}$ induces a soft projection of each scalar $z_{ij}$ onto the ternary-aligned attractor set $\{-c_i,0,+c_i\}$.

\paragraph{Start from the per-entry negative log-mixture.}
Fix a channel $i$ and a scalar $z=z_{ij}$. Consider the negative log-mixture term
\begin{equation}
\label{eq:app_nlogmix}
\ell_i(z)
:=
-\log\!\Big(
\pi_{+}\phi(z;+c_i,\sigma_i^2)
+
\pi_{0}\phi(z;0,\sigma_i^2)
+
\pi_{-}\phi(z;-c_i,\sigma_i^2)
\Big).
\end{equation}
Using
$\phi(z;\mu,\sigma^2)=\frac{1}{\sqrt{2\pi\sigma^2}}\exp\!\big(-\frac{(z-\mu)^2}{2\sigma^2}\big)$,
we can rewrite the mixture as
\begin{equation}
\label{eq:app_mix_exp}
\pi_{+}\phi(z;+c_i,\sigma_i^2)
+
\pi_{0}\phi(z;0,\sigma_i^2)
+
\pi_{-}\phi(z;-c_i,\sigma_i^2)
=
\frac{1}{\sqrt{2\pi\sigma_i^2}}
\sum_{s\in\{-c_i,0,+c_i\}}
\pi_s\,
\exp\!\Big(
-\frac{(z-s)^2}{2\sigma_i^2}
\Big),
\end{equation}
where we use $\pi_{+c_i}=\pi_{+}$, $\pi_{0}=\pi_{0}$, and $\pi_{-c_i}=\pi_{-}$ for notational convenience.
Substituting Eq.~\ref{eq:app_mix_exp} into Eq.~\ref{eq:app_nlogmix} yields
\begin{equation}
\label{eq:app_nlog_lse}
\ell_i(z)
=
\frac{1}{2}\log(2\pi\sigma_i^2)
-
\log\!\left(
\sum_{s\in\{-c_i,0,+c_i\}}
\pi_s\,
\exp\!\Big(
-\frac{(z-s)^2}{2\sigma_i^2}
\Big)
\right).
\end{equation}

\paragraph{Log-sum-exp domination in the small-variance regime.}
When $\sigma_i^2$ is small, the sum inside the logarithm is dominated by the largest exponential term, i.e., the attractor $s$ closest to $z$ in squared distance.
Formally, applying the standard log-sum-exp approximation gives
\begin{equation}
\label{eq:app_lse_approx}
-\log\!\left(
\sum_{s}
\pi_s\,
\exp\!\Big(
-\frac{(z-s)^2}{2\sigma_i^2}
\Big)
\right)
\approx
\min_{s\in\{-c_i,0,+c_i\}}
\left\{
\frac{(z-s)^2}{2\sigma_i^2}
-\log \pi_s
\right\}.
\end{equation}
If the mixture weights are fixed and not extreme (e.g., symmetric side peaks $\pi_{+}=\pi_{-}$), the term $-\log\pi_s$ contributes only an additive bias compared to the dominant quadratic term. 
Combining Eqs.~\ref{eq:app_nlog_lse}--\ref{eq:app_lse_approx}, we obtain
\begin{equation}
\label{eq:app_softproj}
\ell_i(z)
\approx
\frac{1}{2\sigma_i^2}
\min_{s\in\{-c_i,0,+c_i\}}
(z-s)^2
+\mathrm{const}.
\end{equation}

\paragraph{Implication for TriGMM minimization.}
Summing Eq.~\ref{eq:app_softproj} over all entries $(i,j)$ shows that minimizing $L_{\mathrm{TriGMM}}$ approximately minimizes the squared distance of each $z_{ij}$ to the ternary-aligned attractor set $\{-c_i,0,+c_i\}$, i.e., it performs a differentiable \emph{soft projection} prior to the subsequent hard ternary assignment.
This explains why TriGMM shaping is geometrically consistent with the hard ternary projection step used by our quantizer.

\subsection{Why Modeling Only Adjacent-Layer Interactions is Sufficient}
\label{app:adjacent_only}

This subsection provides a theoretical and practical rationale for restricting second-order interaction terms to
adjacent layers. Under mild stability and smoothness assumptions around the reference configuration
$\mathbf{b}_{\max}$, we show that pairwise couplings between non-adjacent layers admit an explicit upper bound that
decays with layer distance. We further clarify the optimization implications: while higher-order adjacent interactions
(e.g., triplets) still preserve a chain structure and remain exactly solvable by dynamic programming (DP) with an
augmented state, introducing interactions across arbitrary layer pairs yields a dense factor graph with high treewidth,
which precludes the efficient chain-structured DP used in our solver. Finally, estimating dense interactions is
significantly more expensive and less statistically stable under limited calibration data.

\paragraph{Notation.}
Consider a network with $L$ layers and candidate activation bitwidths $\mathcal{B}=\{2,4,6,8\}$.
A layer-wise configuration is denoted by $\mathbf{b}=(b_1,\ldots,b_L)\in\mathcal{B}^L$.
Let $b_{\max}=8$ and $\mathbf{b}_{\max}=(b_{\max},\ldots,b_{\max})$ be the reference configuration.
We adopt the validation average per-token negative log-likelihood (NLL) as the value function:
\begin{equation}
v_{\mathrm{NLL}}(\mathbf{b})
:=\mathbb{E}_{u}\big[\ell(x_L(u;\mathbf{b}))\big],
\label{eq:app_vnll_def_new}
\end{equation}
where $u$ indexes validation sequences, $x_L(u;\mathbf{b})$ is the final hidden state, and $\ell(\cdot)$ is the per-token NLL.
We analyze the increment relative to the reference model:
\begin{equation}
\Delta v(\mathbf{b})
:=v_{\mathrm{NLL}}(\mathbf{b})-v_{\mathrm{NLL}}(\mathbf{b}_{\max}).
\label{eq:app_delta_v_new}
\end{equation}
Recall the first-order cost:
\begin{equation}
C_\ell(b)
=
v_{\mathrm{NLL}}\!\Big(b_{\ell}=b,\; b_{k\neq \ell}=b_{\max}\Big)
-
v_{\mathrm{NLL}}(\mathbf{b}_{\max}),
\label{eq:app_C_def_new}
\end{equation}
and the adjacent interaction cost:
\begin{equation}
K_{\ell-1,\ell}(b',b)
=
v_{\mathrm{NLL}}\!\Big(b_{\ell-1}=b',\; b_{\ell}=b,\; b_{k\notin\{\ell-1,\ell\}}=b_{\max}\Big)
-
v_{\mathrm{NLL}}(\mathbf{b}_{\max})
-
C_{\ell-1}(b')
-
C_{\ell}(b).
\label{eq:app_K_adj_def_new}
\end{equation}
For analysis, define the general (not necessarily adjacent) pairwise interaction term for $1\le i<j\le L$:
\begin{equation}
K_{i,j}(b_i,b_j)
=
v_{\mathrm{NLL}}\!\Big(b_{i}=b_i,\; b_{j}=b_j,\; b_{k\notin\{i,j\}}=b_{\max}\Big)
-
v_{\mathrm{NLL}}(\mathbf{b}_{\max})
-
C_{i}(b_i)
-
C_{j}(b_j).
\label{eq:app_K_general_def_new}
\end{equation}

\paragraph{Perturbation model around $\mathbf{b}_{\max}$.}
Let $x_\ell$ denote the input to layer $\ell$ along the forward pass.
Along the reference trajectory $\mathbf{b}_{\max}$:
\begin{equation}
x_{\ell+1}^{0}=F_\ell(x_\ell^{0}), \qquad \ell=1,\ldots,L-1.
\label{eq:app_baseline_forward_new}
\end{equation}
Lowering the activation precision at layer $\ell$ from $b_{\max}$ to $b_\ell$ perturbs the layer mapping, modeled as:
\begin{equation}
x_{\ell+1}=F_\ell(x_\ell)+e_\ell(x_\ell;b_\ell), \qquad e_\ell(\cdot;b_{\max})\equiv 0,
\label{eq:app_perturbed_forward_new}
\end{equation}
and define the deviation from the reference trajectory as $\delta_\ell:=x_\ell-x_\ell^{0}$.

\paragraph{Assumptions (effective stability and smoothness).}
We state assumptions in a form that does not require every individual layer to be contractive; instead, we assume an
effective bound on Jacobian products along the reference trajectory.

\textbf{(A1) Bounded Jacobian products.}
Let $J_\ell(x):=\partial F_\ell(x)/\partial x$ and define the Jacobian product along the reference trajectory:
\begin{equation}
A_{p\to q}
:=
J_{q-1}(x_{q-1}^{0})\,J_{q-2}(x_{q-2}^{0})\cdots J_{p}(x_{p}^{0}),
\qquad 1\le p<q\le L.
\label{eq:app_A_def_new}
\end{equation}
Assume there exists $\rho\in(0,1)$ such that:
\begin{equation}
\|A_{p\to q}\|\le \rho^{\,q-p}
\quad \text{for all } 1\le p<q\le L,
\label{eq:assump_product_rho_new}
\end{equation}
where $\|\cdot\|$ is the operator norm.

\textbf{(A2) Bounded quantization perturbation.}
For any $b\in\mathcal{B}$ and for $x$ in the relevant neighborhood:
\begin{equation}
\|e_\ell(x;b)\|\le \varepsilon_\ell(b).
\label{eq:assump_eps_new}
\end{equation}

\textbf{(A3) Second- and third-order smoothness of the loss.}
Assume $\ell(x_L)$ is three-times differentiable in $x_L$, and there exist constants $M$ and $T$ such that:
\begin{equation}
\|\nabla^2 \ell(x_L)\|\le M,
\qquad
\|\nabla^3 \ell(x_L)\|\le T
\label{eq:assump_M_T_new}
\end{equation}
throughout the same neighborhood.

\paragraph{Lemma 1 (Propagation of perturbations).}
Under Eq.~\ref{eq:app_perturbed_forward_new} and Eq.~\ref{eq:assump_eps_new}, the deviation obeys:
\begin{equation}
\|\delta_{\ell+1}\|
\le
\|J_\ell(\xi_\ell)\|\cdot\|\delta_\ell\|+\varepsilon_\ell(b_\ell),
\label{eq:lem1_recursion_general_new}
\end{equation}
for some $\xi_\ell$ between $x_\ell^{0}$ and $x_\ell$.
In particular, if only layer $i$ is perturbed (all other layers at $b_{\max}$), then:
\begin{equation}
\|\delta_{j}^{(i)}\|
\le
\|A_{i\to j}\|\cdot \varepsilon_i(b_i)
\le
\rho^{\,j-i}\,\varepsilon_i(b_i).
\label{eq:lem1_single_layer_attenuation_new}
\end{equation}

\paragraph{Lemma 2 (Decay of non-adjacent pairwise interactions).}
Fix $1\le i<j\le L$ and consider the four configurations in Eq.~\ref{eq:app_K_general_def_new}:
the baseline $\mathbf{b}_{\max}$, the two single-layer perturbations at $i$ and $j$, and the joint perturbation at $(i,j)$.
Let $\delta x_L^{(i)}$ and $\delta x_L^{(j)}$ denote the baseline-referenced changes in the final hidden state when
perturbing only layer $i$ or only layer $j$, respectively.
Under the linearization around the reference trajectory:
\begin{equation}
\delta x_L^{(i)} \approx A_{i\to L}\,e_i(x_i^{0};b_i),
\qquad
\delta x_L^{(j)} \approx A_{j\to L}\,e_j(x_j^{0};b_j).
\label{eq:lem2_linearized_deltas_new}
\end{equation}
The definition in Eq.~\ref{eq:app_K_general_def_new} cancels the baseline and the two single-layer effects, leaving
the cross term at second order.
A third-order Taylor expansion of $\ell(\cdot)$ around $x_L^{0}$ yields:
\begin{equation}
K_{i,j}(b_i,b_j)
=
\mathbb{E}_u\!\left[
(\delta x_L^{(i)})^\top \nabla^2 \ell(\tilde{x}_u)\,(\delta x_L^{(j)})
\right]
+\; R_{i,j},
\label{eq:lem2_decomposition_new}
\end{equation}
where $\tilde{x}_u$ lies between the reference and the joint-perturbed final state, and $R_{i,j}$ collects third-order
(and higher) remainder terms.

Using Eq.~\ref{eq:assump_M_T_new} and Eq.~\ref{eq:assump_product_rho_new} to bound the linearized perturbations, we have:
\begin{equation}
\big|K_{i,j}(b_i,b_j)\big|
\le
\frac{M}{2}\,\|\delta x_L^{(i)}\|\,\|\delta x_L^{(j)}\|
+\big|R_{i,j}\big|.
\label{eq:lem2_basic_bound_new}
\end{equation}
By splitting $A_{i\to L}=A_{j\to L}A_{i\to j}$:
\begin{equation}
\|\delta x_L^{(i)}\|
\le
\|A_{j\to L}\|\,\|A_{i\to j}\|\,\varepsilon_i(b_i),
\qquad
\|\delta x_L^{(j)}\|
\le
\|A_{j\to L}\|\,\varepsilon_j(b_j).
\label{eq:lem2_delta_bounds_new}
\end{equation}
Substituting into Eq.~\ref{eq:lem2_basic_bound_new} yields the distance-dependent upper bound:
\begin{equation}
\big|K_{i,j}(b_i,b_j)\big|
\le
\frac{M}{2}\,
\|A_{j\to L}\|^2\,
\|A_{i\to j}\|\,
\varepsilon_i(b_i)\,\varepsilon_j(b_j)
+\big|R_{i,j}\big|
\le
\frac{M}{2}\,
\rho^{\,2(L-j)}\,
\rho^{\,j-i}\,
\varepsilon_i(b_i)\,\varepsilon_j(b_j)
+\big|R_{i,j}\big|.
\label{eq:app_K_decay_bound_new}
\end{equation}
Moreover, the remainder can be explicitly controlled using $\|\nabla^3\ell\|\le T$:
\begin{equation}
\big|R_{i,j}\big|
\le
\frac{T}{6}\,
\mathbb{E}_u\!\Big[\big(\|\delta x_L^{(i)}\|+\|\delta x_L^{(j)}\|\big)^3\Big],
\label{eq:app_remainder_bound_new}
\end{equation}
which is cubic in the perturbation magnitudes and negligible in the small-perturbation regime around $\mathbf{b}_{\max}$.

\paragraph{Implication (adjacent interactions dominate).}
Eq.~\ref{eq:app_K_decay_bound_new} shows that under the effective stability condition in
Eq.~\ref{eq:assump_product_rho_new}, the leading second-order interaction between layers $i$ and $j$ decays
geometrically with their distance $(j-i)$ through the factor $\rho^{\,j-i}$, in addition to attenuation to the output
through $\rho^{\,2(L-j)}$.
Hence, adjacent pairs ($j=i+1$) capture the dominant second-order effects, while longer-range pairwise couplings are
attenuated and can be approximated by accumulating local effects along the chain.
This provides a theoretical basis for retaining only adjacent $K_{\ell-1,\ell}$ terms in the surrogate objective.

\paragraph{Estimation and optimization implications.}
\textbf{Estimating dense pairwise interactions is expensive and noisy.}
With $|\mathcal{B}|$ candidate bitwidths, estimating adjacent pairwise tables requires $(L-1)|\mathcal{B}|^2$ interaction
configurations, whereas estimating all pairs requires ${L\choose 2}|\mathcal{B}|^2$ configurations, which is quadratic in $L$
and substantially increases calibration time and estimation variance under limited calibration data.

\textbf{Chain-structured DP is preserved by local factors, but not by dense pairwise graphs.}
With adjacent pairwise interactions only, the energy has a chain form:
\begin{equation}
E(\mathbf{b})
=
\sum_{\ell=1}^{L} C_\ell(b_\ell)
+
\sum_{\ell=2}^{L} K_{\ell-1,\ell}(b_{\ell-1},b_\ell),
\label{eq:app_chain_energy_new}
\end{equation}
which supports efficient DP (including the budget dimension used in Appendix~\ref{app:dp}).
If one adds interactions for all layer pairs:
\begin{equation}
E_{\mathrm{full}}(\mathbf{b})
=
\sum_{\ell=1}^{L} C_\ell(b_\ell)
+
\sum_{1\le i<j\le L} K_{i,j}(b_i,b_j),
\label{eq:app_dense_energy_new}
\end{equation}
the resulting interaction graph is dense and has high treewidth, so exact optimization no longer admits the linear-time
chain DP structure.
Finally, we note that incorporating higher-order adjacent interactions, such as triplets
$T_{\ell-2,\ell-1,\ell}(b_{\ell-2},b_{\ell-1},b_{\ell})$, still preserves a chain factorization and remains exactly solvable
by DP after augmenting the state to retain the two most recent assignments, but it increases both calibration cost
(from $|\mathcal{B}|^2$ to $|\mathcal{B}|^3$ per local factor) and solver complexity accordingly.
For these reasons, we adopt adjacent pairwise interactions as a stable and computationally efficient surrogate.

\subsection{Dynamic Programming for Chain-Structured Mixed-Precision Allocation}
\label{app:dp}

This subsection presents a detailed derivation and implementation-ready dynamic programming (DP) solver for the
chain-structured surrogate objective used in the main text. We restate the objective, show why it admits an exact
DP solution, and then provide the state definition, initialization, recursion, optimality proof sketch, and
backtracking procedure.

\paragraph{Problem statement.}
Consider a network with $L$ layers. Each layer $\ell$ is assigned an activation bitwidth $b_\ell\in\mathcal{B}$,
where $\mathcal{B}=\{2,4,6,8\}$, and the layer-wise configuration is $\mathbf{b}=(b_1,\ldots,b_L)$.
Let $B$ be the global bit budget. Since we allocate bitwidths uniformly across layers (i.e., each layer has equal budget weight),
the total budget consumption is
\begin{equation}
\sum_{\ell=1}^{L} b_\ell \;\le\; B.
\label{eq:app_dp_budget}
\end{equation}
We minimize a chain-structured surrogate objective consisting of unary (per-layer) costs and pairwise (adjacent-layer)
interaction costs:
\begin{equation}
\min_{\mathbf{b}\in\mathcal{B}^L}
\;\;
\sum_{\ell=1}^{L} C_\ell(b_\ell)
\;+\;
\sum_{\ell=2}^{L} K_{\ell-1,\ell}(b_{\ell-1},b_\ell),
\qquad
\text{s.t. } \sum_{\ell=1}^{L} b_\ell \le B.
\label{eq:app_dp_obj}
\end{equation}

\paragraph{Why dynamic programming applies (chain optimal substructure).}
The objective in Eq.~\ref{eq:app_dp_obj} is a sum of local terms on a chain: each $b_\ell$ appears only in
$C_\ell(b_\ell)$ and in at most two pairwise terms $K_{\ell-1,\ell}(b_{\ell-1},b_\ell)$ and
$K_{\ell,\ell+1}(b_\ell,b_{\ell+1})$.
Therefore, if we fix the bitwidth at layer $\ell$ and fix the cumulative budget spent up to $\ell$, the remaining
choices for layers $1{:}\ell-1$ are independent of layers $\ell+1{:}L$ except through $b_\ell$.
This yields the standard optimal substructure property needed for exact DP.

\paragraph{DP state definition.}
For $\ell\in\{1,\ldots,L\}$, cumulative budget $c\in\{0,1,\ldots,B\}$, and current-layer bitwidth $b\in\mathcal{B}$,
define
\begin{equation}
\mathrm{DP}[\ell][c][b]
\;=\;
\min_{\substack{b_1,\ldots,b_{\ell-1}\in\mathcal{B}\\
\sum_{k=1}^{\ell} b_k=c,\;\; b_\ell=b}}
\Bigg(
\sum_{k=1}^{\ell} C_k(b_k)
+
\sum_{k=2}^{\ell} K_{k-1,k}(b_{k-1},b_k)
\Bigg),
\label{eq:app_dp_state}
\end{equation}
i.e., the minimum surrogate objective value among all assignments up to layer $\ell$ that (i) spend exactly $c$
budget and (ii) assign bitwidth $b$ to layer $\ell$.

\paragraph{Initialization.}
At $\ell=1$, there is no pairwise term. For each $b\in\mathcal{B}$, the budget is $c=b$ and
\begin{equation}
\mathrm{DP}[1][c][b] \;=\; C_1(b),\qquad c=b.
\label{eq:app_dp_init}
\end{equation}
All other states are infeasible and are set to $+\infty$.

\paragraph{Deriving the recursion.}
For any $\ell\ge 2$, consider a feasible partial assignment ending at layer $\ell$ with $b_\ell=b$ and total budget $c$.
Let the previous layer choose $b_{\ell-1}=b'$. Then the budget spent on layers $1{:}\ell-1$ must be
$c' = c - b$, and feasibility requires $c'\ge 0$. The surrogate objective decomposes as
\begin{align}
&\sum_{k=1}^{\ell} C_k(b_k) + \sum_{k=2}^{\ell} K_{k-1,k}(b_{k-1},b_k)
\notag\\
&\quad=
\underbrace{\Big(\sum_{k=1}^{\ell-1} C_k(b_k) + \sum_{k=2}^{\ell-1} K_{k-1,k}(b_{k-1},b_k)\Big)}_{\text{objective on }1{:}\ell-1}
\;+\;
C_\ell(b)
\;+\;
K_{\ell-1,\ell}(b',b).
\label{eq:app_dp_decomp}
\end{align}
The bracketed part is exactly the quantity minimized by $\mathrm{DP}[\ell-1][c'][b']$ under the constraints
$b_{\ell-1}=b'$ and budget $c'$. Hence, minimizing over all possible predecessor choices $b'\in\mathcal{B}$ yields
\begin{equation}
\mathrm{DP}[\ell][c][b]
=
C_\ell(b)
+
\min_{b'\in\mathcal{B}}
\left\{
\mathrm{DP}[\ell-1][c-b][b']
+
K_{\ell-1,\ell}(b',b)
\right\},
\label{eq:app_dp_rec}
\end{equation}
subject to the feasibility constraint $c-b\ge 0$; if $c-b<0$, we set $\mathrm{DP}[\ell][c][b]=+\infty$.

\paragraph{Optimal value under a budget \texorpdfstring{$B$}{B}.}
The DP state uses an exact budget $c$. To enforce the global constraint $\sum_\ell b_\ell\le B$, we take the best
value among all terminal states with $c\le B$:
\begin{equation}
\min_{c\le B}\ \min_{b\in\mathcal{B}}\ \mathrm{DP}[L][c][b].
\label{eq:app_dp_opt}
\end{equation}

\paragraph{Backtracking to recover \texorpdfstring{$\{b_\ell^*\}$}{b*}.}
During the forward DP, we store the argmin predecessor
\begin{equation}
\mathrm{Prev}[\ell][c][b]
\;\in\;
\arg\min_{b'\in\mathcal{B}}
\left\{
\mathrm{DP}[\ell-1][c-b][b']
+
K_{\ell-1,\ell}(b',b)
\right\}.
\label{eq:app_dp_prev}
\end{equation}
After selecting the terminal pair $(c^*,b_L^*)$ that attains Eq.~\ref{eq:app_dp_opt}, we recover the full assignment by
iterating backward for $\ell=L,L-1,\ldots,2$:
\begin{equation}
b_{\ell-1}^* \leftarrow \mathrm{Prev}[\ell][c^*][b_\ell^*],
\qquad
c^* \leftarrow c^* - b_\ell^*.
\label{eq:app_dp_backtrack}
\end{equation}
This yields the optimal layer-wise bitwidth assignment $\{b_\ell^*\}_{\ell=1}^{L}$.

\paragraph{Correctness (principle of optimality).}
Eq.~\ref{eq:app_dp_rec} follows from the optimal substructure of the chain objective:
any optimal solution for layers $1{:}\ell$ ending at $(c,b)$ must contain an optimal solution for layers $1{:}\ell-1$
ending at $(c-b, b')$ for some predecessor $b'$, otherwise we could replace the prefix by a better one and
strictly reduce the total objective, contradicting optimality. Thus the DP computes the exact optimum.

\paragraph{Complexity.}
Let $|\mathcal{B}|$ be the number of candidate bitwidths. The recursion in Eq.~\ref{eq:app_dp_rec} evaluates a
$\min$ over $|\mathcal{B}|$ predecessors for each $(\ell,c,b)$, giving time complexity
$O\!\left(L\cdot B\cdot |\mathcal{B}|^2\right)$ and memory $O\!\left(L\cdot B\cdot |\mathcal{B}|\right)$.
In practice, memory can be reduced to $O(B\cdot |\mathcal{B}|)$ by keeping only the previous layer's DP table, while
storing backpointers separately (or recomputing them on demand).

\section{Additional Experimental Results}
\label{app:results}
\subsection{More Detailed Results}
\label{app:detailed main results}
In this appendix, we provide the full expanded results corresponding to Table~\ref{tab:main} in the main paper.

\paragraph{A16 (full-precision activations).}
\begin{table}[h!]
\centering
\caption{Zero-shot accuracy on Arc-Challenge (AC), Arc-Easy (AE), HellaSwag (HS), LAMBADA-openai (LO), LAMBADA-standard (LS), PIQA (PQ), and WinoGrande (WG) under a \textbf{16-bit activation} quantization setting.}
\label{tab:app:tab1}
\resizebox{0.85\linewidth}{!}{
\tablestyle{3pt}{1.1}
\begin{tabular}{clcccccccccc}
\toprule
\textbf{Model} & \textbf{Method} & \textbf{\#Bits(W)} & \textbf{\#Bits(A)} & \textbf{AE}$\uparrow$ & \textbf{AC}$\uparrow$ & \textbf{HS}$\uparrow$ & \textbf{LO}$\uparrow$ & \textbf{LS}$\uparrow$ & \textbf{PQ}$\uparrow$ & \textbf{WG}$\uparrow$ & \textbf{$\text{Avg.}^7$}$\uparrow$  \\
\midrule
\multirow{7}{*}{LLaMA2-7B} & FP16        & 16   & \multirow{7}{*}{16} & 74.66 & 46.25 & 75.96 & 73.45 & 68.19 & 78.73 & 69.22 & 69.49  \\
\cdashline{2-12}
 & GPTQ & 2 &  & 32.45 & 22.61 & 30.53 & 7.72 & 4.08 & 55.44 & 50.83 & 29.09 \\
 & QuaRot & 2 &  & 40.95 & 24.83 & 35.77 & 23.87 & 16.94 & 58.60 & 53.12 & 36.30 \\
 & SliM-LLM & 2MP &  & 50.44 & 25.59 & 52.21 & 52.33 & 39.77 & 62.45 & 59.95 & 48.96 \\
 & PB-LLM & 1.7 &  & 29.29 & 23.04 & 28.33 & 8.77 & 10.15 & 53.37 & 50.12 & 29.01 \\
 & PT$^2$-LLM & 1.58 &  & 47.01 & 25.08 & 46.82 & 32.35 & 28.75 & 62.95 & 56.75 & 42.82 \\
 & \textbf{TWLA} & 1.58 &  & \textbf{65.87} & \textbf{37.63} & \textbf{65.75} & \textbf{70.33} & \textbf{60.94} & \textbf{74.65} & \textbf{65.19} & \textbf{62.91} \\
\midrule

\multirow{7}{*}{LLaMA2-13B} & FP16 & 16 & \multirow{7}{*}{16} & 77.57 & 49.15 & 79.39 & 76.71 & 70.06 & 80.47 & 71.98 & 72.19 \\
\cdashline{2-12}
 & GPTQ & 2 &  & 26.39 & 27.39 & 28.15 & 3.98 & 4.68 & 50.05 & 49.80 & 27.21 \\
 & QuaRot & 2 &  & 51.98 & 29.61 & 49.63 & 51.25 & 38.68 & 65.34 & 58.64 & 49.30 \\
 & SliM-LLM & 2MP &  & 56.71 & 30.46 & 51.76 & 55.17 & 43.21 & 66.87 & 59.83 & 52.00 \\
 & PB-LLM & 1.7 &  & 27.82 & 24.74 & 27.35 & 2.19 & 2.41 & 51.58 & 47.91 & 26.29 \\
 & PT$^2$-LLM & 1.58 &  & 62.85 & 39.67 & 55.22 & 59.20 & 45.21 & 70.77 & 62.89 & 56.54 \\
 & \textbf{TWLA} & 1.58 &  & \textbf{70.24} & \textbf{42.41} & \textbf{72.59} & \textbf{76.40} & \textbf{66.82} & \textbf{76.71} & \textbf{68.75} & \textbf{67.70} \\
\midrule

\multirow{7}{*}{LLaMA2-70B} & FP16 & 16 & \multirow{7}{*}{16} & 81.02 & 57.34 & 83.78 & 79.57 & 74.67 & 82.70 & 77.90 & 76.71 \\
\cdashline{2-12}
 & GPTQ & 2 &  & 49.83 & 29.10 & 44.20 & 44.21 & 31.48 & 63.71 & 56.59 & 45.59 \\
 & QuaRot & 2 &  & 75.50 & 45.60 & 65.20 & 77.30 & 70.60 & 76.00 & 71.30 & 68.79 \\
 & SliM-LLM & 2MP &  & -- & -- & -- & -- & -- & -- & -- & -- \\
 & PB-LLM & 1.7 &  & 50.05 & 30.38 & 50.65 & 54.30 & 45.00 & 63.82 & 61.96 & 50.88 \\
 & PT$^2$-LLM & 1.58 &  & 71.00 & 37.71 & 66.17 & 62.35 & 55.73 & 72.96 & 71.35 & 62.47 \\
 & \textbf{TWLA} & 1.58 &  & \textbf{80.73} & \textbf{53.75} & \textbf{71.87} & \textbf{78.40} & \textbf{73.67} & \textbf{79.40} & \textbf{77.35} & \textbf{73.60} \\
\midrule

\multirow{7}{*}{LLaMA3-8B} & FP16 & 16 & \multirow{7}{*}{16} & 77.90 & 52.82 & 79.07 & 75.63 & 68.58 & 80.63 & 72.93 & 72.51 \\
\cdashline{2-12}
 & GPTQ & 2 &  & 28.28 & 23.29 & 28.46 & 2.17 & 0.60 & 52.07 & 49.80 & 26.38 \\
 & QuaRot & 2 &  & 40.87 & 24.15 & 36.34 & 17.50 & 14.85 & 60.34 & 55.56 & 35.66 \\
 & SliM-LLM & 2MP &  & 31.82 & 21.58 & 29.42 & 3.15 & 5.51 & 52.50 & 48.46 & 27.49 \\
 & PB-LLM & 1.7 &  & 31.52 & 19.88 & 29.73 & 11.66 & 10.89 & 53.86 & 49.72 & 29.32 \\
 & PT$^2$-LLM & 1.58 &  & 34.22 & 22.43 & 43.86 & 35.95 & 26.70 & 56.86 & 53.28 & 39.04 \\
 & \textbf{TWLA} & 1.58 &  & \textbf{67.30} & \textbf{39.42} & \textbf{66.90} & \textbf{64.93} & \textbf{59.21} & \textbf{74.76} & \textbf{68.35} & \textbf{62.98} \\
\midrule

\multirow{7}{*}{Qwen3-8B} & FP16 & 16 & \multirow{7}{*}{16} & 80.93 & 56.74 & 74.98 & 64.18 & 61.11 & 77.37 & 68.35 & 69.09 \\
\cdashline{2-12}
 & GPTQ & 2 &  & 30.35 & 21.93 & 31.46 & 10.79 & 7.51 & 52.99 & 49.72 & 29.25 \\
 & QuaRot & 2 &  & -- & -- & -- & -- & -- & -- & -- & -- \\
 & SliM-LLM & 2MP &  & 42.45 & 28.92 & 42.40 & 23.00 & 20.85 & 60.94 & 55.33 & 39.13 \\
 & PB-LLM & 1.7 &  & 41.08 & 25.09 & 38.15 & 28.53 & 24.57 & 59.25 & 52.33 & 38.43 \\
 & PT$^2$-LLM & 1.58 &  & -- & -- & -- & -- & -- & -- & -- & -- \\
 & \textbf{TWLA} & 1.58 &  & \textbf{71.38} & \textbf{45.99} & \textbf{63.09} & \textbf{60.29} & \textbf{55.13} & \textbf{73.78} & \textbf{64.72} & \textbf{62.05} \\
\midrule

\multirow{7}{*}{Qwen3-14B} & FP16 & 16 & \multirow{7}{*}{16} & 83.08 & 60.49 & 78.82 & 67.84 & 64.47 & 79.76 & 72.85 & 72.47 \\
\cdashline{2-12}
 & GPTQ & 2 &  & 32.49 & 22.35 & 29.01 & 7.37 & 6.31 & 54.03 & 48.70 & 28.61 \\
 & QuaRot & 2 &  & -- & -- & -- & -- & -- & -- & -- & -- \\
 & SliM-LLM & 2MP &  & 52.70 & 33.96 & 53.75 & 35.10 & 34.80 & 66.65 & 59.75 & 48.10 \\
 & PB-LLM & 1.7 &  & 46.21 & 29.78 & 43.15 & 40.07 & 34.87 & 62.24 & 57.30 & 44.80 \\
 & PT$^2$-LLM & 1.58 &  & 53.03 & 25.63 & 50.65 & 37.22 & 35.19 & 62.95 & 59.75 & 46.35 \\
 & \textbf{TWLA} & 1.58 &  & \textbf{76.26} & \textbf{52.30} & \textbf{70.98} & \textbf{69.05} & \textbf{63.95} & \textbf{76.55} & \textbf{70.64} & \textbf{68.48} \\
\midrule

\multirow{7}{*}{Qwen3-32B} & FP16 & 16 & \multirow{7}{*}{16} & 83.21 & 61.09 & 82.60 & 67.24 & 58.04 & 81.99 & 72.77 & 72.13 \\
\cdashline{2-12}
 & GPTQ & 2 &  & 38.72 & 30.63 & 45.48 & 22.22 & 17.87 & 59.58 & 52.80 & 38.18 \\
 & QuaRot & 2 &  & -- & -- & -- & -- & -- & -- & -- & -- \\
 & SliM-LLM & 2MP &  & 74.03 & 52.21 & 68.98 & 52.62 & 55.80 & 78.99 & 70.88 & 64.79 \\
 & PB-LLM & 1.7 &  & 71.04 & 48.72 & 63.18 & 65.94 & 57.38 & 72.14 & 69.06 & 63.92 \\
 & PT$^2$-LLM & 1.58 &  & -- & -- & -- & -- & -- & -- & -- & -- \\
 & \textbf{TWLA} & 1.58 &  & \textbf{79.53} & \textbf{55.33} & \textbf{76.89} & \textbf{67.41} & \textbf{57.33} & \textbf{78.99} & \textbf{71.27} & \textbf{69.54} \\
\bottomrule
\end{tabular}}
\end{table}

When activations remain in full precision (A16), the detailed breakdown in Table~\ref{tab:app:tab1} shows that classical weight-only PTQ baselines such as GPTQ and QuaRot incur substantial average-accuracy losses across model families and scales, with particularly pronounced drops on reasoning-heavy benchmarks (e.g., ARC-C and HellaSwag). In contrast, TWLA consistently delivers the strongest ternary results at the lowest average weight precision (1.58 bits). For example, on LLaMA3-8B, TWLA improves Avg$^7$ from 39.04 (PT$^2$-LLM) to 62.98, a +23.94-point gain (relative +61.3\%), retaining 86.9\% of FP16 performance. On Qwen3-14B, TWLA increases Avg$^7$ from 48.10 (SliM-LLM) to 68.48 (+20.38 points, +42.4\%), reaching 94.5\% of FP16 (72.47). On LLaMA2-70B, TWLA attains 73.60 Avg$^7$ versus 68.79 for QuaRot (+4.81 points, +7.0\%), and remains only 3.11 points below FP16 (76.71), i.e., a 4.05\% gap.

\paragraph{A6 (6-bit activations).}
\begin{table}[h!]
\centering
\caption{Zero-shot accuracy on Arc-Challenge (AC), Arc-Easy (AE), HellaSwag (HS), LAMBADA-openai (LO), LAMBADA-standard (LS), PIQA (PQ), and WinoGrande (WG) under a \textbf{6-bit activation} quantization setting.}
\label{tab:app:tab2}
\resizebox{0.85\linewidth}{!}{
\tablestyle{3pt}{1.1}
\begin{tabular}{clcccccccccc}
\toprule
\textbf{Model} & \textbf{Method} & \textbf{\#Bits(W)} & \textbf{\#Bits(A)} & \textbf{AE}$\uparrow$ & \textbf{AC}$\uparrow$ & \textbf{HS}$\uparrow$ & \textbf{LO}$\uparrow$ & \textbf{LS}$\uparrow$ & \textbf{PQ}$\uparrow$ & \textbf{WG}$\uparrow$ & \textbf{$\text{Avg.}^7$}$\uparrow$  \\
\midrule
\multirow{7}{*}{LLaMA2-7B}
& GPTQ       & 2     & 6      & 29.34 & 23.46 & 29.22 & 7.06  & 3.16  & 52.45 & 50.91 & 27.94 \\
& QuaRot     & 2     & 6      & 40.70 & 23.72 & 35.21 & 21.41 & 15.58 & 58.00 & 51.93 & 35.22 \\
& ResQ       & 2.3MP & 6.1MP  & 55.70 & 29.52 & 52.30 & 53.55 & 39.85 & 67.90 & 59.43 & 51.18 \\
& SliM-LLM   & 2MP   & 6      & 38.75 & 24.15 & 42.60 & 25.15 & 17.30 & 58.05 & 53.43 & 37.06 \\
& PB-LLM     & 1.7   & 6      & 30.35 & 21.67 & 33.75 & 11.70 & 12.15 & 52.56 & 52.01 & 30.60 \\
& PT$^2$-LLM & 1.58  & 6      & 44.82 & 25.79 & 42.97 & 22.07 & 17.98 & 60.97 & 55.67 & 38.61 \\
& \textbf{TWLA} & 1.58 & 6MP  & \textbf{64.56} & \textbf{36.69} & \textbf{64.92} & \textbf{70.13} & \textbf{59.23} & \textbf{74.70} & \textbf{63.61} & \textbf{61.98} \\
\midrule

\multirow{7}{*}{LLaMA2-13B}
& GPTQ       & 2     & 6      & 26.39 & 28.16 & 26.69 & 0.17  & 0.58  & 50.92 & 48.38 & 25.90 \\
& QuaRot     & 2     & 6      & 49.45 & 27.82 & 48.74 & 49.56 & 37.18 & 65.29 & 57.62 & 47.95 \\
& ResQ       & 2.3MP & 6.1MP  & 60.40 & 34.64 & 56.75 & 60.45 & 48.55 & 71.11 & 63.54 & 56.49 \\
& SliM-LLM   & 2MP   & 6      & 48.20 & 27.30 & 48.50 & 31.65 & 29.20 & 63.38 & 55.01 & 43.32 \\
& PB-LLM     & 1.7   & 6      & 28.05 & 24.06 & 28.10 & 0.70  & 0.55  & 51.90 & 52.88 & 26.61 \\
& PT$^2$-LLM & 1.58  & 6      & 58.40 & 36.23 & 52.58 & 44.94 & 37.30 & 67.31 & 59.01 & 51.82 \\
& \textbf{TWLA} & 1.58 & 6MP  & \textbf{69.23} & \textbf{42.49} & \textbf{71.02} & \textbf{75.98} & \textbf{65.28} & \textbf{76.28} & \textbf{67.88} & \textbf{66.88} \\
\midrule

\multirow{7}{*}{LLaMA2-70B}
& GPTQ       & 2     & 6      & 39.81 & 25.43 & 33.98 & 28.31 & 18.36 & 58.22 & 47.12 & 35.89 \\
& QuaRot     & 2     & 6      & 74.10 & 45.00 & 64.70 & 76.20 & 68.50 & 75.60 & 71.70 & 67.97 \\
& ResQ       & 2.3MP & 6.1MP  & 37.90 & 32.60 & 64.80 & 75.70 & 68.10 & 53.50 & 72.10 & 57.81 \\
& SliM-LLM   & 2MP   & 6      & --    & --    & --    & --    & --    & --    & --    & --    \\
& PB-LLM     & 1.7   & 6      & 39.10 & 26.79 & 42.15 & 38.95 & 29.70 & 55.71 & 53.83 & 40.89 \\
& PT$^2$-LLM & 1.58  & 6      & 58.70 & 32.96 & 51.75 & 38.10 & 38.80 & 64.65 & 57.75 & 48.96 \\
& \textbf{TWLA} & 1.58 & 6MP  & \textbf{80.07} & \textbf{53.50} & \textbf{71.60} & \textbf{77.60} & \textbf{73.60} & \textbf{79.93} & \textbf{77.82} & \textbf{73.45} \\
\midrule

\multirow{7}{*}{LLaMA3-8B}
& GPTQ       & 2     & 6      & 25.84 & 26.11 & 27.17 & 0.54  & 0.10  & 49.56 & 50.51 & 25.69 \\
& QuaRot     & 2     & 6      & 41.31 & 24.57 & 35.50 & 18.20 & 12.15 & 58.16 & 58.67 & 35.51 \\
& ResQ       & 2.3MP & 6.1MP  & 51.00 & 29.52 & 50.75 & 39.90 & 37.20 & 64.85 & 59.75 & 47.57 \\
& SliM-LLM   & 2MP   & 6      & 29.15 & 23.72 & 31.50 & 2.15  & 2.30  & 51.36 & 49.88 & 27.15 \\
& PB-LLM     & 1.7   & 6      & 32.10 & 19.54 & 35.65 & 12.25 & 10.00 & 52.99 & 49.41 & 30.28 \\
& PT$^2$-LLM & 1.58  & 6      & 29.93 & 21.79 & 35.79 & 14.22 & 12.90 & 52.99 & 48.02 & 30.81 \\
& \textbf{TWLA} & 1.58 & 6MP  & \textbf{65.15} & \textbf{35.86} & \textbf{65.84} & \textbf{63.87} & \textbf{58.20} & \textbf{73.83} & \textbf{67.32} & \textbf{61.44} \\
\midrule

\multirow{7}{*}{Qwen3-8B}
& GPTQ       & 2     & 6      & 26.81 & 24.06 & 28.03 & 2.87  & 1.94  & 51.52 & 51.30 & 26.65 \\
& QuaRot     & 2     & 6      & --    & --    & --    & --    & --    & --    & --    & --    \\
& ResQ       & 2.3MP & 6.1MP  & --    & --    & --    & --    & --    & --    & --    & --    \\
& SliM-LLM   & 2MP   & 6      & 33.90 & 25.00 & 36.15 & 12.15 & 9.70 & 54.95 & 47.99 & 31.41 \\
& PB-LLM     & 1.7   & 6      & 30.90 & 22.70 & 36.75 & 14.90 & 13.90 & 53.92 & 48.62 & 31.67 \\
& PT$^2$-LLM & 1.58  & 6      & --    & --    & --    & --    & --    & --    & --    & -- \\
& \textbf{TWLA} & 1.58 & 6MP  & \textbf{69.88} & \textbf{45.27} & \textbf{56.85} & \textbf{54.97} & \textbf{51.73} & \textbf{72.77} & \textbf{61.73} & \textbf{59.03} \\
\midrule

\multirow{7}{*}{Qwen3-14B}
& GPTQ       & 2     & 6      & 29.50 & 22.27 & 27.56 & 4.62  & 4.09  & 51.14 & 48.15 & 26.76 \\
& QuaRot     & 2     & 6      & --    & --    & --    & --    & --    & --    & --    & --    \\
& ResQ       & 2.3MP & 6.1MP  & --    & --    & --    & --    & --    & --    & --    & --    \\
& SliM-LLM   & 2MP   & 6      & 44.25 & 28.75 & 44.70 & 15.15 & 14.10 & 60.12 & 55.17 & 37.46 \\
& PB-LLM     & 1.7   & 6      & 40.80 & 24.74 & 42.05 & 23.25 & 19.40 & 58.32 & 52.41 & 37.28 \\
& PT$^2$-LLM & 1.58  & 6      & 42.21 & 27.72 & 43.20 & 14.33 & 12.12 & 58.13 & 54.19 & 36.00 \\
& \textbf{TWLA} & 1.58 & 6MP  & \textbf{74.41} & \textbf{50.85} & \textbf{69.89} & \textbf{67.88} & \textbf{62.25} & \textbf{75.57} & \textbf{70.17} & \textbf{67.29} \\
\midrule

\multirow{7}{*}{Qwen3-32B}
& GPTQ       & 2     & 6      & 32.28 & 26.71 & 36.34 & 10.64 & 8.67  & 54.41 & 49.01 & 31.15 \\
& QuaRot     & 2     & 6      & --    & --    & --    & --    & --    & --    & --    & --    \\
& ResQ       & 2.3MP & 6.1MP  & --    & --    & --    & --    & --    & --    & --    & --    \\
& SliM-LLM   & 2MP   & 6      & 61.70 & 39.93 & 51.40 & 37.45 & 40.90 & 65.89 & 62.90 & 52.43 \\
& PB-LLM     & 1.7   & 6      & 61.70 & 39.93 & 51.40 & 37.45 & 40.90 & 65.89 & 62.90 & 51.45 \\
& PT$^2$-LLM & 1.58  & 6      & --    & --    & --    & --    & --    & --    & --    & --  \\
& \textbf{TWLA} & 1.58 & 6MP  & \textbf{77.49} & \textbf{53.18} & \textbf{75.33} & \textbf{66.33} & \textbf{56.20} & \textbf{77.27} & \textbf{71.11} & \textbf{68.13} \\
\bottomrule
\end{tabular}
}
\end{table}
Under 6-bit activations, Table~\ref{tab:app:tab2} shows that performance gaps widen substantially: GPTQ/QuaRot often stay in low-accuracy regimes, and even stronger baselines (SliM-LLM and PB-LLM) exhibit clear degradation relative to A16. In contrast, TWLA remains the top performer across all evaluated models and is notably stable as activation precision is reduced. Concretely, on LLaMA2-13B, TWLA improves Avg$^7$ from 56.49 (ResQ) to 66.88 (+10.39 points, +18.4\%). On Qwen3-32B, TWLA raises Avg$^7$ from 51.45 (PB-LLM) to 68.13 (+16.68 points, +32.4\%). The collapse of conventional methods is particularly evident on LLaMA2-70B, where GPTQ attains 35.89 Avg$^7$ while TWLA reaches 73.45 (+37.56 points, +104.6\%). Importantly, TWLA’s own drop from A16 to A6 is minimal on large models (e.g., LLaMA2-70B: 73.60 $\rightarrow$ 73.45, only a 0.20\% relative decrease).

\paragraph{A4 (4-bit activations).}
\begin{table}[h!]
\centering
\caption{Zero-shot accuracy on Arc-Challenge (AC), Arc-Easy (AE), HellaSwag (HS), LAMBADA-openai (LO), LAMBADA-standard (LS), PIQA (PQ), and WinoGrande (WG) under a \textbf{4-bit activation} quantization setting.}
\label{tab:app:tab3}
\resizebox{0.85\linewidth}{!}{
\tablestyle{3pt}{1.1}
\begin{tabular}{clcccccccccc}
\toprule
\textbf{Model} & \textbf{Method} & \textbf{\#Bits(W)} & \textbf{\#Bits(A)} & \textbf{AE}$\uparrow$ & \textbf{AC}$\uparrow$ & \textbf{HS}$\uparrow$ & \textbf{LO}$\uparrow$ & \textbf{LS}$\uparrow$ & \textbf{PQ}$\uparrow$ & \textbf{WG}$\uparrow$ & \textbf{$\text{Avg.}^7$}$\uparrow$  \\
\midrule
\multirow{7}{*}{LLaMA2-7B} & GPTQ & 2 & 4 & 26.26 & 27.73 & 26.35 & 0.00 & 0.00 & 49.89 & 50.36 & 25.80 \\
& QuaRot & 2 & 4 & 35.02 & 22.87 & 31.34 & 10.27 & 8.48 & 56.75 & 51.07 & 30.83 \\
& ResQ & 2.3MP & 4.2MP & 46.65 & 28.50 & 48.05 & 45.30 & 32.10 & 63.32 & 52.72 & 45.23 \\
& SliM-LLM & 2MP & 4 & 26.80 & 25.43 & 28.10 & 0.15 & 0.15 & 49.24 & 49.80 & 25.67 \\
& PB-LLM & 1.7 & 4 & 28.20 & 25.26 & 29.15 & 0.70 & 0.35 & 50.11 & 51.46 & 26.46 \\
& PT$^2$-LLM & 1.58 & 4 & 29.20 & 26.21 & 30.22 & 0.90 & 0.65 & 51.23 & 52.77 & 27.31 \\
& \textbf{TWLA} & 1.58 & 4MP & \textbf{58.88} & \textbf{34.73} & \textbf{60.81} & \textbf{64.45} & \textbf{53.76} & \textbf{71.27} & \textbf{62.12} & \textbf{58.00} \\
\midrule

\multirow{7}{*}{LLaMA2-13B} & GPTQ & 2 & 4 & 26.39 & 27.82 & 25.93 & 0.00 & 0.00 & 48.53 & 48.78 & 25.35 \\
& QuaRot & 2 & 4 & 39.56 & 25.77 & 39.94 & 32.08 & 23.37 & 60.12 & 55.09 & 39.42 \\
& ResQ & 2.3MP & 4.2MP & 56.75 & 32.94 & 54.80 & 50.10 & 38.70 & 68.66 & 60.93 & 51.84 \\
& SliM-LLM & 2MP & 4 & 27.15 & 25.17 & 30.45 & 0.40 & 0.00 & 50.87 & 51.07 & 26.44 \\
& PB-LLM & 1.7 & 4 & 26.10 & 26.54 & 26.95 & 0.00 & 0.00 & 48.69 & 50.51 & 25.54 \\
& PT$^2$-LLM & 1.58 & 4 & 27.10 & 26.54 & 28.95 & 0.00 & 0.00 & 49.69 & 51.22 & 26.19 \\
& \textbf{TWLA} & 1.58 & 4MP & \textbf{66.58} & \textbf{39.85} & \textbf{68.20} & \textbf{73.24} & \textbf{60.82} & \textbf{74.27} & \textbf{67.17} & \textbf{64.30} \\
\midrule

\multirow{7}{*}{LLaMA2-70B} & GPTQ & 2 & 4 & 26.39 & 26.88 & 26.14 & 0.00 & 0.00 & 49.46 & 52.57 & 25.92 \\
& QuaRot & 2 & 4 & 63.50 & 34.20 & 55.20 & 57.50 & 35.20 & 68.60 & 61.30 & 53.64 \\
& ResQ & 2.3MP & 4.2MP & 42.60 & 34.60 & 62.40 & 70.00 & 60.90 & 55.10 & 69.40 & 56.43 \\
& SliM-LLM & 2MP & 4 & -- & -- & -- & -- & -- & -- & -- & -- \\
& PB-LLM & 1.7 & 4 & 26.95 & 24.40 & 29.50 & 0.55 & 0.45 & 50.44 & 50.75 & 26.18 \\
& PT$^2$-LLM & 1.58 & 4 & 26.95 & 25.40 & 29.59 & 0.35 & 0.15 & 50.22 & 50.66 & 26.19 \\
& \textbf{TWLA} & 1.58 & 4MP & \textbf{78.60} & \textbf{51.88} & \textbf{70.00} & \textbf{75.93} & \textbf{69.07} & \textbf{77.33} & \textbf{74.90} & \textbf{71.10} \\
\midrule

\multirow{7}{*}{LLaMA3-8B} & GPTQ & 2 & 4 & 26.56 & 25.00 & 26.46 & 0.00 & 0.00 & 50.38 & 51.07 & 25.64 \\
& QuaRot & 2 & 4 & 31.61 & 21.42 & 29.26 & 4.81 & 4.97 & 53.43 & 48.30 & 27.69 \\
& ResQ & 2.3MP & 4.2MP & 42.90 & 26.96 & 43.85 & 23.35 & 21.75 & 59.64 & 55.38 & 39.12 \\
& SliM-LLM & 2MP & 4 & 27.15 & 25.68 & 28.05 & 0.05 & 0.00 & 48.69 & 47.99 & 25.37 \\
& PB-LLM & 1.7 & 4 & 29.70 & 22.78 & 30.30 & 0.85 & 0.50 & 51.03 & 48.86 & 26.29 \\
& PT$^2$-LLM & 1.58 & 4 & 28.71 & 22.02 & 30.10 & 0.75 & 0.30 & 50.83 & 47.99 & 25.81 \\
& \textbf{TWLA} & 1.58 & 4MP & \textbf{60.23} & \textbf{34.56} & \textbf{59.83} & \textbf{51.87} & \textbf{45.47} & \textbf{71.38} & \textbf{63.30} & \textbf{55.23} \\
\midrule

\multirow{7}{*}{Qwen3-8B} & GPTQ & 2 & 4 & 25.17 & 26.37 & 25.47 & 0.00 & 0.00 & 50.38 & 49.64 & 25.29 \\
& QuaRot & 2 & 4 & -- & -- & -- & -- & -- & -- & -- & -- \\
& ResQ & 2.3MP & 4.2MP & -- & -- & -- & -- & -- & -- & -- & -- \\
& SliM-LLM & 2MP & 4 & 26.35 & 26.28 & 27.10 & 0.00 & 0.00 & 48.75 & 50.75 & 25.60 \\
& PB-LLM & 1.7 & 4 & 28.85 & 23.81 & 28.65 & 0.15 & 0.05 & 52.18 & 50.12 & 26.26 \\
& PT$^2$-LLM & 1.58 & 4 & -- & -- & -- & -- & -- & -- & -- & -- \\
& \textbf{TWLA} & 1.58 & 4MP & \textbf{57.19} & \textbf{35.83} & \textbf{51.82} & \textbf{40.49} & \textbf{39.97} & \textbf{68.22} & \textbf{59.45} & \textbf{50.42} \\
\midrule

\multirow{7}{*}{Qwen3-14B} & GPTQ & 2 & 4 & 26.60 & 25.94 & 25.74 & 0.00 & 0.00 & 52.07 & 50.20 & 25.79 \\
& QuaRot & 2 & 4 & -- & -- & -- & -- & -- & -- & -- & -- \\
& ResQ & 2.3MP & 4.2MP & -- & -- & -- & -- & -- & -- & -- & -- \\
& SliM-LLM & 2MP & 4 & 26.35 & 25.77 & 27.40 & 0.00 & 0.00 & 50.27 & 49.57 & 25.62 \\
& PB-LLM & 1.7 & 4 & 28.05 & 23.63 & 31.90 & 0.80 & 0.65 & 50.60 & 49.01 & 29.23 \\
& PT$^2$-LLM & 1.58 & 4 & -- & -- & -- & -- & -- & -- & -- & -- \\
& \textbf{TWLA} & 1.58 & 4MP & \textbf{68.39} & \textbf{45.05} & \textbf{65.29} & \textbf{61.48} & \textbf{57.17} & \textbf{72.91} & \textbf{63.61} & \textbf{62.00} \\
\midrule

\multirow{7}{*}{Qwen3-32B} & GPTQ & 2 & 4 & 24.62 & 25.60 & 26.13 & 0.00 & 0.00 & 49.29 & 48.30 & 24.85 \\
& QuaRot & 2 & 4 & -- & -- & -- & -- & -- & -- & -- & -- \\
& ResQ & 2.3MP & 4.2MP & -- & -- & -- & -- & -- & -- & -- & -- \\
& SliM-LLM & 2MP & 4 & 26.10 & 24.77 & 29.80 & 0.85 & 0.55 & 50.89 & 51.22 & 26.31 \\
& PB-LLM & 1.7 & 4 & 27.10 & 25.77 & 28.80 & 0.30 & 0.40 & 51.09 & 51.07 & 26.36 \\
& PT$^2$-LLM & 1.58 & 4 & -- & -- & -- & -- & -- & -- & -- & -- \\
& \textbf{TWLA} & 1.58 & 4MP & \textbf{73.34} & \textbf{51.83} & \textbf{67.80} & \textbf{65.90} & \textbf{55.50} & \textbf{74.29} & \textbf{68.10} & \textbf{65.25} \\
\bottomrule
\end{tabular}
}
\end{table}
The A4 setting represents an extreme low-precision activation regime. As shown in Table~\ref{tab:app:tab3}, most baselines exhibit severe failure modes, with pronounced degradation on LAMBADA (LO/LS) and $\mathrm{Avg}^7$ approaching chance-level on several models. By contrast, TWLA remains markedly resilient at A4 and preserves strong margins, especially at scale. For LLaMA2-70B, TWLA improves $\mathrm{Avg}^7$ from 53.64 (QuaRot) to 71.10 (+17.46 points, +32.6\%); on LAMBADA-standard, TWLA reaches 77.33 versus 21.50 for QuaRot (+55.83 points, +259.7\%). On Qwen3-32B, TWLA increases $\mathrm{Avg}^7$ from 26.36 (PB-LLM) to 65.25 (+38.89 points, +147.5\%), and from 24.85 (GPTQ) to 65.25 (+40.40 points, +162.6\%). Overall, these results highlight TWLA’s strong robustness and high quantization quality under 4-bit activations.

\paragraph{Summary.}
Overall, the expanded tables consistently show that TWLA offers superior robustness and scalability as activation precision decreases from 16 to 6 and further to 4 bits. While existing PTQ methods rapidly lose accuracy (and often collapse) under low-bit activations, TWLA maintains strong and stable performance across model families and sizes. This stability is evident from the small performance drift as activations are reduced on large models; for example, on Qwen3-32B, Avg$^7$ changes only slightly from 69.54 (A16) to 68.13 (A6, $-2.03\%$) and remains high at 65.25 under A4 ($-4.23\%$ from A6; $-6.17\%$ from A16). These results support TWLA as a practical PTQ solution for extreme low-precision inference with ternary weights and low-bit activations.

\subsection{Ablation study on Calibration Data}
\label{app:dataset}
\begin{table}[h!]
\centering
\caption{Ablation study on calibration set type.}
\label{tab:app:dataset}
\resizebox{0.7\linewidth}{!}{
\tablestyle{3pt}{1.1}
\begin{tabular}{cccccc}
\toprule
\textbf{Model} & \textbf{Calibration Data Type} & \textbf{Wikitext2}$\downarrow$ & \textbf{C4}$\downarrow$  & \textbf{PTB}$\downarrow$ & \textbf{$\text{Avg.}^7$}$\uparrow$  \\
\midrule
\multirow{3}{*}{LLaMA-2-7B}
& Wikitext2 & 8.31 &12.47 &58.12 &58.00  \\
& C4        &11.76  &10.77  &60.10 &57.39  \\
& PTB       &13.10  &19.29  &54.70 &56.09  \\
\midrule
\multirow{3}{*}{LLaMA-3-8B}
& Wikitext2 &12.83  &19.31  &21.93 &55.23  \\
& C4        &16.37  &15.57  &21.78 &54.96  \\
& PTB       &16.59  &16.21  &18.49 &53.39  \\
\midrule
\multirow{3}{*}{Qwen3-8B}
& Wikitext2 &16.26  &22.72  &28.65 & 50.42 \\
& C4        &21.37  &17.33  &29.14 & 49.16 \\
& PTB       &21.95  &23.08  &23.37 & 47.73 \\
\bottomrule
\end{tabular}}
\end{table}

\begin{figure}
    \centering
    \includegraphics[width=0.8\linewidth]{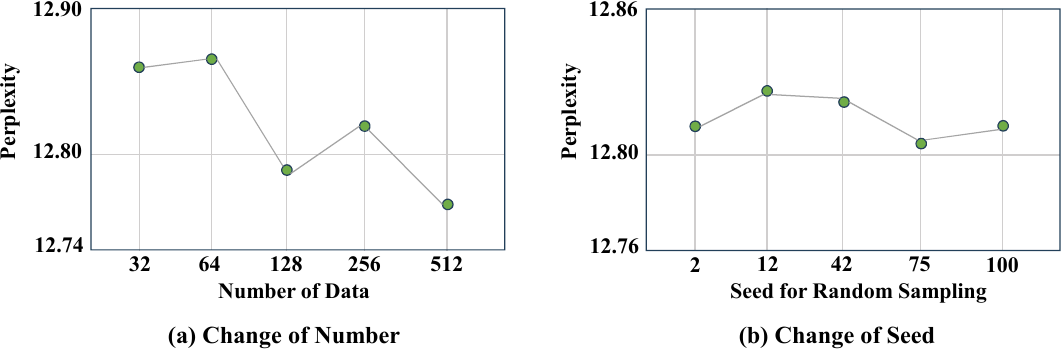}
    \caption{Perplexity of LLaMA3-8B (W1.58A4) using calibration data sampled with different number or seeds from WikiText2.}
    \label{fig:dataset}
\end{figure}

We conducted an ablation on calibration data choice using three corpora (WikiText2, C4, and PTB) to examine how the calibration set affects TWLA quantization. As summarized in Table~\ref{tab:app:dataset}, calibrating on WikiText2 or C4 yields comparable $\mathrm{Avg.}$ accuracy across all tested models, whereas PTB consistently underperforms by a clear margin. One possible factor is the domain and diversity mismatch of PTB relative to the evaluation suites, which may yield less representative activation statistics; we leave a controlled domain-matching test for future work. In addition, calibrating on WikiText2 (resp.\ C4) improves perplexity on the same in-domain benchmark (WikiText2 / C4), highlighting a pronounced domain-matching advantage: the calibration corpus not only impacts downstream accuracy but also directly benefits the perplexity of the corresponding evaluation domain.

We further investigated the impact of calibration data on TWLA. Specifically, when binarizing LLaMA3-8B on WikiText-2, we fixed all quantization hyperparameters and varied the calibration set by (i) subsampling different numbers of calibration samples and (ii) resampling with multiple random seeds that control both sample selection and token order. For each setting, we re-quantized once and evaluated perplexity on a held-out split. Across all sizes and seeds, the resulting perplexity fluctuates by $<$1\% relative to the mean. As shown in Fig.~\ref{fig:dataset}, the curves remain nearly flat as calibration size increases, and seed-wise traces largely overlap, indicating that even our smallest tested calibration subsets perform on par with larger ones. This stability suggests that TWLA does not rely on large-scale calibration corpora to obtain reliable activation statistics; instead, a small but representative subset is sufficient to reach near-saturated performance, which substantially reduces the calibration overhead in practical deployments. These observations confirm that TWLA is highly robust to calibration data selection.

\subsection{Ablation study on ILA-AMP}
\label{app:amp}
\begin{figure}
    \centering
    \includegraphics[width=0.7\linewidth]{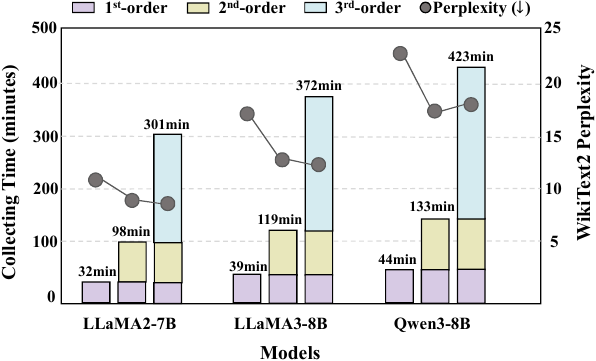}
    \caption{Calibration cost versus quantization quality under different interaction orders. Bars report the wall-clock time (minutes) spent collecting validation NLL statistics for 1st-, 2nd-, and 3rd-order interaction modeling, and dots report WikiText2 perplexity (lower is better).}
    \label{fig:amp}
\end{figure}

\paragraph{Benefit--cost trade-off across interaction orders.}

To characterize the calibration overhead induced by different interaction orders, we measure the cost by the wall-clock time required to collect validation NLL statistics.
Let $|\mathcal{B}|$ denote the number of candidate bitwidths (with $|\mathcal{B}|=4$ in our setting) and $L$ the number of layers.
When interactions are restricted to local adjacent neighborhoods, the calibration cost (in terms of the number of NLL-evaluation configurations, which typically translates into wall-clock time) grows approximately as
\begin{equation}
\mathrm{Cost}(k)\ \propto\ N_k,
\qquad
N_1 = L\,|\mathcal{B}|,
\quad
N_2 = (L-1)\,|\mathcal{B}|^2,
\quad
N_3 = (L-2)\,|\mathcal{B}|^3,
\label{eq:app_cost_order}
\end{equation}
where $k\in\{1,2,3\}$ denotes the interaction order.
The 1st-order variant uses only unary layer sensitivities (i.e., single-layer NLL perturbations) and thus implicitly assumes that quantization effects are independent across layers.
The 2nd-order variant (ILA-AMP) additionally models adjacent-layer pairwise couplings to capture error propagation across neighboring layers.
The 3rd-order variant further incorporates adjacent triplet interactions.
As shown in Fig.~\ref{fig:amp}, we compare 1st-, 2nd-, and 3rd-order designs on three representative models,
\textsc{LLaMA2-7B}, \textsc{LLaMA3-8B}, and \textsc{Qwen3-8B}, reporting both the NLL-collection time and the WikiText2 perplexity (lower is better).
Moving from 1st-order to 2nd-order (ILA-AMP) yields a clear and consistent perplexity reduction at an acceptable increase in calibration time, indicating that adjacent pairwise couplings capture the dominant non-additive effects caused by cross-layer distribution shifts.
In contrast, extending to 3rd-order incurs a substantial additional time overhead while providing little to no further perplexity improvement, and it may even slightly degrade perplexity.
For example, on \textsc{Qwen3-8B}, upgrading from 2nd-order to 3rd-order increases the NLL-collection time by nearly 300 minutes (from 133 minutes to 423 minutes), yet the perplexity slightly increases.
Overall, these results empirically justify our choice of a 2nd-order adjacent-interaction surrogate: it delivers most of the accuracy benefit with a significantly better cost--effectiveness profile than higher-order alternatives.

\paragraph{Layer-wise activation bit allocation under a 4-bit budget.}
\begin{figure}[h!]
    \centering
    \includegraphics[width=\linewidth]{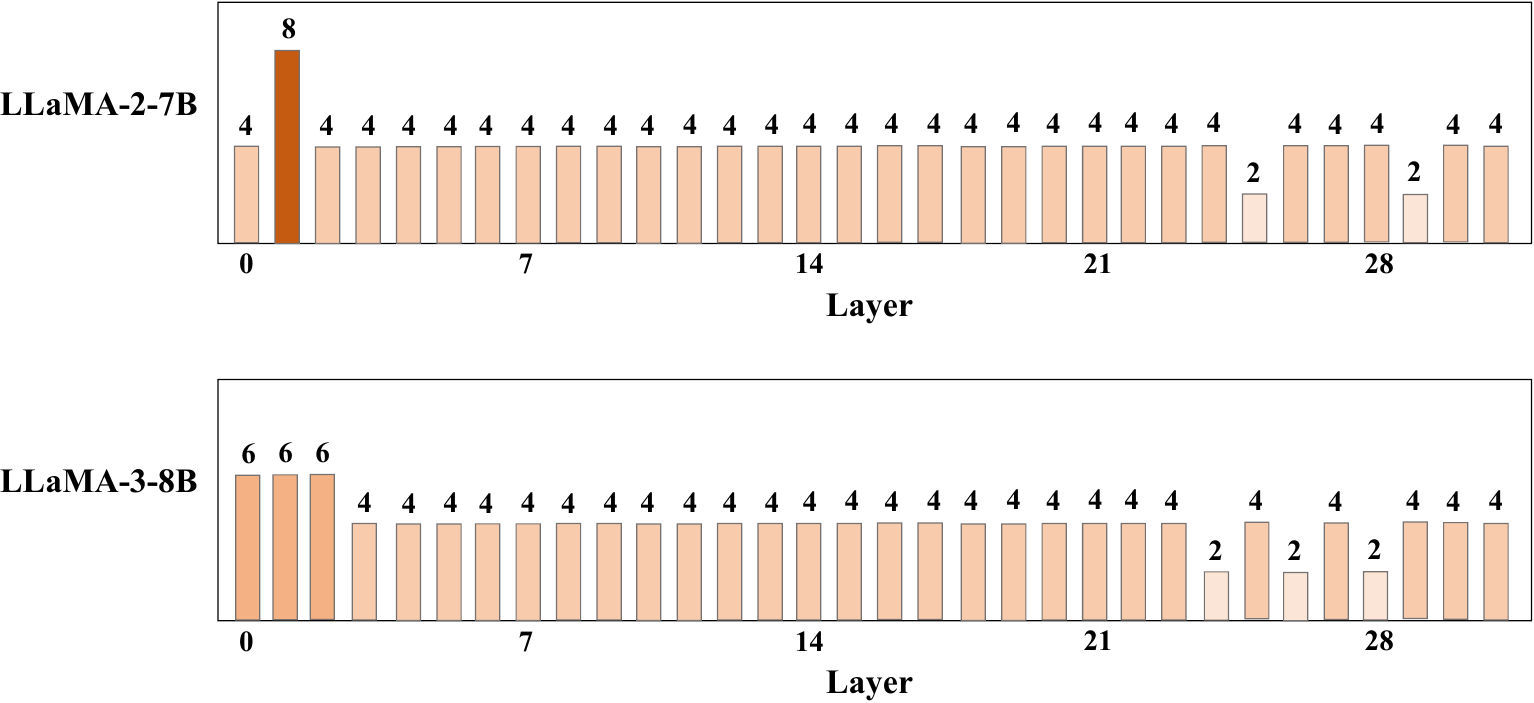}
    \caption{Layer-wise activation bit allocation produced by ILA-AMP under an average activation bit budget of 4 on the WikiText2 calibration set for LLaMA2-7B and LLaMA3-8B. Numbers above bars denote the assigned per-layer activation bit-width.}
    \label{fig:layer-bit}
\end{figure}

We visualize the layer-wise activation bit assignments produced by ILA-AMP under an average activation budget of 4 on the WikiText2 calibration set (as shown in Fig.~\ref{fig:layer-bit}). Both LLaMA2-7B and LLaMA3-8B exhibit a sparse adjustment pattern: most layers stay at 4-bit, while a few critical layers are assigned higher precision and a few less sensitive layers are reduced to 2-bit to satisfy the global budget. Concretely, LLaMA2-7B allocates a single early layer to 8-bit, keeps the vast majority of layers at 4-bit, and compensates with two late layers at 2-bit. LLaMA3-8B assigns 6-bit to the first three layers, maintains 4-bit for most layers, and uses three late 2-bit layers as budget offsets. This structure suggests that early layers are more influential under activation quantization: noise injected there perturbs the activation distribution that feeds all subsequent layers, and the resulting distribution shifts can accumulate and amplify along the stack. In contrast, some late layers are comparatively less sensitive because the remaining propagation path is shorter, leaving less room for amplification; thus, they can tolerate more aggressive quantization with limited accuracy loss. Overall, the observed “higher precision early, lower precision late, sparse deviations from 4-bit” allocation is consistent with ILA-AMP’s adjacent-interaction modeling, which prioritizes precision where cross-layer error propagation is most consequential.

\paragraph{Ablation on layer-sensitivity metrics.}

\begin{table}[t]
\centering
\caption{WikiText2 perplexity under activation mixed precision with average bits(A)=4, using different layer-sensitivity metrics.}
\label{tab:mp_metric}
\resizebox{0.6\linewidth}{!}{
\tablestyle{3pt}{1.1}
\begin{tabular}{c c c c c c c}
\toprule
\multirow{2}{*}{\textbf{Model}}  & \multirow{2}{*}{\textbf{Avg. Bits(A)}} & \multicolumn{5}{c}{\textbf{MP metric (PPL$\downarrow$)}} \\
\cmidrule(lr){3-7}
& & \textbf{LIM} & \textbf{ZD} & \textbf{Act.} & \textbf{NLL} & \textbf{ILA-AMP} \\
\midrule
LLaMA-2-7B & 4 & 17.49 & 18.01 & 16.33 & 11.92 & \textbf{8.31} \\
LLaMA-3-8B & 4 & 23.21 & 23.37 & 20.05 & 15.03 & \textbf{12.83} \\
Qwen3-8B  & 4 & 33.17 & 29.76 & 24.64 & 21.79 & \textbf{16.15} \\
\bottomrule
\end{tabular}}
\end{table}

We further investigate how different layer-sensitivity metrics affect activation mixed-precision allocation under the same average activation-bit budget. Specifically, we compare four representative metrics against ILA-AMP on three models (\textsc{LLaMA2-7B}, \textsc{LLaMA3-8B}, and \textsc{Qwen3-8B}), using WikiText2 for calibration and evaluating quantization quality by WikiText2 perplexity (PPL; lower is better). 
\begin{itemize}
    \item \textbf{LIM} (Layer Input Modification) measures layer importance via the negative cosine similarity between the layer’s input and output embeddings: larger input–output changes indicate higher importance, and the metric requires calibration data.
    \item \textbf{ZD} (Z-score Distribution) quantifies importance by the fraction of outlier weights in the target layer; it does not require calibration data, and a higher outlier ratio typically implies greater sensitivity. 
    \item \textbf{Activation-based scoring} computes the Frobenius norm of layer activations, assuming that layers with larger activation energy process more critical information.
    \item \textbf{NLL Increase (NLL)}, which defines sensitivity as the increase in validation negative log-likelihood (NLL) when quantizing a single layer while keeping all other layers at a higher-precision reference, thereby directly reflecting per-layer NLL perturbations.
\end{itemize}

For each metric, we generate a layer-wise bit assignment under the same average bits(A)=4 constraint and report the resulting perplexity. As shown in Table~\ref{tab:mp_metric}, ILA-AMP consistently achieves the lowest perplexity across all models (8.31/12.83/16.15), substantially outperforming heuristic metrics. Compared to the strongest single-metric baseline NLI (which can be viewed as the first-order variant of ILA-AMP), ILA-AMP reduces PPL by 30.3\% on \textsc{LLaMA2-7B} (11.92$\rightarrow$8.31), 14.6\% on \textsc{LLaMA3-8B} (15.03$\rightarrow$12.83), and 25.9\% on \textsc{Qwen3-8B} (21.79$\rightarrow$16.15). These results indicate that under tight bit budgets, layer-wise heuristics alone are often insufficient, and explicitly modeling cross-layer error propagation and adjacent-layer coupling (ILA-AMP) is crucial for stable and high-quality mixed-precision allocation.

\section{Distribution Visualizations}
\label{app:distribution}
In this section, we visualize the effect of KOTMS from two complementary perspectives: a single-layer mechanistic view
and a cross-layer heterogeneity view.
First, we focus on a representative layer of \textsc{Qwen3-8B} (Layer 12) and compare the statistical distributions
of all weight matrices and the corresponding input activations before and after applying KOTMS within TWLA.
This single-layer analysis is designed to provide an intuitive understanding of how KOTMS improves \emph{distribution alignment}
and \emph{outlier suppression}.
Second, we extend the analysis across layers by visualizing the pre-/post-KOTMS activation distributions at different
layers of \textsc{Qwen3-8B}.
This reveals that the quantizability gains are heterogeneous across layers, motivating the necessity of ILA-AMP to
allocate activation precision adaptively rather than uniformly.
\subsection{Single-layer visualization of KOTMS}
\label{app:vis_kotms_single_layer}

\begin{figure}
    \centering
    \includegraphics[width=0.8\linewidth]{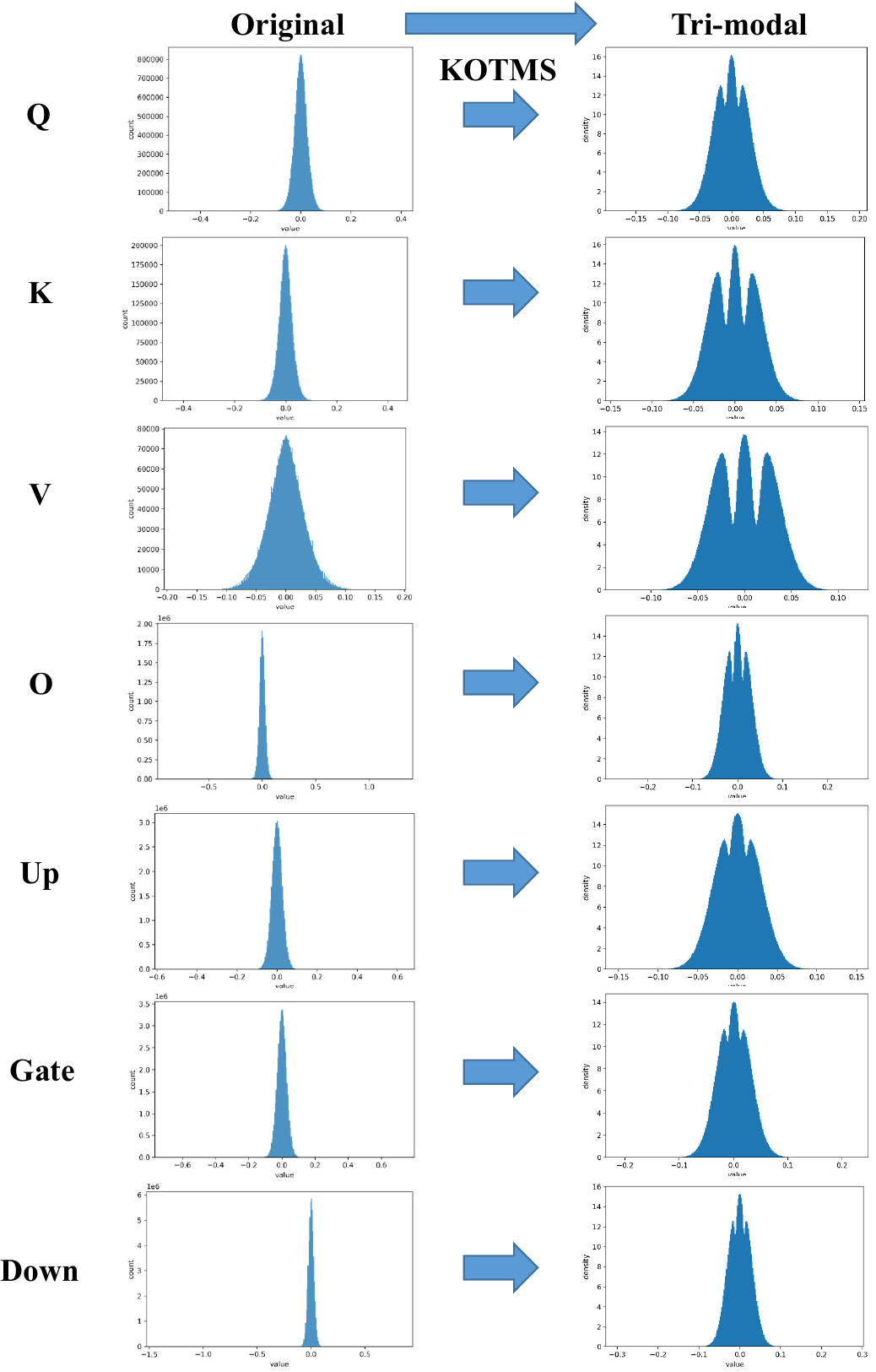}
    \caption{The weight distribution of the 12th layer in Qwen3-8B before and after TWLA.}
    \label{fig:visiual_w}
\end{figure}

Fig.~\ref{fig:visiual_w}   visualizes all weight matrices in Layer 12 of \textsc{Qwen3-8B} before and after KOTMS.
Before applying KOTMS, the weights exhibit a Gaussian-like unimodal distribution, where most values concentrate near
zero and decay smoothly toward both tails.
Such a continuous unimodal shape is inherently misaligned with ternarization: when weights do not form separable
clusters, mapping them to a discrete ternary codebook (e.g., $\{-1,0,+1\}$) typically incurs substantial quantization
error, especially under extreme low-bit regimes.
Moreover, we observe pronounced outliers in several submodules (e.g., down-projection related matrices), manifested as
heavy tails and a few extreme values.
These outliers inflate the effective dynamic range and dominate the quantization scale, which compresses the resolution
allocated to the bulk of weights and makes stable ternarization more difficult.

After applying KOTMS, the weight distributions change noticeably.
The originally unimodal profiles are reshaped into a more structured multi-cluster pattern, which is qualitatively more
compatible with the clustered structure desired by ternary quantization.
More importantly, KOTMS substantially suppresses extreme outliers and tightens the tails, leading to a markedly reduced
dynamic range.
As a result, the quantization scale is less dominated by a small number of abnormal values, improving the statistical
alignment between pretrained weights and the ternary codebook and providing a more stable foundation for high-accuracy
ternarization.

\begin{figure}
    \centering
    \includegraphics[width=\linewidth]{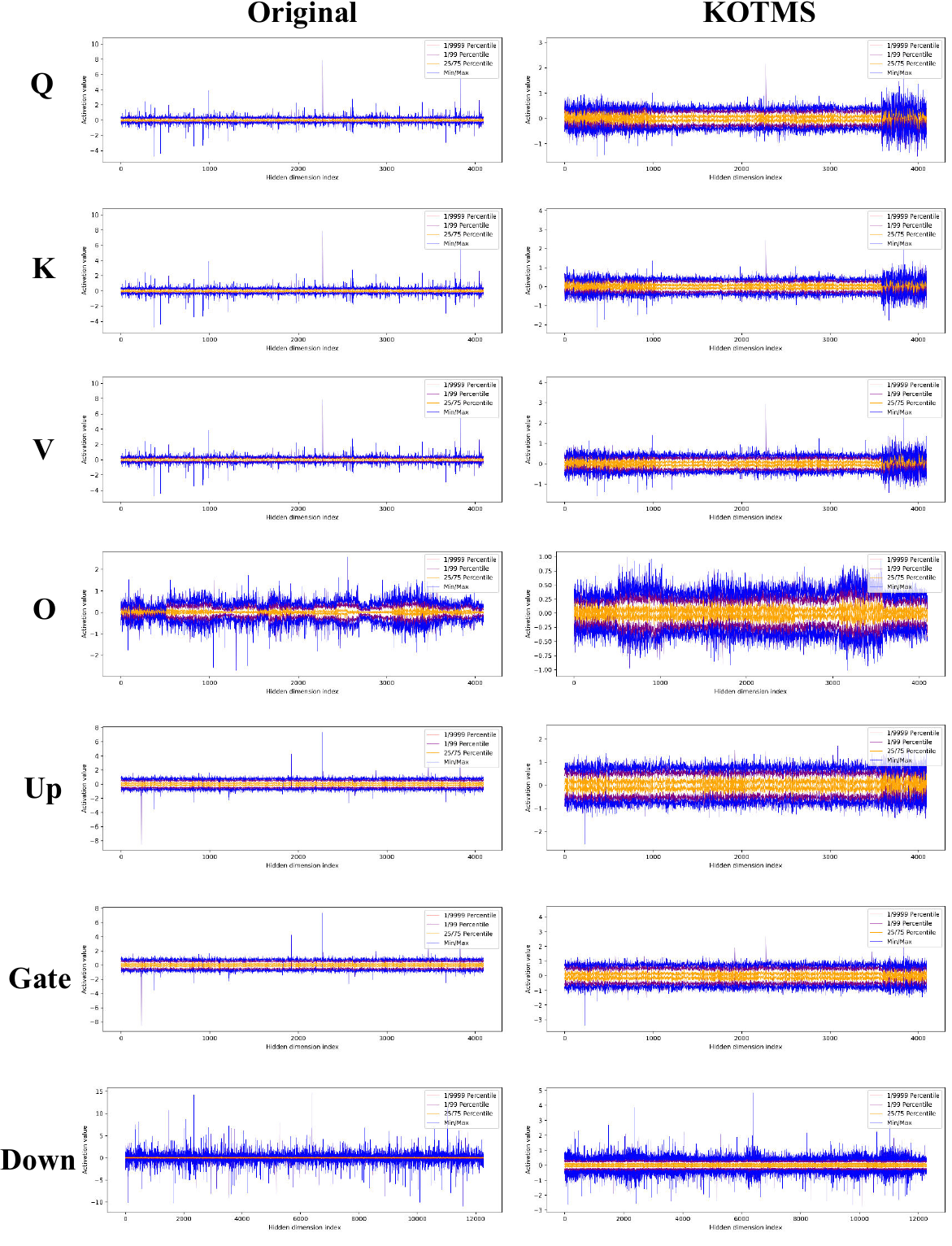}
    \caption{The activation distribution of the 12th layer in Qwen3-8B before and after TWLA.}
    \label{fig:visiual_x}
\end{figure}

A consistent trend is observed for the corresponding input activations.
Prior to KOTMS, the activation distributions exhibit clear long tails and extreme values, which are known to trigger
scale inflation and saturation under low-bit activation quantization, thereby amplifying quantization errors.
After KOTMS projection, the activations become more concentrated with visibly tightened tails, and both the number and
magnitude of extreme values are significantly reduced.
This indicates that KOTMS not only reshapes weight statistics toward ternarization-friendly forms, but also improves the
activation-side quantization conditions by mitigating outlier interference.

Fig.~\ref{fig:visiual_x} further corroborates these observations from a quantile perspective.
Before processing, the high-quantile region rises sharply, indicating extreme tail values that deviate from the main
density mass and suggesting that the original activation statistics are unfavorable for low-bit quantization.
After KOTMS, the high-quantile region drops substantially, the quantile curves become more concentrated and smoother,
and the tails are noticeably thinner.
In addition, variations across tokens become more continuous with far fewer localized anomalies.
Together, these visualizations provide direct evidence that KOTMS reorganizes the activation geometry via orthogonal
projection, suppresses outliers, and improves distribution alignment for low-bit quantization.

\subsection{Cross-layer heterogeneity in activation quantizability after KOTMS}
\label{app:vis_kotms_cross_layer}

To further reveal how KOTMS affects activation quantizability across layers, we visualize quantile plots of the
Q-projection (\textbf{Q} matrix) input activations in \textsc{Qwen3-8B} at Layers 4, 12, 24, and 36, comparing the
original activations against those after KOTMS projection (Fig.~\ref{fig:visiual_amp}).
Since KOTMS constructs an orthogonal auxiliary matrix through \emph{weight-side} optimization, applying this orthogonal
projection to activations effectively rotates and reorganizes the representation geometry, which can alleviate
anisotropy and suppress extreme values, thereby mitigating scale inflation and saturation that typically hinder
low-bit activation quantization.
Importantly, the outlier-smoothing benefit is clearly \emph{layer-dependent}, exhibiting pronounced cross-layer
heterogeneity.
As shown in Fig.~\ref{fig:visiual_amp}, after KOTMS, Layer 4 still displays noticeably more high-quantile
spikes and residual extreme fluctuations, whereas Layer 24 exhibits a substantially tighter quantile band with much
weaker high-quantile excursions, indicating a significantly larger activation-side quantizability gain at Layer 24
than at Layer 4.
This systematic heterogeneity implies that a uniform activation quantization setting across layers (e.g., the same
bitwidth or the same quantization strength) is suboptimal: it either incurs excessive quantization error in layers
where outliers remain prominent, or wastes precision budget in layers that have already become quantization-friendly
after KOTMS.
Therefore, these cross-layer quantile visualizations provide strong empirical evidence for the necessity of ILA-AMP,
which allocates activation precision adaptively according to layer-wise quantization difficulty and marginal benefit,
yielding a more robust and cost-effective end-to-end quantization outcome.

\begin{figure}
    \centering
    \includegraphics[width=\linewidth]{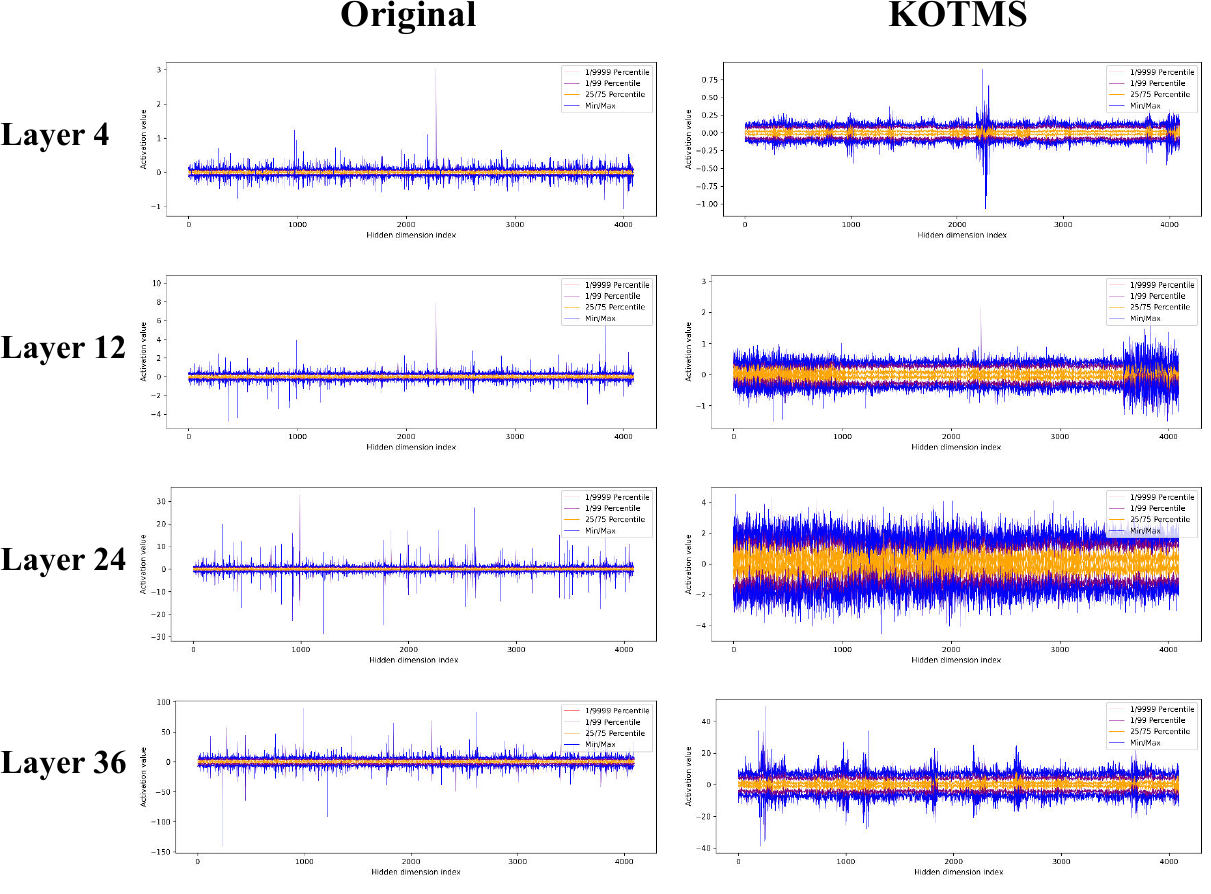}
    \caption{Cross-layer heterogeneity of activation outlier suppression after KOTMS on \textsc{Qwen3-8B}.
Quantile plots (Min/Max, 1/99\%, and 25/75\%) of the \textbf{Q}-projection input activations are shown for Layers 4, 12, 24, and 36, comparing the original activations (left) with those after KOTMS projection (right).}
    \label{fig:visiual_amp}
\end{figure}

\end{document}